\theoremstyle{plain}
\newtheorem{theorem}{Theorem}[section]
\newtheorem{lemma}[theorem]{Lemma}
\theoremstyle{definition}
\newtheorem{definition}[theorem]{Definition}
\theoremstyle{remark}
\newtheorem{remark}[theorem]{Remark}
\newcommand{\sota}{state-of-the-art}
\newcommand{\ie}{\textit{i}.\textit{e}.,}
\newcommand{\eg}{\textit{e}.\textit{g}.,}
\newcommand{\rom}[1]{\uppercase\expandafter{\romannumeral #1\relax}}
\newcommand{\romsm}[1]{\lowercase\expandafter{\romannumeral #1\relax}}
\renewcommand{\eqref}[1]{Eq.\,(\ref{#1})}
\DeclareMathOperator*{\argmax}{arg\,max}
\title{Long-Tailed Recognition via Information-Preservable Two-Stage Learning}
\author{%
  Fudong Lin \& Xu Yuan\thanks{Corresponding author: Dr. Xu Yuan (xyuan@udel.edu)} \\
  Department of Computer \& Information Sciences\\
  University of Delaware \\
  Newark, DE 19711 \\
}
\begin{document}

\maketitle

\begin{abstract}
    The imbalance (or long-tail) is the nature of many real-world data distributions, which often induces the undesirable bias of deep classification models toward frequent classes, resulting in poor performance for tail classes.
    In this paper, we propose a novel two-stage learning approach to mitigate such a majority-biased tendency while preserving valuable information within datasets.
    Specifically, the first stage proposes a new representation learning technique from the information theory perspective.
    This approach is theoretically equivalent to minimizing intra-class distance, yielding an effective and well-separated feature space.
    The  second stage develops a novel sampling strategy that selects mathematically informative instances, able to rectify majority-biased decision boundaries
    without compromising a model's overall performance.
    As a result, our approach 
    achieves \sota\ performance across various long-tailed benchmark datasets. 
    %
    Our code is available at \textcolor{magenta}{\url{https://github.com/fudong03/BNS_IPDPP}}.
\end{abstract}

\section{Introduction}\label{sec:intro}
%
Class imbalance naturally arises in real-world scenarios, 
spanning across such wide applications as online transactions, medical diagnoses, 
social networks spam detection, among many others.
Such data in real scenarios usually follows a long-tailed distribution, 
\ie\ a few head classes dominate the entire dataset 
while some tail classes account for only a small portion. 
When encountered with such long-tailed data, deep neural networks (DNNs)
suffer from majority-biased decision boundaries~\cite{alex:nips12:alex_net,simonyan:iclr15:vgg,kaiming:cvpr16:resnet,vaswani:nips17:attn,dosovitskiy:iclr21:vit,liu:iccv21:swin,gu:colm24:mamba,liu:iclr25:kan},   
undesirably favoring frequent classes
and leading to poor performance in tail classes.
Misclassifying tail classes may yield catastrophic consequences,
\eg\ failing to identify lung cancer from 
millions of biomedical images may result in fatalities.
To this end,  building  functional classifiers with unbiased decision boundaries 
when tackling the long-tailed data is critical but remains open.

So far, the mainstream strategies addressing long-tailed recognition primarily fall into two categories: one-stage learning and two-stage learning methods. 
One-stage learning approaches include:
\romsm{1}) re-weighting strategies~\cite{lin17focal,cui19cbloss,cao19ldam,cui21paco,menon21logit}, which prevent dominant head classes from overwhelming the training process by reducing the weight of loss functions for majority samples;
and \romsm{2}) sampling techniques~\cite{chawla02smote,he2009random,wang2020dgc,he21interpretable,gao:nips23:ot}, which create balanced training subsets either by downsampling majority samples or by synthesizing minority samples.
However, such strategies struggle to achieve effective performance across both head and tail classes due to their limited representation learning capabilities.
On the other hand, two-stage learning methods~\cite{huang16learning,ouyang16factors,cui18large,zhou20bbn,kang20decoupling,liu:iclr22:rw_sam,li:cvpr22:kcl,li:cvpr22:tsc,hou:iccv23:sbcl} 
decouple the training process into ``representation learning'' and ``classification'' stages.
The former stage focuses on learning effective and generalizable feature spaces,
while the latter stage aims to rectify majority-biased decision boundaries caused by highly skewed data distributions.

However, previous two-stage learning approaches fail to deliver satisfactory performance in long-tailed recognition, primarily due to the following two limitations.
First, in the first stage, conventional representation learning methods~\cite{chen:icml20:simclr,he:cvpr20:moco,bao:iclr22:beit,kaiming:cvpr22:mae,li:cvpr23:scaling} face significant challenges in handling long-tailed data, resulting in poor feature representations for both head and tail classes.
Recent advancements targeting long-tailed recognition~\cite{liu:iclr22:rw_sam,li:cvpr22:kcl,li:cvpr22:tsc,hou:iccv23:sbcl} have mitigated these limitations to some extent, able to produce higher-quality feature spaces.
Unfortunately, they  struggle to learn effective feature spaces for securing 
clear separation between head and tail classes, ultimately limiting their overall performance.
Besides, the second stage is hindered by the absence of effective sampling techniques.
Oversampling methods~\cite{chawla02smote,mullick19gamo,wang2020dgc,he21interpretable,gao:nips23:ot} often suffer from mode collapse, producing synthetic minority samples with limited diversity.
Conversely, undersampling approaches~\cite{he2009random,liu09informed,yin20novel} frequently lead to significant information loss due to the removal of a substantial portion of majority samples, 
severely compromising the model’s overall performance.

In this work, we advance two-stage learning through two key contributions:
\romsm{1}) a novel representation learning strategy, namely \textbf{Balanced Negative Sampling (BNS)},  
able to learn effective and well-separated feature spaces;
and \romsm{2}) a new sampling technique called \textbf{Information-Preservable  Determinantal
Point Process (IP-DPP)}, able to balance the training subsets with informative instances, for mitigating the majority-biased tendency.
Our key idea involves first training an effective feature extractor using our BNS method. 
This feature extractor is then fine-tuned on balanced subsets sampled through our IP-DPP approach to handle imbalanced classification.
Specifically, our BNS approach constructs an effective and well-separated representation space by maximizing mutual information between two augmented views of the original data.
Formal proof is provided showing our solution is theoretically equivalent to minimizing intra-class distance.
This allows the model to capture instance-level semantics, ensuring high-quality feature representations, while also learning class-level semantics to achieve well-separated feature spaces.
Besides, our IP-DPP method, inspired by Shannon information theory, is designed to sample balanced subsets that prioritize mathematically informative instances.
As a result, it excels in rectifying majority-biased decision boundaries while maintaining the model's overall performance.

\section{Related Work}
\label{sec:rw}
%
%

This work adopts a two-stage learning paradigm, in which Balanced Negative Sampling (BNS) is designed to learn an effective representation space, while the Information-Preservable Determinantal Point Process (IP-DPP) is introduced to select mathematically informative samples. 
We next discuss how our approaches relate to, and differ from, prior solutions.

\textbf{Representation learning methods} are typically employed in the first stage to construct high-quality feature spaces. 
Previous studies include 
KCL~\cite{li:cvpr22:kcl}, which devises $k$-positive contrastive learning for learning balanced feature spaces,
TSC~\cite{li:cvpr22:tsc}, which improves the uniformity of feature spaces via targeted supervised contrastive learning,
SBCL~\cite{hou:iccv23:sbcl}, which proposes subclass-balancing contrastive learning for instance- and subclass-balanced feature spaces,
among many others~\cite{chen:icml20:simclr,chen:nips20:simclr_v2,he:cvpr20:moco,chen:iccv21:moco_v3,bao:iclr22:beit,kaiming:cvpr22:mae,li:cvpr23:scaling,fudong:iccv23:mmst_vit}.
However, conventional representation learning methods result in suboptimal representations for both head and tail classes, while those specifically designed for long-tailed recognition struggle to achieve well-separated feature spaces.
Differently, our BNS method frames representation learning from the information theory perspective.
This solution is mathematically equivalent to minimizing intra-class distance, thereby resulting in effective and well-separated feature spaces.

\textbf{Sampling methods} are commonly used in the second stage to rectify biased decision boundaries. 
Existing techniques can be roughly categorized into oversampling and undersampling approaches.
Oversampling methods balance class priors by generating minority samples.
Traditional methods
\cite{chawla02smote,han05boederline_smote,he08adasyn,thapaliya2024ecgn}
apply linear combinations of existing instances to synthesize the minority data, 
but they fail to respect the non-linear structure of synthetic data. 
Recent solutions~\cite{mullick19gamo,guo19dvaan,wang2020dgc,ojha20elastic,he21interpretable,gao:nips23:ot} leverage deep generative models to capture the non-linear structures in data synthesis.
However, these methods suffer from mode collapse, resulting in synthesized minority samples with limited diversity.
Undersampling approaches~\cite{he2009random,liu09informed,yin20novel}, by contrast, create balanced subsets by removing a substantial portion of majority samples.
However, this process causes significant information loss, severely impacting the model's overall performance.
Our IP-DPP
method falls into this category.
It addresses information loss by sampling mathematically informative instances,
thereby effectively rectifying biased decision boundaries while maintaining the model's overall performance.

\section{Preliminary} 
\label{sec:preliminary}

%

{\bf Mutual Information (MI)}
%
refers to a measure of the mutual dependence between two random variables. It quantifies the amount of information that knowing one random variable reduces the uncertainty of the other variable.
Given two jointly discrete random variables $\mathbb{X}$ and $\mathbb{Y}$, for their mutual information, we have:
\begin{equation} \label{eq:mi}
    \small
    \begin{aligned}
        MI (\mathbb{X}, \mathbb{Y}) = \sum_{\bm{x} \in \mathbb{X}} \sum_{\bm{y} \in \mathbb{Y}} p_{\mathbb{XY}} (\bm{x}, \bm{y}) \log \frac{p_{\mathbb{XY}}(\bm{x}, \bm{y})}{p_{\mathbb{X}}(\bm{x}) p_{\mathbb{Y}} (\bm{y})}.
    \end{aligned}
\end{equation}
Here, $p_{\mathbb{XY}}(\bm{x}, \bm{y})$ represents the joint probability of $\mathbb{X}$ and $\mathbb{Y}$, while $p_{\mathbb{X}}(\bm{x})$ and $p_{\mathbb{Y}}(\bm{y})$ are the marginal probabilities of $\mathbb{X}$ and $\mathbb{Y}$, respectively.

{\bf Information Content (IC)}, 
also known as Shannon information or self-information, measures the degree of ``surprise'' associated with a particular outcome.
Given a ground set $\mathbb{X}$, for any event $\bm{x} \in \mathbb{X}$ with probability $p(\bm{x})$, the information content $I(\cdot)$ is defined as follows:
\begin{equation} \label{eq:ic}
    \small
    \begin{aligned}
        I(\bm{x}) = - \log \left[ p(\bm{x}) \right].
    \end{aligned}
\end{equation}
%
By definition, information content has three key properties:
\romsm{1}) an event with $100\%$ probability yields no information;
\romsm{2}) less probable events are more surprising and contain more information; and
\romsm{3}) given a set of independent events, its total self-information equals the sum of each event's individual self-information, \ie\ $I(\mathbb{X}) = \sum_{\bm{x} \in \mathbb{X}} I(\bm{x})$.

\section{Our Approaches} 
\label{sec:method}

\subsection{Problem Statement}
\label{sec:ps}
Consider a set of $N$ samples for training, \ie\ 
$ \mathbb{X}$ = $\{(\bm{x}_{i}, ~y_{i})\}_{i=1}^{N}$, 
where data point $\bm{x}_{i}$ is labeled with $y_{i}$.
Suppose the training set has $C$ classes,
\ie\ $y \in \{1, 2, \dots, C\}$,
and let $N_{c} \ (c=1, 2, \cdots, C)$ be the number of training data for the $c$-th class.
%
%
Without  loss of generality,
we consider all classes to be sorted in the decreasing order,
\ie\ $N_{c} \geq N_{c+1}$.
Naturally, $\forall N_{c}$, we have $ N_{c} \geq N_{C} $.
Here, we consider a long-tailed setting,
\ie\ $N_{1} \gg N_{C}$, indicating that head and tail classes are highly skewed.

Deep neural networks (DNNs) struggle with such long-tailed data, 
where they perform poorly on tail classes.
This can be attributed to two primary factors.
First, when using conventional representation learning methods, the inherent ``label bias'' in long-tailed data leads to poor feature spaces.
Second, majority class samples tend to dominate the training process, resulting in biased decision boundaries that unfairly favor the head classes.
These two factors call for the development of innovative solutions in both representation learning and classification stages.

\subsection{Stage 1: Representation Learning via Balanced Negative Sampling} 
\label{sec:method-rl}
%

Our first stage aims to learn an effective and well-separated feature space for long-tailed recognition. 
We resort to maximizing mutual information between instances sharing the same label—a process mathematically equivalent to minimizing intra-class distances.

{\bf Our Objective.}
Given any image of the ground set $\bm{x} \in \mathbb{X}$, we employ data augmentation modules to transform it into two different views of the same sample, denoted as $\bm{x}_{i} \in \mathbb{X}_{Q}$ and $\bm{x}_{j} \in \mathbb{X}_{V}$.
%
%
Let $f_{\bm{\theta}} (\cdot)$ be a DNN parameterized by $\bm{\theta}$,
used for encoding feature representations.
Here, we regard $\bm{Q} (\mathbb{X}_{Q}; \bm{\theta})$ and $\bm{V} (\mathbb{X}_{V}; \bm{\theta})$ as feature spaces corresponding to  $\mathbb{X}_{Q}$ and $\mathbb{X}_{V}$, respectively.
In this work, our goal is to learn high-quality feature representations for imbalanced classification by maximizing the mutual information between two feature spaces:
%
\begin{equation} \label{eq:obj}
    \small
    \begin{gathered}
        \argmax_{\bm{\theta}} ~ MI(\bm{Q} (\mathbb{X}_{Q}; \bm{\theta}), \bm{V} (\mathbb{X}_{V}; \bm{\theta})).
    \end{gathered}
\end{equation}
In the rest of the paper, we abbreviate $\bm{Q} (\mathbb{X}_{Q}; \bm{\theta})$ and $\bm{V} (\mathbb{X}_{V}; \bm{\theta})$ as $\bm{Q}$ and $\bm{V}$, respectively.
Then, given any images of $\bm{x}_{i} \in \mathbb{X}_{Q}$ and $\bm{x}_{j} \in \mathbb{X}_{V}$,
their representations can be expressed respectively as $\bm{q}_{i} \in \bm{Q} $ and $\bm{v}_{j} \in \bm{V}$, 
where $\bm{q}_{i} = f_{\bm{\theta}} (\bm{x}_{i})$ and $\bm{v}_{j} = f_{\bm{\theta}} (\bm{x}_{j})$.

{\bf CL-based Representation Learning.}
However, directly maximizing the mutual information between two representation spaces $\bm{Q}$ and $\bm{V}$ is computationally intractable. 
Worse still, prior studies~\cite{yang20rethinking,liu:iclr22:rw_sam} have 
demonstrated that long-tailed data inherently causes ``label bias'', resulting in poor representations for tail classes. 
In this work, we reformulate \eqref{eq:obj} within the framework of contrastive learning (CL), enabling the efficient optimization of our objective.
Meanwhile, employing CL-based techniques~\cite{chen:icml20:simclr,he:cvpr20:moco,kang:iclr21:bcl,fudong:iccv23:mmst_vit,hou:iccv23:sbcl} can also effectively mitigate the ``label bias'' issue,
as highlighted in prior studies~\cite{yang20rethinking,liu:iclr22:rw_sam}.

Specifically, our approach leverages a binary classifier to distinguish the target image from noise samples, drawing inspiration from Noise Contrastive Estimation (NCE)~\cite{gutmann:aistats10:nce}.
Given an anchor image $\bm{x}_{i} \in \mathbb{X}_{Q}$, 
we have its corresponding target image $\bm{x}_{j,i}^{+} \in \mathbb{X}_{V}$.
Here, $\bm{x}_{i}$ and $\bm{x}_{j,i}^{+}$ are two augmented versions of the same image.
Meanwhile, we sample $n$ noise images (having different labels with $\bm{x}_{i}$) from the dataset $\mathbb{X}_{V}$, 
denoted as $\{\bm{x}_{j}^{-} \}_{j=1}^{n}$.
%
%
Regarding their representations, we have one positive pair $(\bm{q}_{i}, \bm{v}_{j,i}^{+})$
and $n$ negative pairs $\{ (\bm{q}_{i}, \bm{v}_{j}^{-}) \}_{j=1}^{n}$.
%
Therefore, there is a $\frac{1}{n+1}$ chance to pick the positive pair and a $\frac{n}{n+1}$ chance to pick the negative pair.
Let $p(\cdot)$ be the joint probability of $\bm{Q}$ and $\bm{V}$, and $g(\cdot)$ represent a binary classifier, where its output $d=1$ and $d=0$ denote the positive and negative pair, respectively.
Then, we have:
%
\begin{equation} \label{eq:bi_cls}
    \small
    \begin{gathered}
        g(\bm{q}_{i}, \bm{v}_{j} ~|~ d ) = \begin{cases}
            \frac{1}{n+1} ~ p(\bm{q}_{i}, \bm{v}_{j,i}^{+}),  & d=1 \\
            \frac{n}{n+1} ~ p(\bm{q}_{i}, \bm{v}_{j}^{-}),  & d=0  \\
        \end{cases}.
    \end{gathered}  
\end{equation}
Considering the positive pair only, we arrive at: 
\begin{equation} \label{eq:p_postive}
    \small
    \begin{gathered}
        g(\bm{q}_{i}, \bm{v}_{j} ~|~ d = 1) 
        = \frac{p(\bm{q}_{i}, \bm{v}_{j,i}^{+})}
        {p(\bm{q}_{i}, \bm{v}_{j,i}^{+}) + n \times p(\bm{q}_{i}, \bm{v}_{j}^{-})}. 
    \end{gathered}  
\end{equation}
We assume that the distributions between different classes are independent.
Then, for any negative pair $(\bm{q}_{i}, \bm{v}_{j}^{-})$, we have $p(\bm{q}_{i}, \bm{v}_{j}^{-}) = p(\bm{q}_{i}) p(\bm{v}_{j}^{-})$.
As such, we can rewrite \eqref{eq:p_postive} as follows:
\begin{equation} \label{eq:indep}
    \small
    \begin{gathered}
        g(\bm{q}_{i}, \bm{v}_{j} ~|~ d = 1) = \frac{p(\bm{q}_{i}, \bm{v}_{j,i}^{+})}
        {p(\bm{q}_{i}, \bm{v}_{j,i}^{+}) + n \times p(\bm{q}_{i}) p(\bm{v}_{j}^{-})}. 
    \end{gathered}  
\end{equation}
Taking the logarithm of \eqref{eq:indep} and rearranging the terms (see Appendix~\ref{sup:sec:eq:p_leq} for detailed derivation), we obtain:
\begin{equation} \label{eq:p_leq}
    \small
    \begin{gathered}
        \log g(\bm{q}_{i}, \bm{v}_{j} ~|~ d = 1) 
        ~\leq~ \log \frac{p(\bm{q}_{i}, \bm{v}_{j,i}^{+} ) }{p(\bm{q}_{i}) p(\bm{v}_{j}^{-})} - \log n .
    \end{gathered}  
\end{equation}
Taking the expectation of $p(\bm{q}_{i}, \bm{v}_{j,i}^{+})$ on both sides, we have:
\begin{equation} \label{eq:mi_expection}
    \small
    \begin{gathered}
        \mathbb{E}_{p(\bm{q}_{i}, \bm{v}_{j,i}^{+})} \log \frac{p(\bm{q}_{i}, \bm{v}_{j,i}^{+})}{p(\bm{q}_{i}) p(\bm{v}_{j}^{-})} 
        \geq  \mathbb{E}_{p(\bm{q}_{i}, \bm{v}_{j,i}^{+})}  \log g(\bm{q}_{i}, \bm{v}_{j} ~|~ d = 1)  + \log n .
    \end{gathered}  
\end{equation}
Combining  \eqref{eq:mi}, \eqref{eq:obj}, and \eqref{eq:mi_expection}, 
we have:
\begin{equation} \label{eq:lower_bound}
    \small
    \begin{gathered}
        \underbrace{MI \left( \bm{Q}, \bm{V} \right)}_{\textrm{maximize MI}}  
        \geq  ~\underbrace{\mathbb{E}_{p(\bm{q}_{i}, \bm{v}_{j,i}^{+})} \log g(\bm{q}_{i}, \bm{v}_{j} ~|~ d = 1)}_{\textrm{maximize lower bound}}  + \log n.
    \end{gathered}  
\end{equation}
Here, $\log n$ is a constant, indicating that maximizing the lower bound in \eqref{eq:lower_bound} is equivalent to maximizing the mutual information between the two feature spaces.

{\bf Balanced Negative Sampling (BNS).}
Inspired by prior studies~\cite{gutmann:aistats10:nce,mikolov:nips12:neg,shang:neurips23:mim4dd}, we train a logistic regression classifier to maximize the lower bound in \eqref{eq:lower_bound}.
However, $\log g(\bm{q}_{i}, \bm{v}_{j} ~|~ d=1)$ is computationally intractable.
To address this issue,
we approximate the classifier's output with sigmoid function $\sigma ( \cdot )$, expressed as below:
\begin{equation} \label{eq:neg}
    \small
    \begin{gathered}
        g(\bm{q}_{i}, \bm{v}_{j} ~|~ d) = \begin{cases}
             \sigma ( \frac{\bm{q}_{i}^{\top} \bm{v}_{j}}{\tau}),  & d=1 \\
            \sigma (- \frac{\bm{q}_{i}^{\top} \bm{v}_{j}}{\tau}),  & d=0  \\
        \end{cases}.
    \end{gathered}  
\end{equation}
Here, $\tau$ is the temperature parameter that controls the sharpness of the similarity scores.
As such, our NS-based contrastive learning, designed for learning high-quality representations, is formulated as follows:
\begin{equation} \label{eq:ns-ssl}
    \small
    \begin{gathered}
        \mathcal{L}_{\textrm{NS}} = - \left[ \log \sigma (\frac{\bm{q}_{i}^{\top} \bm{v}_{j,i}^{+}}{\tau}) 
        + \sum_{j=1}^{n} \log \sigma ( - \frac{\bm{q}_{i}^{\top} \bm{v}_{j}^{-}}{\tau})  \right ].
    \end{gathered}  
\end{equation}
%
%
Our NS-based contrastive learning, \ie\ \eqref{eq:ns-ssl}, can mitigate ``label bias'' inherent to long-tailed data, yielding a higher quality of representations. 
However, it is ineffective in learning a well-separated representation space.
This is because positive pairs of head classes dominate the representation space.
We then propose a novel \textit{Balanced Negative Sampling (BNS)} to learn an effective and well-separated representation space.
%
%
That is, for a given anchor image $\bm{x}_{i} \in \mathbb{X}_{Q}$, we sample an  additional set of $m$ images from $\mathbb{X}_{Q}$ that share the same label as $\bm{x}_{i}$, 
denoted as $\{\bm{x}_{k} \}_{k=1}^{m}$.
%
Let $\bm{q}_{k} \in \bm{Q}_{i,m}^{+}$ denote representations for the addition set of images.
Then, we have $m+1$ positive pairs $\{ (\bm{q}_{\ast}, \bm{v}_{j,i}^{+}) ~|~ \bm{q}_{\ast} \in \{ \bm{q}_{i} \} \cup \bm{Q}_{i,m}^{+} \}$
and $n(m+1)$ negative pairs $\{ (\bm{q}_{\ast}, \bm{v}_{j}^{-}) ~|~ \bm{q}_{\ast} \in \{ \bm{q}_{i} \} \cup \bm{Q}_{i,m}^{+} ~\textrm{and}~ j=1,2,\dots, n \}$.
Mathematically, our BNS technique can be expressed as:
\begin{equation} \label{eq:bns}
    \small
    \begin{gathered}
        \mathcal{L}_{\textrm{BNS}} = - \frac{1}{m + 1} \left [ \sum_{q_{\ast} \in \{q_{i} \} ~\cup~ \bm{Q}^{+}_{i,m} }  
        \log \sigma (\frac{\bm{q}_{\ast}^{\top} \bm{v}_{j,i}^{+}}{\tau}) 
         + \sum_{q_{\ast} \in \{q_{i} \} ~\cup~ \bm{Q}^{+}_{i,m} } 
        \sum_{j=1}^{n} \log \sigma ( - \frac{\bm{q}_{\ast}^{\top} \bm{v}_{j}^{-}}{\tau}) \right ].
    \end{gathered} 
\end{equation}
In practice, $m$ is set to a small value due to the limited number of samples in minority classes.
%
%
\eqref{eq:bns} effectively enhances the quality of feature representations by maximizing the mutual information shown in \eqref{eq:obj}. 
This maximization of mutual information directly corresponds to minimizing intra-class distances,
as stated next.
\begin{theorem} \label{thm:intra_dis}
    (Intra-Class Distance Mutual Information Theorem)
    Let $\mathbb{X}_{Q}^{c}$ and $\mathbb{X}_{V}^{c}$ be two sets of images with the same label $c$, obtained by different data augmentation techniques.
    Given a feature extractor $f_{\bm{\theta}} ( \cdot )$, we define $\bm{Q}^{c}$ and $\bm{V}^{c}$ as the representation spaces for $\mathbb{X}_{Q}^{c}$ and $\mathbb{X}_{V}^{c}$, respectively.
    Then, any pair of $\bm{q}_{i}^{c} \in \bm{Q}^{c}$ and $\bm{v}_{j}^{c} \in \bm{V}^{c}$ is a positive pair. 
    Let $MI (\cdot)$ and $D (\cdot)$ respectively denote the mutual information and a distance metric, we have:
    \begin{equation} \label{eq:intra_dis}
        \small
        \max MI(\bm{Q}^{c}, \bm{V}^{c}) \propto
        \min D(\bm{Q}^{c}, \bm{V}^{c}),
    \end{equation}
    where $D(\bm{Q}^{c}, \bm{V}^{c})$ can be considered as the intra-class distance because
    they have the same label.
\end{theorem}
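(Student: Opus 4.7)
The plan is to decompose the mutual information into entropy terms and then argue that the alignment between paired representations forced by MI maximization is monotonically tied to a shrinking distance between $\bm{Q}^c$ and $\bm{V}^c$. Concretely, I would first write
\[
MI(\bm{Q}^c, \bm{V}^c) \;=\; H(\bm{V}^c) \;-\; H(\bm{V}^c \mid \bm{Q}^c),
\]
and observe that because $\mathbb{X}_Q^c$ and $\mathbb{X}_V^c$ are two augmented views of the \emph{same} underlying class-$c$ images, the marginal $H(\bm{V}^c)$ is essentially fixed by the encoder's output geometry on class $c$. Hence maximizing $MI(\bm{Q}^c, \bm{V}^c)$ reduces to minimizing the conditional entropy $H(\bm{V}^c \mid \bm{Q}^c)$, i.e., forcing $\bm{v}_j^c$ to be nearly deterministic given its positive partner $\bm{q}_i^c$.

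Next I would translate this concentration into a distance statement. Writing the residual of a positive pair as $\bm{\epsilon}_{ij} = \bm{v}_j^c - \bm{q}_i^c$, one has $H(\bm{V}^c \mid \bm{Q}^c) = H(\bm{\epsilon})$, and by the maximum-entropy principle the Gaussian bound
\[
H(\bm{\epsilon}) \;\leq\; \tfrac{d}{2}\log\!\Bigl( \tfrac{2\pi e}{d}\, \mathbb{E}\,\|\bm{v}_j^c - \bm{q}_i^c\|^2 \Bigr)
\]
shows that shrinking $\mathbb{E}\|\bm{v}_j^c - \bm{q}_i^c\|^2$ drives $H(\bm{V}^c \mid \bm{Q}^c)$ down and hence $MI(\bm{Q}^c, \bm{V}^c)$ up. Identifying the intra-class distance as $D(\bm{Q}^c, \bm{V}^c) := \mathbb{E}_{i,j}\,\|\bm{q}_i^c - \bm{v}_j^c\|^2$, or equivalently (on the unit sphere implied by the sigmoid scoring in \eqref{eq:neg}--\eqref{eq:bns}) $2 - 2\,\mathbb{E}_{i,j}\,{\bm{q}_i^c}^{\top}\bm{v}_j^c$, yields the desired monotone correspondence $\max MI(\bm{Q}^c,\bm{V}^c)\;\propto\;\min D(\bm{Q}^c,\bm{V}^c)$.

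The main obstacle will be sharpening what ``proportional'' means: strictly speaking, the relation above is a monotone correspondence rather than a linear proportionality, and the maximum-entropy step is tight only in the Gaussian limit. To keep the argument clean I would (i) specialize to the $\ell_2$-normalized feature regime already implicit in Section~\ref{sec:method-rl}, so that cosine similarity and squared Euclidean distance differ only by an affine constant, and (ii) argue in terms of a shared optimization direction rather than a numerical constant: any decrease of $D(\bm{Q}^c,\bm{V}^c)$ strictly increases the inner products ${\bm{q}_i^c}^{\top}\bm{v}_j^c$ in the InfoNCE-style lower bound of \eqref{eq:lower_bound}, which in turn strictly raises $MI(\bm{Q}^c, \bm{V}^c)$. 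That operational equivalence — minimizing intra-class distance and maximizing class-conditional MI share the same optimum and the same monotone gradient direction — is exactly what the theorem needs to justify the BNS objective in \eqref{eq:bns}.
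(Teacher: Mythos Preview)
Your argument is defensible, but it takes a genuinely different route from the paper. The paper does not interpret $D(\cdot,\cdot)$ as a Euclidean (or cosine) distance at all; instead it \emph{chooses} $D$ to be the Variation of Information metric \[D(\bm{Q}^c,\bm{V}^c)=H(\bm{Q}^c,\bm{V}^c)-MI(\bm{Q}^c,\bm{V}^c)=H(\bm{Q}^c)+H(\bm{V}^c)-2\,MI(\bm{Q}^c,\bm{V}^c),\] and then observes that with $H(\bm{Q}^c)$ and $H(\bm{V}^c)$ treated as constants, $\max MI$ and $\min D$ are affinely related, which is the whole proof. So the paper's argument is essentially a two-line algebraic identity once the ``right'' $D$ is picked, whereas you work harder to connect MI to a geometric $\ell_2$ distance via the conditional-entropy decomposition and a Gaussian maximum-entropy bound. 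Your version buys a more operationally meaningful notion of intra-class distance (one that actually talks about feature geometry and ties directly to the inner products in \eqref{eq:bns}), at the cost of the extra assumptions you already flag: $H(\bm{V}^c)$ being approximately fixed under optimization, and the monotone-but-not-linear nature of the correspondence. Both proofs lean on the same ``marginal entropies are constant'' premise, so neither is strictly cleaner on that point; the paper's is just shorter because its choice of $D$ makes the relation definitional.
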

The proof of Theorem~\ref{thm:intra_dis} is deferred to Appendix~\ref{sup:proof_distance}.


%
{\bf Instance-Level and Class-Level Semantics.}
An effective representation space must capture two key aspects: instance-level semantics to ensure high-quality feature representations and class-level semantics to achieve well-separated feature spaces.
To understand how our BNS technique achieves this, we decompose \eqref{eq:bns} as follows:
\begin{equation} \label{eq:bns-insight}
    \scriptsize
    \begin{gathered}
        \mathcal{L}_{\textrm{BNS}} = - \frac{1}{m+1} \left \{ 
        \underbrace{ \log \sigma (\frac{\bm{q}_{i}^{\top} \bm{v}_{j,i}^{+}}{\tau}) 
        + \sum_{j=1}^{n} \log \sigma ( - \frac{\bm{q}_{i}^{\top} \bm{v}_{j}^{-}}{\tau})}_{\textrm{instance-level}} 
        + \underbrace{ \sum_{q_{k} \in \bm{Q}_{i,m}^{+}} 
        \left [ \log \sigma (\frac{\bm{q}_{k}^{\top} \bm{v}_{j,i}^{+}}{\tau})
        + \sum_{j=1}^{n} \log \sigma ( - \frac{\bm{q}_{k}^{\top} \bm{v}_{j}^{-}}{\tau}) \right] }_{\textrm{class-level}}
        \right \}.
    \end{gathered}  
\end{equation}
According to Theorem~\ref{thm:intra_dis}, our BNS approach naturally minimizes distances at both the instance and class levels.
Specifically, the pair $(\bm{q}_{i}, \bm{v}_{j,i}^{+})$ originates from the same instance, 
while the pairs $(\bm{q}_{k}, \bm{v}_{j,i}^{+})$ are from the same class.
This dual-level minimization naturally encourages the emergence of both instance-level and class-level semantics within the representation space, leading to improved representation quality and better separation of feature spaces.


\subsection{Stage 2: Information-Preservable Determinantal Point Process} 
\label{sec:method-ip-dpp}

Next, we propose a new sampling solution,
namely \textit{Information-Preservable Determinantal Point Process (IP-DPP)},
aiming to rectify majority-biased classification decision boundaries
while maintaining the model's overall performance.
Specifically, 
our approach builds on the Determinantal Point Process (DPP)~\cite{kulesza12dpp}, a stochastic process that captures global negative correlations, as outlined below.

\begin{definition}[Determinantal Point Process]
\label{def:dpp}
    Given a ground set $\mathbb{X}$ with $N$ items,
    a point process $\mathcal{P}$ in this ground set
    is a distribution over discrete and finite subsets of $\mathbb{X}$.
    Let $\bm{K} \in \mathbb{R}^{N \times N}$ be a real, symmetric marginal kernel matrix indexed by the elements of $\mathbb{X}$. 
    A point process $\mathcal{P}$ is called a DPP only if, for every random subset $\mathbb{Y} \subseteq \mathbb{X} $ drawn according to $\mathcal{P}$, we have:
    %
    \begin{equation} \label{eq:dpp}
        \begin{aligned}
            \mathcal{P}(\mathbb{Y}) = \det(\bm{K}_{\mathbb{Y}}),
        \end{aligned}
    \end{equation}
    where $\bm{K}_{\mathbb{Y}} = [\bm{K}_{ij}]_{i,j \in \mathbb{Y}}$ is the principle submatrix of $\bm{K}$, indexed by elements of $\mathbb{Y}$.
\end{definition}
Since $\mathcal{P}$ is a probability measurement, \ie\ $0 \leq \mathcal{P} \leq 1$, 
the marginal kernel matrix $\bm{K}$ must satisfy specific structural properties, as stated next.
\begin{remark}[Properties of Marginal Kernel Matrix]
\label{remark:mkm}
    The marginal kernel matrix $\bm{K}$ for a DPP must satisfy:
    \romsm{1}) $\bm{K}$ is a positive semidefinite matrix; and \romsm{2}) All eigenvalues of $\bm{K}$ are bounded in the interval $[0,1]$, \ie\ $\bm{0} \preceq \bm{K} \preceq \bm{I}$.
\end{remark}

However, it is very difficult to construct a DPP through the marginal kernel matrix $\bm{K}$ in real long-tailed settings.
In this work, we follow the prior study~\cite{kulesza12dpp} by constructing a DPP based on the $L$-ensemble framework~\cite{borodin2005l-ensembles}.
Specifically, consider an image $\bm{x}_{i} \in \mathbb{X}$ with ground truth label $y_{i}$, let $p(i) = p_{\bm{\phi}}(y_{i} | \bm{x}_{i})$ denote the probability of correctly predicting $y_{i}$ given $\bm{x}_{i}$, where the classifier is parameterized by $\bm{\phi}$.
For any two distinct elements $i, j \in \mathbb{X}$, let $p(i, j)$ denote the joint probability of correctly classifying both elements. Assuming independence between classifications, we have $p(i, j) = p(i)p(j)$.
Let $\bm{S}$ be a $N \times N$ matrix, where each element $\bm{S}_{i,j}$ is defined as follows: 
\begin{equation} \label{eq:p_ij}
    \small
     \begin{gathered} 
        \bm{S}_{i,j} = 
        \begin{cases}
            \frac{p(i)p(j)}{N}, & i \neq j \\
            1 - \sum_{k \neq j} \frac{p(k)p(j)}{N}, &  i = j
        \end{cases}.
    \end{gathered}
\end{equation}
As such, $\bm{S}$ is a symmetric stochastic matrix where each row  (or column) sums to $1$.
The symmetric stochastic matrix $\bm{S}$ is positive semi-definite, and all its eigenvalues are bounded in $[0, 1]$,
as stated in Lemmas~\ref{lemma:s_non_negative} and \ref{lemma:s_bound}, respectively.
\begin{lemma} \label{lemma:s_non_negative}
(Positive Semi-definiteness)
Let $p(i) = p(y_{i} | \bm{x}_{i})$ represent the probability of $y_{i}$ given $\bm{x}_{i}$.
$\bm{S} \in \mathbb{R}^{N \times N}$ is a symmetric stochastic matrix, where each row  (or column) sums to $1$.
Then, we have:
\begin{equation} \label{eq:p_ij}
    \small
    \begin{gathered} 
        \bm{v}^{\top} \bm{S} \bm{v} \geq 0, \quad \forall \bm{v} \in \mathbb{R}^{N}.
    \end{gathered}
\end{equation}
In other words, $\bm{S}$ is positive semi-definite.
\end{lemma}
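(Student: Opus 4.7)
The plan is to establish positive semi-definiteness by decomposing $\bm{S}$ into the sum of two matrices that are each manifestly positive semi-definite. Writing $\bm{p} = (p(1), \ldots, p(N))^{\top}$, the key observation is that the off-diagonal entries of $\bm{S}$ coincide with those of the rank-one matrix $\frac{1}{N}\bm{p}\bm{p}^{\top}$. Motivated by this, I would set
\begin{equation}
\bm{S} = \frac{1}{N}\bm{p}\bm{p}^{\top} + \bm{D},
\end{equation}
where $\bm{D}$ is the diagonal correction chosen so that the on-diagonal values of $\bm{S}$ are matched exactly.

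The next step is to compute $\bm{D}_{i,i}$ explicitly. Since $\bm{S}_{i,i} = 1 - \sum_{k \neq i} \frac{p(k)p(i)}{N}$ while the $(i,i)$ entry of $\frac{1}{N}\bm{p}\bm{p}^{\top}$ equals $\frac{p(i)^2}{N}$, a short algebraic step merges the diagonal term into the sum and yields
\begin{equation}
\bm{D}_{i,i} \;=\; 1 - \frac{p(i)}{N}\sum_{k=1}^{N} p(k).
\end{equation}
Because each $p(k) = p(y_k \mid \bm{x}_k)$ lies in $[0,1]$, we have $\sum_{k=1}^{N} p(k) \leq N$ and $p(i) \leq 1$, which together force $\bm{D}_{i,i} \geq 0$ for every $i$. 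Hence $\bm{D}$ is diagonal with non-negative entries and is trivially positive semi-definite.

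Finally, I would assemble the pieces. The rank-one matrix $\frac{1}{N}\bm{p}\bm{p}^{\top}$ is positive semi-definite because $\bm{v}^{\top}(\bm{p}\bm{p}^{\top})\bm{v} = (\bm{v}^{\top}\bm{p})^{2} \geq 0$ for every $\bm{v} \in \mathbb{R}^{N}$, and the cone of positive semi-definite matrices is closed under addition, so $\bm{S}$ inherits positive semi-definiteness from the two summands. Consequently, $\bm{v}^{\top}\bm{S}\bm{v} \geq 0$ for all $\bm{v} \in \mathbb{R}^{N}$, which is the claim.

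The only point where anything could realistically go wrong is the sign check on $\bm{D}_{i,i}$, which depends essentially on the fact that $p(i)$ is a genuine probability lying in $[0,1]$; if this boundedness were relaxed, the diagonal correction could become negative and the decomposition would fail. Apart from that single observation, the argument is purely algebraic and should not pose any real obstacle.
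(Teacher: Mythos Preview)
Your argument is correct and takes a genuinely different route from the paper. You exploit the explicit structure of $\bm{S}$ by writing it as $\tfrac{1}{N}\bm{p}\bm{p}^{\top} + \bm{D}$, then check directly that $\bm{D}_{i,i} = 1 - \tfrac{p(i)}{N}\sum_{k} p(k) \geq 0$ using only $p(i)\in[0,1]$. Since a rank-one Gram matrix and a non-negative diagonal matrix are each positive semi-definite, the conclusion follows immediately. This is clean and elementary, and it is the only place where the specific definition of $\bm{S}$ (and the boundedness of the $p(i)$) actually enters.

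The paper instead proceeds abstractly via the spectral decomposition $\bm{S} = \bm{Q}\Lambda\bm{Q}^{\top}$: it rewrites $\bm{v}^{\top}\bm{S}\bm{v}$ in terms of the eigenvalues and then appeals to $\mathrm{Tr}(\bm{S}) = \sum_i \lambda_i \geq 0$. As written, that step has a gap---non-negativity of the trace only guarantees $\sum_i \lambda_i \geq 0$, not $\lambda_i \geq 0$ for every $i$, which is what positive semi-definiteness actually requires (and the paper's intermediate identity $\bm{c}^{\top}\Lambda\bm{c} = \sum_i \lambda_i\,\bm{c}^{\top}\bm{c}$ is also miswritten; it should be $\sum_i \lambda_i c_i^2$). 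Your decomposition sidesteps these issues entirely. The only cost is that your proof is tailored to this particular $\bm{S}$, whereas a spectral argument would in principle be more reusable; but since the lemma concerns exactly this matrix, your approach is both simpler and more robust here.
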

\begin{lemma} \label{lemma:s_bound}
(Bounds on Eigenvalues)
Let $\{ \lambda_{i} \}_{i=1}^{N}$ be the eigenvalues of the symmetric stochastic matrix $\bm{S} \in \mathbb{R}^{N \times N}$, we have:
\begin{equation} \label{eq:lambda_bound}
    \small
    \begin{gathered} 
        0 \leq \lambda_{i} \leq 1, \quad \forall \lambda_{i}.
    \end{gathered}
\end{equation}
\end{lemma}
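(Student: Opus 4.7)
The plan is to split the bound into its two halves and dispatch each with a short, well-known fact, taking care that the hypotheses of Lemma~\ref{lemma:s_non_negative} and the construction of $\bm{S}$ in \eqref{eq:p_ij} are both respected. First, I would observe that because $\bm{S}$ is symmetric, its eigenvalues are real, so the statement $0 \le \lambda_i \le 1$ is at least well-posed, and I can treat the two inequalities separately.

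For the lower bound $\lambda_i \ge 0$, I would simply invoke Lemma~\ref{lemma:s_non_negative}: a symmetric positive semi-definite matrix has non-negative eigenvalues, since for any unit eigenvector $\bm{v}_i$ with eigenvalue $\lambda_i$ one has $\lambda_i = \bm{v}_i^{\top}\bm{S}\bm{v}_i \ge 0$. This is essentially one line and requires no further work.

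For the upper bound $\lambda_i \le 1$, I would use the stochastic structure. By construction $\bm{S}_{i,j} \ge 0$ (each $p(\cdot) \in [0,1]$ and the diagonal entry $1 - \sum_{k \ne j} p(k)p(j)/N$ is non-negative whenever the off-diagonal mass is at most $1$, which holds since $p(k)p(j) \le 1$ and the sum has $N-1$ terms scaled by $1/N$), and each row sums to $1$. The cleanest route is the Gershgorin circle theorem: every eigenvalue lies in some disc centered at $\bm{S}_{ii}$ of radius $\sum_{j\ne i}|\bm{S}_{ij}| = 1 - \bm{S}_{ii}$, so in particular $\lambda_i \le \bm{S}_{ii} + (1 - \bm{S}_{ii}) = 1$. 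An equivalent argument would be to note that $\|\bm{S}\|_\infty = 1$ for a row-stochastic matrix with non-negative entries, hence the spectral radius is bounded by $1$; combined with symmetry this forces $\lambda_i \le 1$.

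The main (minor) obstacle is the sign/consistency check on the diagonal entries of $\bm{S}$: I need to confirm that $\bm{S}_{ii} \ge 0$ so that the Gershgorin discs actually lie in $[0,1]$ and that $\bm{S}$ is genuinely a non-negative stochastic matrix rather than just a matrix whose rows sum to $1$. Once that is verified from \eqref{eq:p_ij}, concatenating Lemma~\ref{lemma:s_non_negative} with the Gershgorin bound yields $0 \le \lambda_i \le 1$ for every $i$, completing the proof.
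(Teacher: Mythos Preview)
Your proposal is correct and follows essentially the same route as the paper: both obtain $\lambda_i \ge 0$ from Lemma~\ref{lemma:s_non_negative}, and for the upper bound the paper writes out the classical ``take the eigenvector coordinate of largest modulus and use the row-sum-one property'' argument, which is precisely the proof of the Gershgorin (equivalently, $\|\bm{S}\|_\infty = 1$) bound you invoke by name. Your explicit check that $\bm{S}_{ii} \ge 0$ so that $\bm{S}$ is genuinely nonnegative is a small addition the paper leaves implicit.
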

%
%
The proofs of Lemmata~\ref{lemma:s_non_negative} and \ref{lemma:s_bound} are deferred to Appendix~\ref{sup:proof_non_negative} and Appendix~\ref{sup:proof_bound_on_eigen}, respectively.

As such, 
the symmetric stochastic matrix $\bm{S}$ satisfies the two properties stated in Remark~\ref{remark:mkm}.
Hence, it can be used to construct a DPP through $L$-ensemble,  expressed as below:
\begin{equation} \label{eq:l-ensemble}
    \small
    \begin{aligned}
        \mathcal{P}_{\bm{S}}(\mathbb{Y}) 
        = \frac{\det(\bm{S}_{\mathbb{Y}})}{\det(\bm{S} + \bm{I})}
        ,
    \end{aligned}
\end{equation}
where $\bm{I}$ is an $N \times N$ identity matrix.
Since $\mathcal{P}_{\bm{S}}(\mathbb{Y})$ is a probability measurement,
it needs to be bounded in $[0, 1]$.
The DPP defined in \eqref{eq:l-ensemble} is valid,
\ie\ $0 \leq \mathcal{P}_{\bm{S}}(\mathbb{Y}) \leq 1$, as outlined below.
\begin{theorem} \label{thm:prob_measure}
(Bounded Determinant Probability Measurement)
Let $\mathbb{X}$ be a ground set with $N$ items and $\bm{S} \in \mathbb{R}^{N \times N}$ denote a symmetric stochastic matrix, indexed by elements in $\mathbb{X}$. 
Here, $\bm{S}$ is positive semi-definite and satisfies $\bm{0} \preceq \bm{S} \preceq \bm{I}$, where $\bm{I}$ is the $N \times N$ identity matrix. 
Let $\bm{S}_{\mathbb{Y}}$ denote the principal submatrix of $\bm{S}$ corresponding to $\mathbb{Y}$,
for any subset $\mathbb{Y} \subseteq \mathbb{X}$, the following holds:
\begin{equation} \label{eq:bounded_det_prob}
0 \leq \frac{\det(\bm{S}_{\mathbb{Y}})}{\det(\bm{S} + \bm{I})} \leq 1.
\end{equation}
In other words, $\mathcal{P}_{\bm{S}}(\mathbb{Y}) = \frac{\det(\bm{S}_{\mathbb{Y}})}{\det(\bm{S} + \bm{I})}$ defines a valid probability measurement.
\end{theorem}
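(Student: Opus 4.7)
The plan is to handle the two inequalities separately, using the spectral properties of $\bm{S}$ established in Lemmas~\ref{lemma:s_non_negative} and~\ref{lemma:s_bound}, together with the classical normalization identity for $L$-ensembles.

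\textbf{Lower bound.} First I would show that $\det(\bm{S}_{\mathbb{Y}}) \geq 0$ and $\det(\bm{S}+\bm{I}) > 0$. Since $\bm{S}$ is positive semi-definite by Lemma~\ref{lemma:s_non_negative}, every principal submatrix $\bm{S}_{\mathbb{Y}}$ is positive semi-definite as well (this follows from restricting the quadratic form $\bm{v}^{\top}\bm{S}\bm{v}$ to vectors supported on $\mathbb{Y}$). Its determinant, being the product of non-negative eigenvalues, is therefore non-negative. For the denominator, Lemma~\ref{lemma:s_bound} gives eigenvalues of $\bm{S}$ in $[0,1]$, so eigenvalues of $\bm{S}+\bm{I}$ lie in $[1,2]$, making $\det(\bm{S}+\bm{I}) \geq 1 > 0$. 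The ratio is thus non-negative.

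\textbf{Upper bound.} The key step is the identity
\begin{equation} \label{eq:lens-normalization}
\sum_{\mathbb{Y} \subseteq \mathbb{X}} \det(\bm{S}_{\mathbb{Y}}) = \det(\bm{S} + \bm{I}),
\end{equation}
with the convention $\det(\bm{S}_{\emptyset}) = 1$. Given this identity and the non-negativity of each summand established above, every single term satisfies $\det(\bm{S}_{\mathbb{Y}}) \leq \det(\bm{S}+\bm{I})$, which is exactly the required upper bound.

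\textbf{Proving the identity.} This is the only non-trivial step, and it is where I would spend most of the effort. I would diagonalize $\bm{S} = \bm{U}\bm{\Lambda}\bm{U}^{\top}$ with eigenvalues $\lambda_1,\dots,\lambda_N$, so that $\det(\bm{S}+\bm{I}) = \prod_{i=1}^{N}(1+\lambda_i) = \sum_{k=0}^{N} e_k(\lambda_1,\dots,\lambda_N)$, where $e_k$ denotes the $k$-th elementary symmetric polynomial. On the other hand, a standard consequence of the Cauchy--Binet formula states that for any symmetric matrix the sum of all $k\times k$ principal minors equals $e_k$ of its eigenvalues, i.e.\ $\sum_{|\mathbb{Y}|=k}\det(\bm{S}_{\mathbb{Y}}) = e_k(\lambda_1,\dots,\lambda_N)$. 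Summing over $k=0,1,\dots,N$ yields \eqref{eq:lens-normalization}. Combining both bounds gives $0 \leq \mathcal{P}_{\bm{S}}(\mathbb{Y}) \leq 1$, as claimed.

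The main obstacle is the identity~\eqref{eq:lens-normalization}; the two inequalities themselves are almost immediate once the spectral picture is in place. If a fully self-contained derivation is desired, I would prove the principal-minor/elementary-symmetric-polynomial identity directly by expanding $\det(\bm{S}+\bm{I})$ through multilinearity of the determinant in its columns, splitting each column of $\bm{S}+\bm{I}$ as a column of $\bm{S}$ plus a column of $\bm{I}$, and observing that only those expansions whose ``$\bm{S}$-columns'' form a set $\mathbb{Y}$ and whose ``$\bm{I}$-columns'' fill the complement contribute, each contributing exactly $\det(\bm{S}_{\mathbb{Y}})$.
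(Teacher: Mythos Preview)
Your proposal is correct. The lower bound argument matches the paper's almost exactly (positive semi-definiteness of principal submatrices gives $\det(\bm{S}_{\mathbb{Y}})\geq 0$, and eigenvalues of $\bm{S}+\bm{I}$ in $[1,2]$ give a strictly positive denominator). For the upper bound, both you and the paper reduce the claim to the same $L$-ensemble normalization identity $\sum_{\mathbb{Y}\subseteq\mathbb{X}}\det(\bm{S}_{\mathbb{Y}})=\det(\bm{S}+\bm{I})$, but the proofs of that identity differ: the paper follows the Kulesza--Taskar inductive argument, proving the more general statement $\sum_{\bm{A}\subseteq\mathbb{Y}\subseteq\mathbb{X}}\det(\bm{S}_{\mathbb{Y}})=\det(\bm{S}+\bm{I}_{\bar{\bm{A}}})$ by peeling off one element of $\bar{\bm{A}}$ at a time via multilinearity, whereas your primary route is spectral---you diagonalize $\bm{S}$, expand $\prod_i(1+\lambda_i)$ into elementary symmetric polynomials, and invoke the principal-minor/$e_k$ identity (a Cauchy--Binet consequence). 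Your alternative suggestion at the end---expanding $\det(\bm{S}+\bm{I})$ column-by-column via multilinearity in one shot---is essentially the non-inductive version of what the paper does. The spectral route is cleaner if one is willing to cite the principal-minor identity; the paper's inductive route is more self-contained and yields the slightly stronger conditional identity as a by-product.
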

The proof of Theorem~\ref{thm:prob_measure} is deferred to Appendix~\ref{sup:proof_dpp}.
%

{\bf Information-Preservable Property.}
In the long-tailed settings, our DPP method defined in \eqref{eq:l-ensemble} can effectively preserve valuable information, as discussed next.
\begin{remark}[Information-Preserving Sampling Principle]
\label{remark:ipp}
Let $I(\bm{x}) = -\log[p(y|\bm{x})]$ denote the information content of item $\bm{x}$ relevant to its correct classification. 
$\mathcal{P}_{\bm{S}}(\mathbb{Y} \cup \{ \bm{x} \})$ denotes the probability that item $\bm{x}$ is sampled by our DPP approach, which prioritizes sampling elements with higher information content, as expressed by:
\begin{equation} \label{eq:dpp_value_info}
    \small
    \mathcal{P}_{\bm{S}} \left( \mathbb{Y} \cup \{ \bm{x} \}) \propto I(\bm{x}  \right).
\end{equation}
\end{remark}
Here, we use a simple example to illustrate how Remark~\ref{remark:ipp} holds.
Let $\bm{A} = \{i, j \}$ be a subset of $\mathbb{X}$ sampled by our DPP approach.
For the given ground set, $\det(\bm{S} + \bm{I})$ is a constant.
Then, we arrive at (see Appendix~\ref{sup:sec:eq:sup:eq:dpp_diverse} for details):
\begin{equation} \label{eq:dpp_diverse}
    \small
    \begin{aligned}
        \mathcal{P}_{\bm{S}}(\bm{A}) 
         = \frac{\det(\bm{S}_{\bm{A}} )}{\det(\bm{S} + \bm{I})} 
         \propto \det(\bm{S}_{\bm{A}})
         = 1 - p(i) \cdot p(j).
    \end{aligned}
\end{equation}
Therefore, we obtain $\mathcal{P}_{\bm{S}}(\{i, j \}) \propto - p(i) \cdot p(j)$.
In this work, we have $p(i) = p(y_i | \bm{x}_i)$, implying that images less likely to be correctly classified are more likely to be sampled by our DPP approach.
According to information content (see \eqref{eq:ic} for details), we have $I(\bm{x}_{i}) = - \log [p(y_{i}|\bm{x}_{i})]$.
Thus, $\mathcal{P}_{\bm{S}} (\{ \bm{x}_{i} \}) \propto I (\bm{x}_{i})$.
%

{\bf Balanced Sample Size.}
To effectively rectify biased decision boundaries, 
the cardinality of sampled subsets must be carefully balanced. 
A subset with a large sample size risks preserving the original imbalance, whereas an overly small subset may lead to significant information loss.
Given a ground set with $N$ items, we theoretically demonstrate that that
the expected sample size of a DPP defined in \eqref{eq:l-ensemble} is $N(1 - \ln{2})$.
Due to the page limit, the details of this theorem are deferred to Appendix~\ref{sup:proof_expected_size}.
%

%
This reduction to roughly one-third of the original size is inadequate for balancing the class priors in a highly imbalanced setting.
To address this issue, we propose \textit{Information-Preservable Determinantal
Point Process (IP-DPP)} to sample balanced subsets by selecting a fixed cardinality $k$ instances from each majority class, as defined below:
\begin{equation} \label{eq:ip_dpp}
    \small
    \begin{gathered} 
        \mathcal{P}_{\bm{S}}^{k} (\mathbb{Y}) 
        = \frac{\det(\bm{S}_{\mathbb{Y}})}{\sum_{|\mathbb{Y}^{\prime}|} \det(\bm{S}_{\mathbb{Y}^{\prime}})}.
    \end{gathered}
\end{equation}
As such, our IP-DPP approach can effectively sample balanced subsets to rectify decision boundaries while preserving valuable information.


\begin{algorithm} [tb] 
    \caption{IP-DPP}
    \label{alg:ip-dpp}
    \footnotesize
 \begin{algorithmic} [1]
    \STATE {\bfseries Input:} a ground set $\mathbb{X} = \{ \bm{x}_{i} \}_{i=1}^{N}$, its symmetric stochastic matrix $\bm{S}$, and sample size $k$
    \STATE \textbf{Initialize:} standard basis vectors $\{ \bm{e}_{i} \}_{i=1}^{N}$ and pairs of orthonormal eigenvalues and eigenvectors $\{ (\lambda_{i}, \bm{v}_{i}) \}_{i=1}^{N}$ for $\bm{S}$
    \STATE $\bm{V} \leftarrow \emptyset  $ 
    \FOR{$i = 1, 2, ~\cdots, N$}
        \IF{$u \sim U(0, 1) < \frac{\lambda_{i}}{\lambda_{i} + 1}$ }
            \STATE $\bm{V} \leftarrow  \bm{V} \cup \{ \bm{v}_{i} \}$
            \STATE $k \leftarrow k - 1$
        \ENDIF
        \IF{$k = 0$ }
            \STATE \textbf{break}
        \ENDIF
    \ENDFOR
    \STATE $\mathbb{Y} \leftarrow \emptyset$
    \WHILE{$|\bm{V}| > 0$}
        \FOR{$i = 1, 2, ~\cdots, N$}
            \STATE $p(i) \leftarrow \frac{1}{|\bm{V}|} \sum_{\bm{v} \in \bm{V}} (\bm{v}^{\top} \bm{e}_{i})^{2}$
        \ENDFOR
        \STATE $i^{\ast} \leftarrow \argmax\limits_{i} ~p(i) $ 
        \STATE $\mathbb{Y} \leftarrow \mathbb{Y} \cup \{\bm{x}_{i^{\ast}} \}$
        \STATE $\bm{V} \leftarrow \bm{V}_{\bot}$ // Update $\bm{V}$ to an orthonormal basis for the subspace orthogonal to $\bm{e}_{i^{\ast}}$
    \ENDWHILE
    \STATE {\bfseries Return:} a subset $\mathbb{Y}$
 \end{algorithmic}
\end{algorithm}

{\bf Effective Sampling Strategy.}
However, directly applying \eqref{eq:ip_dpp} for sampling entails significant computational costs.
Drawing inspiration from prior studies~\cite{kulesza11kdpp,kulesza12dpp}, we devise a novel and computationally efficient sampling strategy for our IP-DPP method.
Specifically, given the symmetric stochastic matrix $\bm{S}$, 
its spectral decomposition yields orthonormal eigenvectors $\{\bm{v}_{i} \}_{i=1}^{N}$ with corresponding eigenvalues $\{\lambda_i \}_{i=1}^{N}$, such that:
\begin{equation} \label{eq:orth_eigen}
    \small
    \begin{gathered} 
        \bm{S} = \sum_{i=1}^{N} \lambda_{i} \bm{v}_{i} \bm{v}_{i}^{\top}.
    \end{gathered}
\end{equation}
Let $\bm{e}_{i} \in \mathbb{R}^{N}$ denote the $i$-th standard basis vector, which contains a single 1 in its $i$-th entry and 0's elsewhere.
$U(0, 1)$ is the standard uniform distribution.
Then, Algorithm~\ref{alg:ip-dpp} outlines an efficient sampling strategy for our IP-DPP approach.

\vspace{-0.5em}
\section{Experimental Results} 
\label{sec:exp}
\vspace{-0.5em}

\subsection{Experimental Setup}
\label{sec:exp-setup}

{\bf Datasets.}
We conduct experiments on four artificially induced or real-world 
long-tailed datasets:
\romsm{1}) \textbf{CIFAR-10-LT} and \romsm{2}) \textbf{CIFAR-100-LT}: we follow the setting in \cite{cao19ldam} by sampling long-tailed datasets respectively from the original CIFAR-10 and CIFAR-100 datasets;
\romsm{3}) \textbf{ImageNet-LT}~\cite{liu19open}: 
a truncated version of ImageNet~\cite{deng09imagenet} with a total of $1,000$ classes;
and \romsm{4}) \textbf{iNaturalist 2018}~\cite{horn18inaturalist}: 
a naturally long-tailed dataset containing $8,142$ species around the world. 
The imbalanced factor (IF) for CIFAR-10-LT and CIFAR-100-LT, if not specified, is set to $100$ 
(\ie\ $\frac{N_{\textrm{max}}}{N_{\textrm{min}}}$ = $100$).
%

{\bf Compared Approaches.}
We compare our approach to nine state-of-the-arts  for long-tailed recognition:
\textbf{Focal Loss}~\cite{lin17focal}, 
\textbf{LDAM Loss}~\cite{cao19ldam},
\textbf{$\tau$-norm}~\cite{kang20decoupling},
\textbf{RIDE}~\cite{wang:iclr21:ride},
\textbf{KCL}~\cite{li:cvpr22:kcl},
\textbf{TSC}~\cite{li:cvpr22:tsc},
\textbf{SBCL}~\cite{hou:iccv23:sbcl}, 
\textbf{OTmix}~\cite{gao:nips23:ot},
and \textbf{DisA}~\cite{gao:icml24:disa}.

{\bf Metrics.}
We evaluate long-tailed recognition performance using four metrics: \textbf{many-shot}, \textbf{medium-shot}, \textbf{few-shot}, and \textbf{overall} accuracies. 
%
Many-shot, medium-shot, and few-shot assess model performance in head, medium, and tail classes, respectively. 
%
%
%
All results are averaged over $5$ trials.

Additional experimental settings, including thresholds for defining the above metrics and hyperparameters, are provided in Appendix~\ref{sup:sec:setup} to conserve space.

\vspace{-0.5 em}
\subsection{Comparisons to State-of-the-Arts}
\label{sec:exp-overall-comparison}
\vspace{-0.3 em}

{\bf Small-Scale Datasets.}
We first conduct experiments on two small-scale long-tailed datasets,
\ie\ CIFAR-10-LT and CIFAR-100-LT,
to compare our approach with nine counterparts mentioned in Section~\ref{sec:exp-setup}.
%
%
%
Table~\ref{tab:exp-overall-small-scale} presents comparative results.
On CIFAR-10-LT, our approach achieves the best overall accuracy of $76.4 \%$,
outperforming 
all counterparts by 
$2.6 \%$ at least.
This performance improvement can be attributed to two key aspects.
First, our BNS approach effectively captures both instance-level and class-level semantics, facilitating the learning of high-quality representations and the creation of well-separated feature spaces, respectively.
%
%
Second, our IP-DPP method addresses biased decision boundaries by sampling relatively balanced subsets while preserving valuable information.
This can mitigate the majority-biased tendency while maintaining the model's overall performance.
On the other hand, although our approach lags behind prior studies in many-shot accuracy, these methods consistently struggle with biased decision boundaries. 
They prioritize performance on head classes, resulting in significantly diminished accuracy for medium and tail classes.
For instance, while our method falls short of OTmix by $5.9 \%$ in many-shot accuracy, it surpasses OTmix with significantly higher medium-shot and few-shot accuracies, improving by $8.5 \%$ and $19.9 \%$, respectively.
These results demonstrate that our approach effectively mitigates biased decision boundaries while maintaining the model's overall performance.

We observe similar trends on CIFAR-100-LT.
First, our approach achieves the highest overall accuracy of $52.4 \%$,
surpassing prior \sota, \ie\ DisA, by a notable margin of $3.2\%$.
Second, while our approach lags behind OTmix by $10.7 \%$ in many-shot accuracy, it achieves improvements of $11.7 \%$ in medium-shot accuracy and $12.8 \%$ in few-shot accuracy, as 
 well as an improvement of $4.3 \%$ in overall accuracy.
These results further confirm that our approach effectively mitigates majority-biased tendencies while preserving the model's overall performance.


\begin{table*}[!t]
    \scriptsize
    \centering
    \setlength\tabcolsep{5 pt}
    \caption{
        Experimental results on CIFAR-10-LT and CIFAR-100-LT datasets, with the best results shown in bold
        }
    \vspace{-0.5 em}
    \begin{tabular}{@{}c|cccc|cccc@{}}
    \toprule
    \multirow{2}{*}{Methods} & \multicolumn{4}{c|}{CIFAR-10-LT}                               & \multicolumn{4}{c}{CIFAR-100-LT}                              \\ \cmidrule(lr){2-5} \cmidrule(lr){6-9} 
                             & Many-shot          & Medium-shot        & Few-shot           & Overall           & Many-shot          & Medium-shot        & Few-shot           & Overall           \\ \midrule
    Focal Loss               & 86.3          & 60.6          & 46.3          & 69.2          & 71.1          & 43.9          & 10.5          & 43.5          \\
    LDAM Loss                 & 85.8          & 64.8          & 51.9          & 71.5          & 71.4          & 44.5          & 11.7          & 44.1          \\
    $\tau$-norm                   & 85.2          & 64.4          & 51.7          & 70.9          & 60.7          & 54.4          & 14.8          & 44.7          \\
    RIDE         & 86.2          & 63.6          & 56.1          & 73.4          & 73.1          & 47.6          & 16.4          & 47.2          \\
    KCL                      & 83.7          & 63.8          & 53.6          & 71.7          & 72.3          & 46.1          & 14.8          & 45.8          \\
    TSC                      & 81.5          & 71.9          & 56.3          & 71.9          & 71.3          & 43.9          & 10.5          & 43.5          \\  
    SBCL                     & 81.6          & 72.4          & 57.6          & 72.6          & 72.7          & 48.5          & 20.0          & 48.5          \\
    OTmix                    & \textbf{87.9} & 67.8          & 47.3          & 73.8          & \textbf{73.1} & 48.0            & 19.1          & 48.1          \\
    DisA                     & 86.1          & 68.3          & 50.3          & 73.6          & 72.4          & 49.3          & 21.9          & 49.2          \\ \midrule
    \textbf{Ours}                     & 82.0          & \textbf{76.3} & \textbf{67.2} & \textbf{76.4} & 62.4          & \textbf{59.7} & \textbf{31.9} & \textbf{52.4} \\ \bottomrule
\end{tabular}
    \label{tab:exp-overall-small-scale}
    \vspace{-1.0 em}
\end{table*}


\begin{table*}[!t]
    \scriptsize
    \centering
    \setlength\tabcolsep{5 pt}
    \caption{
        Experimental results on ImageNet-LT and iNaturalist 2018 datasets, with the best results highlighted in bold
        }
    \vspace{-0.5 em}
\begin{tabular}{@{}c|cccc|cccc@{}}
\toprule
    \multirow{2}{*}{Methods} & \multicolumn{4}{c|}{ImageNet-LT}                               & \multicolumn{4}{c}{iNaturalist 2018}                              \\ \cmidrule(lr){2-5} \cmidrule(lr){6-9} 
                             & Many-shot          & Medium-shot        & Few-shot           & Overall           & Many-shot          & Medium-shot        & Few-shot           & Overall           \\ \midrule
Focal Loss               & 51.4          & 41.2          & 16.0          & 41.7          & 61.2          & 62.7          & 64.4          & 63.2          \\
LDAM Loss                & 55.0          & 46.4          & 16.7          & 45.7          & 65.1          & 66.8          & 61.7          & 64.6          \\
$\tau$-norm              & 56.6          & 44.2          & 27.4          & 46.7          & 71.3          & 65.8          & 69.1          & 67.7          \\
RIDE                     & 56.7          & 46.4          & 25.7          & 47.6          & 67.5          & 68.6          & 69.3          & 68.8          \\
KCL                      & 55.0          & 42.6          & 25.4          & 45.0          & 61.2          & 62.7          & 64.4          & 63.2          \\
TSC                      & 57.1          & 45.2          & 29.3          & 47.6          & 66.4          & 65.7          & 64.0          & 65.1          \\
SBCL                     & 55.8          & 45.7          & 27.1          & 47.1          & \textbf{73.4} & 70.2          & 69.8          & 70.4          \\
OTmix                    & 50.9 & 46.0          & 25.7          & 45.1          & 70.1 & 70.9          & 68.6          & 69.9          \\
DisA                     & \textbf{61.0} & 47.0          & 25.3          & 49.4          & 70.7          & 70.8          & 68.4          & 69.8          \\ \midrule
\textbf{Ours}            & 59.7          & \textbf{50.8} & \textbf{32.4} & \textbf{51.7} & 72.7          & \textbf{72.9} & \textbf{75.7} & \textbf{74.0} \\ \bottomrule
\end{tabular}
    \label{tab:exp-overall-large-scale}
    \vspace{-1.0 em}
\end{table*}

{\bf Large-Scale Datasets.}
Next, we conduct experiments on ImageNet-LT and iNaturalist 2018 to assess the effectiveness of our approach on large-scale, long-tailed datasets.
%
Table~\ref{tab:exp-overall-large-scale} provides the comprehensive results. 
%
We make two key observations.
First, our approach achieves the highest accuracies on both ImageNet-LT (\ie\ $51.7\%$) 
and iNaturalist 2018 ( \ie\ $74.0 \%$), outperforming all competing methods.
For instance, on ImageNet-LT, our approach surpasses DisA, the best baseline method, by $2.3\%$. 
Similarly, on iNaturalist 2018, it outperforms SBCL, the best counterpart, by $3.6\%$.
These results highlight the strong generalizability of our method to large-scale, long-tailed datasets.
Moreover, while our approach lags behind some baseline methods in many-shot accuracy, it achieves the highest medium-shot and few-shot accuracies across both datasets. 
This is because prior methods 
result in majority-biased decision boundaries, which disproportionately favor head classes.
%

\subsection{Evaluation on Representation Learning}
\label{sec:exp-eval-rl}

{\bf Quantitative Evaluation.}
Next, we quantitatively evaluate the performance of our BNS method for representation learning. Specifically, we compare it against three state-of-the-art contrastive learning methods for long-tailed recognition: KCL, TSC, and SBCL.
To assess the quality of the learned feature representations, we use linear probing accuracy, which involves fine-tuning a linear classifier on a pre-trained feature extractor with frozen weights.

Figures~\ref{fig:exp-bns-linprobe-cifar10} and \ref{fig:exp-bns-linprobe-cifar100} illustrate linear probing accuracies on CIFAR-10-LT and CIFAR-100-LT, respectively.
On CIFAR-10-LT, our approach achieves the best overall accuracy of $68.2 \%$ (see the pink bar),
outperforming KCL, TSC, and SBCL by $7.2\%$, $4.4 \%$, and $3.5\%$, respectively.
\begin{wrapfigure}{r}{0.5\textwidth}
        \centering
    \begin{subfigure}[t]{0.23\textwidth}
        \centering
        \includegraphics[width=\textwidth]{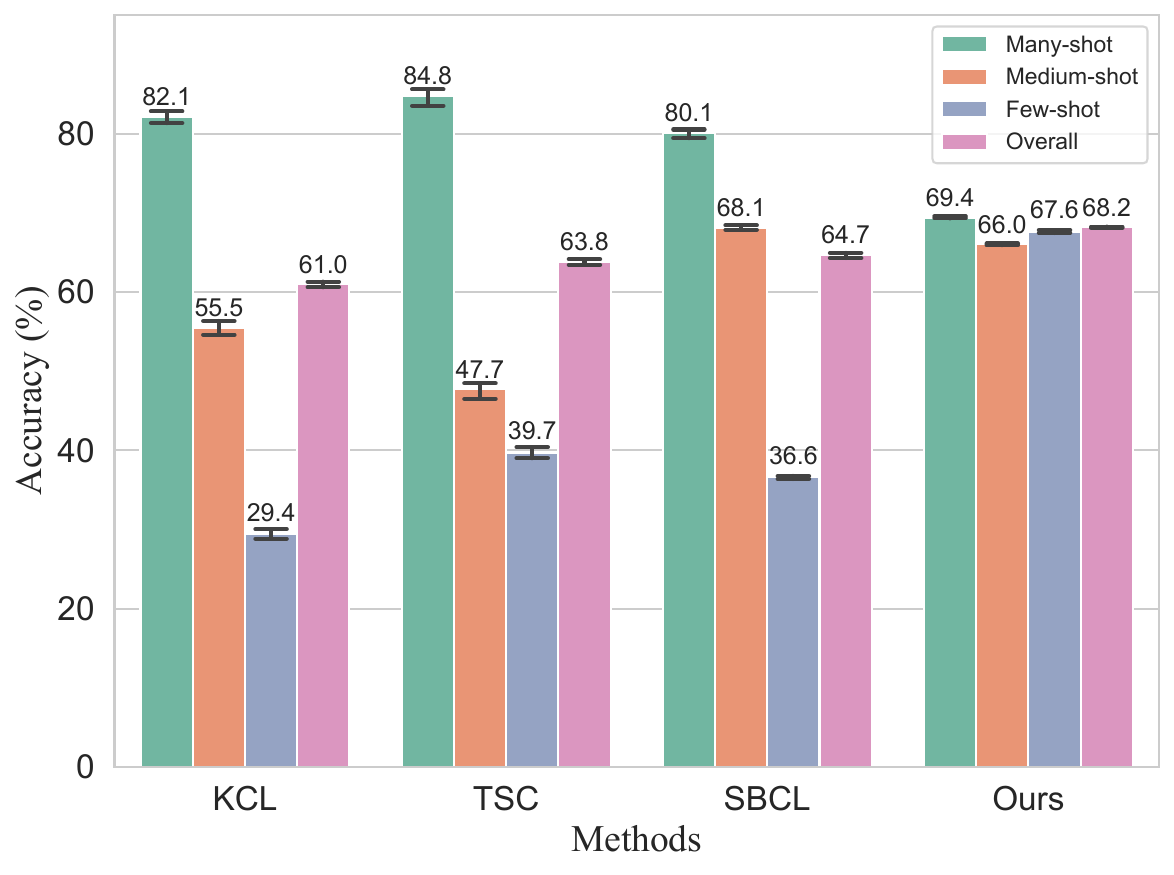}
        \caption{CIFAR-10-LT}
        \label{fig:exp-bns-linprobe-cifar10}
    \end{subfigure}
    \begin{subfigure}[t]{0.23\textwidth}
        \centering
        \includegraphics[width=\textwidth]{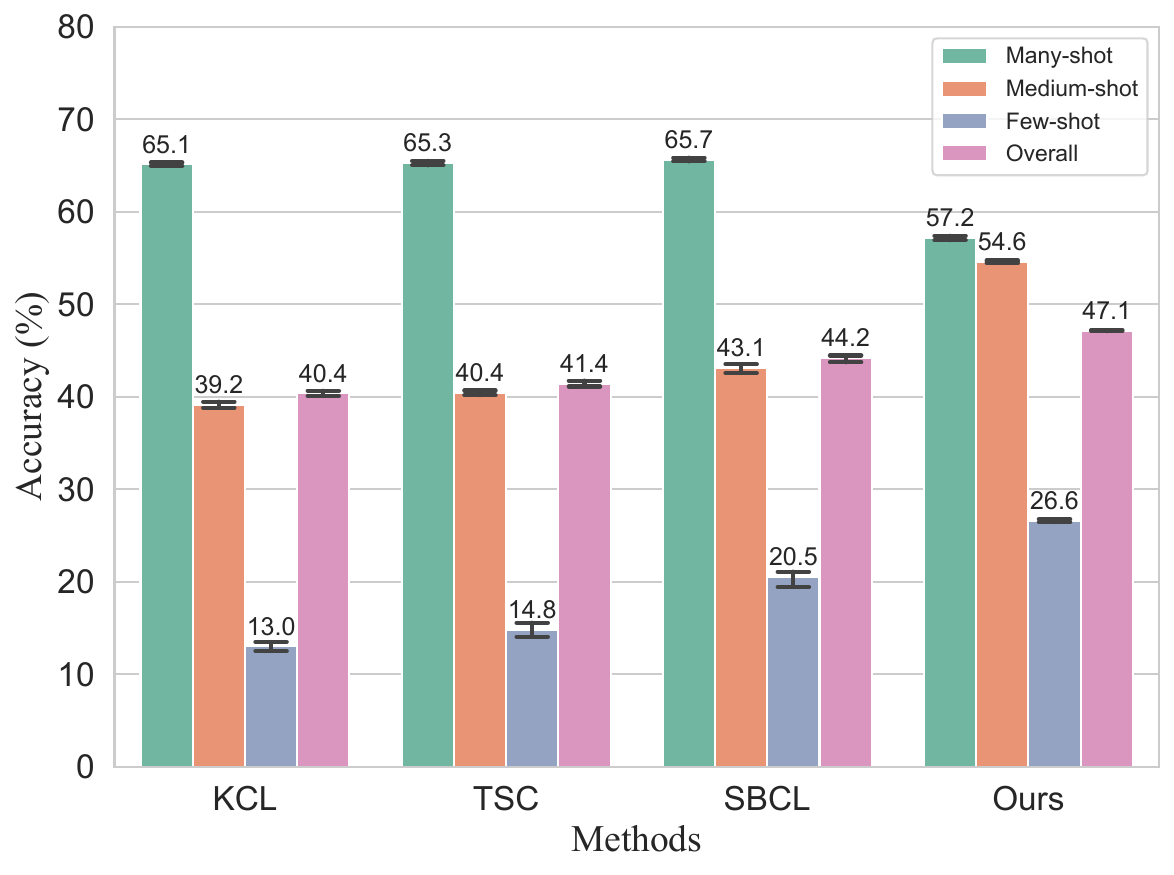}
        \caption{CIFAR-100-LT}
        \label{fig:exp-bns-linprobe-cifar100}
    \end{subfigure}
    \vspace{-0.5 em}
    \caption{
        Linear probing accuracies on (a) CIFAR-10-LT and (b) CIFAR-100-LT datasets.
    }
    \label{fig:exp-bns-linprob}
    \vspace{-0.5 em}
\end{wrapfigure}
This is because maximizing the mutual information expressed in \eqref{eq:obj} is equivalent to minimizing the intra-class distance, which, in turn,  enhances the quality of feature representations.
Existing methods for long-tailed data often exhibit an undesirable bias toward head classes, leading to poor performance on tail classes, as evidenced by significant disparities between many-shot and few-shot accuracies:
$52.7 \%$ for KCL, $45.1 \%$ for TSC, and $43.5 \%$ for SBCL.
In contrast, our BNS method enjoys unbiased representation space, exhibiting similar performance on many-shot and few-shot accuracies (\ie\ $69.4 \%$ vs. $67.6 \%$).
Similarly, our approach achieves the highest linear probing accuracy of $47.1\%$ on CIFAR-100-LT, surpassing KCL, TSC, and SBCL by $6.7\%$, $5.7\%$, and $2.9\%$, respectively.
Furthermore, our approach effectively mitigates the majority-biased tendency.
For instance, compared to SBCL, our method achieves substantial improvements of $11.5\%$ in medium-shot accuracy and $6.1 \%$ in few-shot accuracy, with only a modest $8.5\%$ reduction in many-shot accuracy.

\begin{figure}  [!t] 
    \centering
        \begin{subfigure}[t]{0.23\textwidth}
        \centering
        \includegraphics[width=\textwidth]{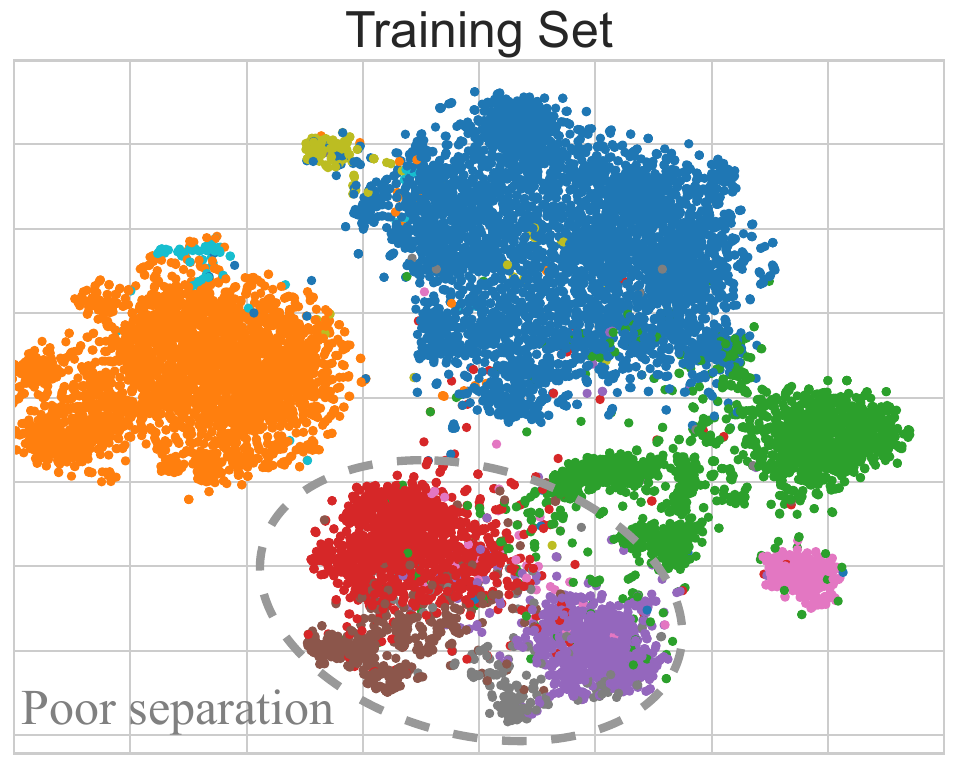}
        \caption{SBCL}
        \label{fig:exp-tsne-sbcl-train}
    \end{subfigure}
    \begin{subfigure}[t]{0.23\textwidth}
        \centering
        \includegraphics[width=\textwidth]{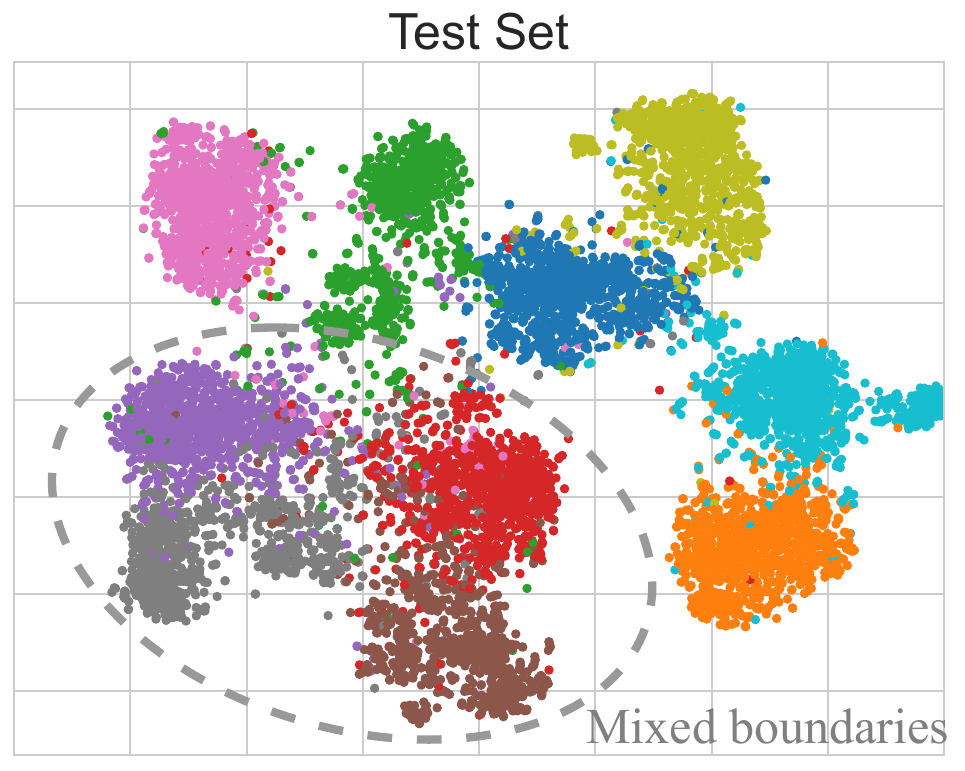}
        \caption{SBCL}
        \label{fig:exp-tsne-sbcl-test}
    \end{subfigure}
    \begin{subfigure}[t]{0.23\textwidth}
        \centering
        \includegraphics[width=\textwidth]{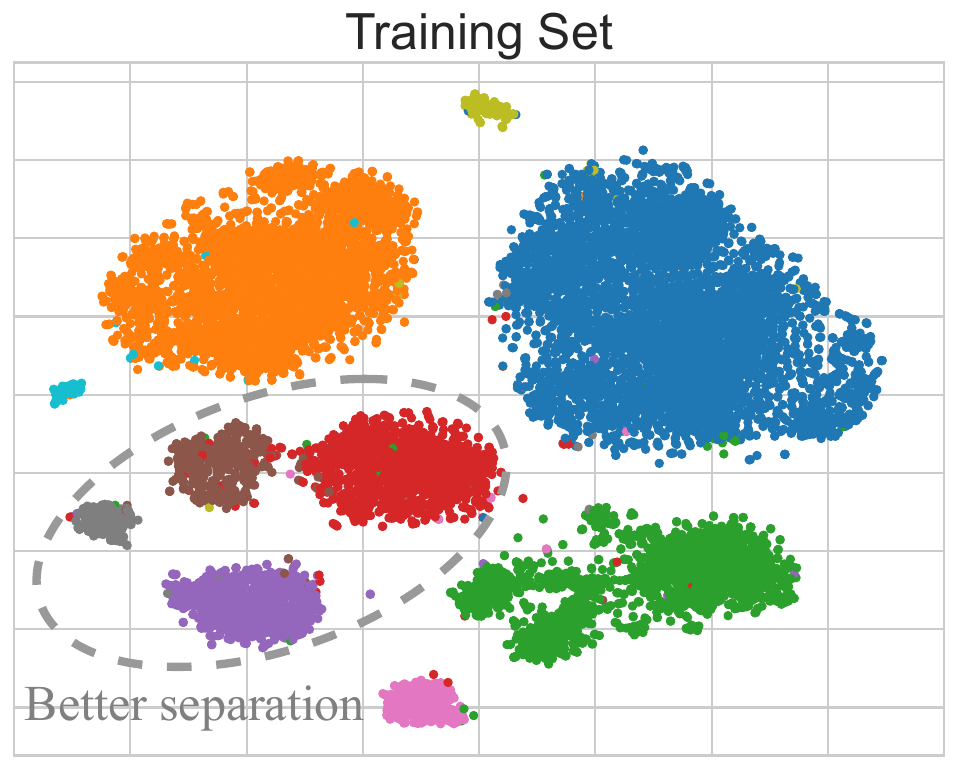}
        \caption{Ours}
        \label{fig:exp-tsne-bns-train}
    \end{subfigure}
    \begin{subfigure}[t]{0.23\textwidth}
        \centering
        \includegraphics[width=\textwidth]{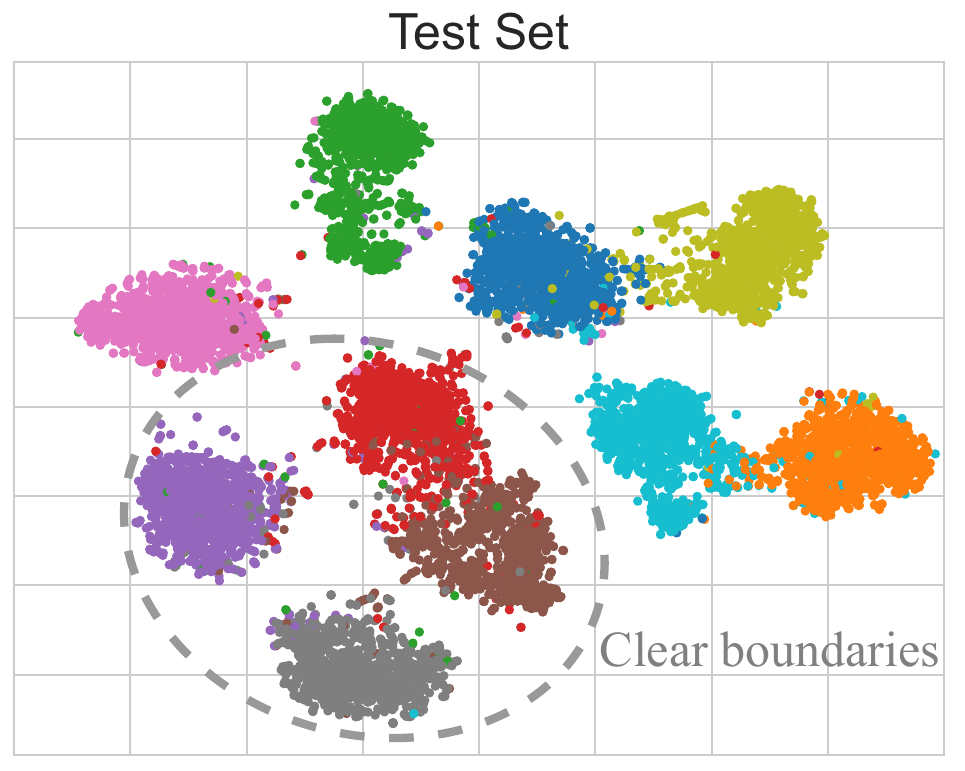}
        \caption{Ours}
        \label{fig:exp-tsne-bns-test}
    \end{subfigure}
    \vspace{-0.5em}
    \caption{
        t-SNE visualization of CIFAR-10 feature space. (a) and (b): visual representation
        learned by SBCL, as well as (c) and (d): visual representation captured by our approach.
    }
    \label{fig:exp-tsne-cifar10}
    \vspace{-1.0 em}
\end{figure}

{\bf Qualitative Evaluation.}
To gain a deeper understanding of our BNS method's role in representation learning, we utilize t-SNE~\cite{maaten:jmlr08:t-sne} to visualize the feature spaces learned by SBCL and our BNS approach.  
Figure~\ref{fig:exp-tsne-cifar10} illustrates the feature spaces for the CIFAR-10 training and test sets.  
The training representation space learned by SBCL exhibits poor separation for medium and tail classes (see Figure~\ref{fig:exp-tsne-sbcl-train}), 
leading to overlapping boundaries among these classes in the test representation space (see Figure~\ref{fig:exp-tsne-sbcl-test}).  
In contrast, our approach achieves improved separation for medium and tail classes in the training representation space (see Figure~\ref{fig:exp-tsne-bns-train}), leading to clear and well-defined boundaries for these classes in the test representation space (see Figure~\ref{fig:exp-tsne-bns-test}).
This improvement stems from our method's ability to capture class-level semantics, 
which promotes the development of well-separated representation spaces.
%
%
The differences in decision boundaries between SBCL and our approach elucidate why SBCL achieves poor linear probing accuracies on medium and tail classes, whereas our approach demonstrates superior performance on these classes (see Figures~\ref{fig:exp-bns-linprobe-cifar10} and \ref{fig:exp-bns-linprobe-cifar100} for details).

\subsection{Performance Results under Various Imbalanced Factors}
\label{sec:if}

This section presents performance results under various imbalance factors (IF).
Here, we consider five IF values ranging from $10$ to $200$.
Table~\ref{tab:exp-if-cifar10-cifar100} shows comparative results on CIFAR-10-LT and CIFAR-100-LT,
where we compare our approach with TSC, SBCL, OTmix, and DisA.
On both datasets,
our approach achieves the highest overall accuracies across all IF values.
For instance, when the IF is set to $200$, our method achieves an accuracy of $73.5 \%$ on CIFAR-10-LT and $46.7 \%$ on CIFAR-100-LT.
Moreover, our method demonstrates greater robustness to large imbalance factors.
For example, when the IF value increases from $10$ to $200$, our approach experiences a performance drop of $10.2 \%$ (\ie\ $83.7 \%$ vs. $73.5 \%$) on CIFAR-10-LT, whereas baseline methods suffer larger decreases, with at least a $15.2 \%$ drop (see DisA, $82.4 \%$ vs. $67.2 \%$).


\begin{table}[!t]
    \scriptsize
    \centering
    \setlength\tabcolsep{8 pt}
    \caption{
        Experimental results under various imbalanced factors (IF), with the best results shown in bold
        }
    \begin{tabular}{@{}c|ccccc|ccccc@{}}
    \toprule
    \multirow{2}{*}{Methods} & \multicolumn{5}{c|}{CIFAR-10-LT}                                               & \multicolumn{5}{c}{CIFAT-100-LT}                                              \\ \cmidrule(lr){2-6} \cmidrule(lr){7-11} 
                             & IF=10         & IF=20         & IF=50         & IF=100        & IF=200        & IF=10         & IF=20         & IF=50         & IF=100        & IF=200        \\ \midrule
    TSC                      & 80.6          & 76.4          & 72.9          & 71.9          & 63.7          & 57.3          & 51.1          & 48.3          & 43.5          & 40.3          \\
    SBCL                     & 80.5          & 78.6          & 74.1          & 72.6          & 64.3          & 58.3          & 51.8          & 50.9          & 48.5          & 41.4          \\
    OTmix                    & 81.3          & 80.3          & 74.4          & 73.8          & 65.8          & 59.1          & 55.7          & 52.2          & 48.1          & 42.7          \\
    DisA                     & 82.4          & 81.0          & 75.7          & 73.6          & 67.2          & 60.4          & 53.3          & 52.3          & 49.2          & 42.9          \\ \midrule
    Ours                     & \textbf{83.7} & \textbf{82.4} & \textbf{81.9} & \textbf{76.4} & \textbf{73.5} & \textbf{62.6} & \textbf{59.8} & \textbf{55.9} & \textbf{52.4} & \textbf{46.7} \\ \bottomrule
    \end{tabular}
    \label{tab:exp-if-cifar10-cifar100}
    \vspace{-1.0 em}
\end{table}

Additional experimental results are provided in Appendices~\ref{sup:sec:app_lt}–\ref{sup:sec:hs}. 
These include 
adaptability of our approach, applicability across different model architectures, evaluation of computational overhead, ablation studies, and hyperparameter sensitivity.

\section{Conclusion} 
\label{sec:conclusion}
This work has addressed the challenging long-tailed data classification problem, 
by proposing a novel information-preservable two-stage learning approach.
Our key contributions include: 
\romsm{1}) Balanced Negative Sampling (BNS), a new representation learning strategy that effectively captures both instance-level and class-level semantics, facilitating the creation of high-quality feature representations and well-separated feature spaces;
and \romsm{2}) Information-Preservable Determinantal Point Process (IP-DPP), a novel sampling technique designed to select mathematically informative instances, effectively rectifying majority-biased decision boundaries while maintaining the model's overall performance.
As such, our approach achieves state-of-the-art performance across various long-tailed datasets by preserving the valuable information within the entire dataset, allowing it to consistently and decisively outperform its counterparts.

\section*{Acknowledgments}
This work was supported in part by NSF under Grants 2315613, 2438898, and 2348452.
Any opinions and findings expressed in the paper are those of the authors and do not necessarily reflect the views of funding agencies.

\bibliography{main}

\begin{thebibliography}{10}

\bibitem{bao:iclr22:beit}
Hangbo Bao, Li~Dong, Songhao Piao, and Furu Wei.
\newblock Beit: {BERT} pre-training of image transformers.
\newblock In {\em International Conference on Learning Representations (ICLR)}, 2022.

\bibitem{borodin2005l-ensembles}
Alexei Borodin and Eric~M Rains.
\newblock Eynard--mehta theorem, schur process, and their pfaffian analogs.
\newblock {\em Journal of statistical physics}, 121(3):291--317, 2005.

\bibitem{cao19ldam}
Kaidi Cao, Colin Wei, Adrien Gaidon, Nikos Ar{\'{e}}chiga, and Tengyu Ma.
\newblock Learning imbalanced datasets with label-distribution-aware margin loss.
\newblock In {\em NeurIPS}, 2019.

\bibitem{chawla02smote}
Nitesh~V. Chawla, Kevin~W. Bowyer, Lawrence~O. Hall, and W.~Philip Kegelmeyer.
\newblock {SMOTE:} synthetic minority over-sampling technique.
\newblock {\em J. Artif. Intell. Res.}, 2002.

\bibitem{chen:icml20:simclr}
Ting Chen, Simon Kornblith, Mohammad Norouzi, and Geoffrey~E. Hinton.
\newblock A simple framework for contrastive learning of visual representations.
\newblock In {\em International Conference on Machine Learning (ICML)}, 2020.

\bibitem{chen:nips20:simclr_v2}
Ting Chen, Simon Kornblith, Kevin Swersky, Mohammad Norouzi, and Geoffrey~E. Hinton.
\newblock Big self-supervised models are strong semi-supervised learners.
\newblock In {\em Neural Information Processing Systems (NeurIPS)}, 2020.

\bibitem{chen:iccv21:moco_v3}
Xinlei Chen, Saining Xie, and Kaiming He.
\newblock An empirical study of training self-supervised vision transformers.
\newblock In {\em International Conference on Computer Vision (ICCV)}, pages 9620--9629, 2021.

\bibitem{clark:iclr20:electra}
Kevin Clark, Minh{-}Thang Luong, Quoc~V. Le, and Christopher~D. Manning.
\newblock {ELECTRA:} pre-training text encoders as discriminators rather than generators.
\newblock In {\em International Conference on Learning Representations (ICLR)}, 2020.

\bibitem{cui21paco}
Jiequan Cui, Zhisheng Zhong, Shu Liu, Bei Yu, and Jiaya Jia.
\newblock Parametric contrastive learning.
\newblock In {\em ICCV}, 2021.

\bibitem{cui19cbloss}
Yin Cui, Menglin Jia, Tsung{-}Yi Lin, Yang Song, and Serge~J. Belongie.
\newblock Class-balanced loss based on effective number of samples.
\newblock In {\em CVPR}, 2019.

\bibitem{cui18large}
Yin Cui, Yang Song, Chen Sun, Andrew Howard, and Serge~J. Belongie.
\newblock Large scale fine-grained categorization and domain-specific transfer learning.
\newblock In {\em CVPR}, 2018.

\bibitem{deng09imagenet}
Jia Deng, Wei Dong, Richard Socher, Li{-}Jia Li, Kai Li, and Li~Fei{-}Fei.
\newblock Imagenet: {A} large-scale hierarchical image database.
\newblock In {\em CVPR}, 2009.

\bibitem{dosovitskiy:iclr21:vit}
Alexey Dosovitskiy, Lucas Beyer, Alexander Kolesnikov, Dirk Weissenborn, Xiaohua Zhai, Thomas Unterthiner, Mostafa Dehghani, Matthias Minderer, Georg Heigold, Sylvain Gelly, Jakob Uszkoreit, and Neil Houlsby.
\newblock An image is worth 16x16 words: Transformers for image recognition at scale.
\newblock In {\em International Conference on Learning Representations (ICLR)}, 2021.

\bibitem{gao:icml24:disa}
Jintong Gao, He~Zhao, Dandan Guo, and Hongyuan Zha.
\newblock Distribution alignment optimization through neural collapse for long-tailed classification.
\newblock In {\em International Conference on Machine Learning (ICML)}, 2024.

\bibitem{gao:nips23:ot}
Jintong Gao, He~Zhao, Zhuo Li, and Dandan Guo.
\newblock Enhancing minority classes by mixing: An adaptative optimal transport approach for long-tailed classification.
\newblock In {\em Neural Information Processing Systems (NeurIPS)}, 2023.

\bibitem{gu:colm24:mamba}
Albert Gu and Tri Dao.
\newblock Mamba: Linear-time sequence modeling with selective state spaces.
\newblock {\em arXiv preprint arXiv:2312.00752}, 2023.

\bibitem{guo19dvaan}
Ting Guo, Xingquan Zhu, Yang Wang, and Fang Chen.
\newblock Discriminative sample generation for deep imbalanced learning.
\newblock In {\em IJCAI}, 2019.

\bibitem{gutmann:aistats10:nce}
Michael Gutmann and Aapo Hyv{\"{a}}rinen.
\newblock Noise-contrastive estimation: {A} new estimation principle for unnormalized statistical models.
\newblock In {\em International Conference on Artificial Intelligence and Statistics (AISTATS)}, 2010.

\bibitem{han05boederline_smote}
Hui Han, Wenyuan Wang, and Binghuan Mao.
\newblock Borderline-smote: {A} new over-sampling method in imbalanced data sets learning.
\newblock In {\em Advances in Intelligent Computing, International Conference on Intelligent Computing(ICIC)}, volume 3644, 2005.

\bibitem{he08adasyn}
Haibo He, Yang Bai, Edwardo~A. Garcia, and Shutao Li.
\newblock {ADASYN:} adaptive synthetic sampling approach for imbalanced learning.
\newblock In {\em IJCNN}, 2008.

\bibitem{he2009random}
Haibo He and Edwardo~A Garcia.
\newblock Learning from imbalanced data.
\newblock {\em TKDE}, 2009.

\bibitem{kaiming:cvpr22:mae}
Kaiming He, Xinlei Chen, Saining Xie, Yanghao Li, Piotr Doll{\'{a}}r, and Ross~B. Girshick.
\newblock Masked autoencoders are scalable vision learners.
\newblock In {\em Computer Vision and Pattern Recognition (CVPR)}, pages 15979--15988, 2022.

\bibitem{he:cvpr20:moco}
Kaiming He, Haoqi Fan, Yuxin Wu, Saining Xie, and Ross~B. Girshick.
\newblock Momentum contrast for unsupervised visual representation learning.
\newblock In {\em Conference on Computer Vision and Pattern Recognition (CVPR)}, 2020.

\bibitem{kaiming:cvpr16:resnet}
Kaiming He, Xiangyu Zhang, Shaoqing Ren, and Jian Sun.
\newblock Deep residual learning for image recognition.
\newblock In {\em Computer Vision and Pattern Recognition (CVPR)}, pages 770--778, 2016.

\bibitem{he21interpretable}
Yi~He, Fudong Lin, Xu~Yuan, and Nian{-}Feng Tzeng.
\newblock Interpretable minority synthesis for imbalanced classification.
\newblock In {\em IJCAI}, 2021.

\bibitem{horn18inaturalist}
Grant~Van Horn, Oisin~Mac Aodha, Yang Song, Yin Cui, Chen Sun, Alexander Shepard, Hartwig Adam, Pietro Perona, and Serge~J. Belongie.
\newblock The inaturalist species classification and detection dataset.
\newblock In {\em CVPR}, 2018.

\bibitem{hou:iccv23:sbcl}
Chengkai Hou, Jieyu Zhang, Haonan Wang, and Tianyi Zhou.
\newblock Subclass-balancing contrastive learning for long-tailed recognition.
\newblock In {\em International Conference on Computer Vision (ICCV)}, 2023.

\bibitem{huang16learning}
Chen Huang, Yining Li, Chen~Change Loy, and Xiaoou Tang.
\newblock Learning deep representation for imbalanced classification.
\newblock In {\em CVPR}, 2016.

\bibitem{kang:iclr21:bcl}
Bingyi Kang, Yu~Li, Sa~Xie, Zehuan Yuan, and Jiashi Feng.
\newblock Exploring balanced feature spaces for representation learning.
\newblock In {\em International Conference on Learning Representations (ICLR)}, 2021.

\bibitem{kang20decoupling}
Bingyi Kang, Saining Xie, Marcus Rohrbach, Zhicheng Yan, Albert Gordo, Jiashi Feng, and Yannis Kalantidis.
\newblock Decoupling representation and classifier for long-tailed recognition.
\newblock In {\em ICLR}, 2020.

\bibitem{alex:nips12:alex_net}
Alex Krizhevsky, Ilya Sutskever, and Geoffrey~E. Hinton.
\newblock Imagenet classification with deep convolutional neural networks.
\newblock In Peter~L. Bartlett, Fernando C.~N. Pereira, Christopher J.~C. Burges, L{\'{e}}on Bottou, and Kilian~Q. Weinberger, editors, {\em Neural Information Processing Systems (NeurIPS)}, 2012.

\bibitem{kulesza11kdpp}
Alex Kulesza and Ben Taskar.
\newblock k-dpps: Fixed-size determinantal point processes.
\newblock In {\em ICML}, 2011.

\bibitem{kulesza12dpp}
Alex Kulesza and Ben Taskar.
\newblock Determinantal point processes for machine learning.
\newblock {\em Found. Trends Mach. Learn.}, 2012.

\bibitem{li:cvpr22:kcl}
Tianhong Li, Peng Cao, Yuan Yuan, Lijie Fan, Yuzhe Yang, Rog{\'{e}}rio Feris, Piotr Indyk, and Dina Katabi.
\newblock Targeted supervised contrastive learning for long-tailed recognition.
\newblock In {\em Computer Vision and Pattern Recognition (CVPR)}, 2022.

\bibitem{li:cvpr22:tsc}
Tianhong Li, Peng Cao, Yuan Yuan, Lijie Fan, Yuzhe Yang, Rog{\'{e}}rio Feris, Piotr Indyk, and Dina Katabi.
\newblock Targeted supervised contrastive learning for long-tailed recognition.
\newblock In {\em Computer Vision and Pattern Recognition (CVPR)}, 2022.

\bibitem{li:cvpr23:scaling}
Yanghao Li, Haoqi Fan, Ronghang Hu, Christoph Feichtenhofer, and Kaiming He.
\newblock Scaling language-image pre-training via masking.
\newblock In {\em Computer Vision and Pattern Recognition (CVPR)}, pages 23390--23400, 2023.

\bibitem{fudong:iccv23:mmst_vit}
Fudong Lin, Summer Crawford, Kaleb Guillot, Yihe Zhang, Yan Chen, Xu~Yuan, et~al.
\newblock Mmst-vit: Climate change-aware crop yield prediction via multi-modal spatial-temporal vision transformer.
\newblock In {\em International Conference on Computer Vision (ICCV)}, pages 5751--5761, 2023.

\bibitem{lin17focal}
Tsung{-}Yi Lin, Priya Goyal, Ross~B. Girshick, Kaiming He, and Piotr Doll{\'{a}}r.
\newblock Focal loss for dense object detection.
\newblock In {\em ICCV}, 2017.

\bibitem{liu:iclr22:rw_sam}
Hong Liu, Jeff~Z. HaoChen, Adrien Gaidon, and Tengyu Ma.
\newblock Self-supervised learning is more robust to dataset imbalance.
\newblock In {\em International Conference on Learning Representations (ICLR)}, 2022.

\bibitem{liu09informed}
Xu{-}Ying Liu, Jianxin Wu, and Zhi{-}Hua Zhou.
\newblock Exploratory undersampling for class-imbalance learning.
\newblock {\em IEEE Transactions on Systems, Man, and Cybernetics, Part B (Cybernetics)}, 2009.

\bibitem{liu:iccv21:swin}
Ze~Liu, Yutong Lin, Yue Cao, Han Hu, Yixuan Wei, Zheng Zhang, Stephen Lin, and Baining Guo.
\newblock Swin transformer: Hierarchical vision transformer using shifted windows.
\newblock In {\em International Conference on Computer Vision (ICCV)}, pages 9992--10002, 2021.

\bibitem{liu:iclr25:kan}
Ziming Liu, Yixuan Wang, Sachin Vaidya, Fabian Ruehle, James Halverson, Marin Solja{\v{c}}i{\'c}, Thomas~Y Hou, and Max Tegmark.
\newblock Kan: Kolmogorov-arnold networks.
\newblock {\em arXiv preprint arXiv:2404.19756}, 2024.

\bibitem{liu19open}
Ziwei Liu, Zhongqi Miao, Xiaohang Zhan, Jiayun Wang, Boqing Gong, and Stella~X. Yu.
\newblock Large-scale long-tailed recognition in an open world.
\newblock In {\em CVPR}, 2019.

\bibitem{ilya:iclr17:sgdr}
Ilya Loshchilov and Frank Hutter.
\newblock {SGDR:} stochastic gradient descent with warm restarts.
\newblock In {\em International Conference on Learning Representations (ICLR)}, 2017.

\bibitem{ilya:iclr19:adamw}
Ilya Loshchilov and Frank Hutter.
\newblock Decoupled weight decay regularization.
\newblock In {\em International Conference on Learning Representations (ICLR)}, 2019.

\bibitem{meila:colt03:vi}
Marina Meila.
\newblock Comparing clusterings by the variation of information.
\newblock In {\em Computational Learning Theory and Kernel Machines}, 2003.

\bibitem{menon21logit}
Aditya~Krishna Menon, Sadeep Jayasumana, Ankit~Singh Rawat, Himanshu Jain, Andreas Veit, and Sanjiv Kumar.
\newblock Long-tail learning via logit adjustment.
\newblock In {\em ICLR}, 2021.

\bibitem{mikolov:nips12:neg}
Tom{\'{a}}s Mikolov, Ilya Sutskever, Kai Chen, Gregory~S. Corrado, and Jeffrey Dean.
\newblock Distributed representations of words and phrases and their compositionality.
\newblock In {\em Neural Information Processing Systems (NeurIPS)}, 2013.

\bibitem{mullick19gamo}
Sankha~Subhra Mullick, Shounak Datta, and Swagatam Das.
\newblock Generative adversarial minority oversampling.
\newblock In {\em ICCV}, 2019.

\bibitem{ojha20elastic}
Utkarsh Ojha, Krishna~Kumar Singh, Cho{-}Jui Hsieh, and Yong~Jae Lee.
\newblock Elastic-infogan: Unsupervised disentangled representation learning in class-imbalanced data.
\newblock In {\em NeurIPS}, 2020.

\bibitem{ouyang16factors}
Wanli Ouyang, Xiaogang Wang, Cong Zhang, and Xiaokang Yang.
\newblock Factors in finetuning deep model for object detection with long-tail distribution.
\newblock In {\em CVPR}, 2016.

\bibitem{shang:neurips23:mim4dd}
Yuzhang Shang, Zhihang Yuan, and Yan Yan.
\newblock {MIM4DD:} mutual information maximization for dataset distillation.
\newblock In {\em Neural Information Processing Systems (NeurIPS)}, 2023.

\bibitem{simonyan:iclr15:vgg}
Karen Simonyan and Andrew Zisserman.
\newblock Very deep convolutional networks for large-scale image recognition.
\newblock In Yoshua Bengio and Yann LeCun, editors, {\em International Conference on Learning Representations (ICLR)}, 2015.

\bibitem{thapaliya2024ecgn}
Bishal Thapaliya, Anh Nguyen, Yao Lu, Tian Xie, Igor Grudetskyi, Fudong Lin, Antonios Valkanas, Jingyu Liu, Deepayan Chakraborty, and Bilel Fehri.
\newblock Ecgn: A cluster-aware approach to graph neural networks for imbalanced classification.
\newblock {\em arXiv preprint arXiv:2410.11765}, 2024.

\bibitem{touvron:icml21:deit}
Hugo Touvron, Matthieu Cord, Matthijs Douze, Francisco Massa, Alexandre Sablayrolles, and Herv{\'{e}} J{\'{e}}gou.
\newblock Training data-efficient image transformers {\&} distillation through attention.
\newblock In {\em International Conference on Machine Learning (ICML)}, volume 139, pages 10347--10357, 2021.

\bibitem{maaten:jmlr08:t-sne}
Laurens van~der Maaten and Geoffrey Hinton.
\newblock Visualizing data using t-sne.
\newblock {\em Journal of Machine Learning Research}, 2008.

\bibitem{vaswani:nips17:attn}
Ashish Vaswani, Noam Shazeer, Niki Parmar, Jakob Uszkoreit, Llion Jones, Aidan~N. Gomez, Lukasz Kaiser, and Illia Polosukhin.
\newblock Attention is all you need.
\newblock In {\em Neural Information Processing Systems (NeurIPS)}, pages 5998--6008, 2017.

\bibitem{wang2020dgc}
Xinyue Wang, Yilin Lyu, and Liping Jing.
\newblock Deep generative model for robust imbalance classification.
\newblock In {\em CVPR}, 2020.

\bibitem{wang:iclr21:ride}
Xudong Wang, Long Lian, Zhongqi Miao, Ziwei Liu, and Stella~X. Yu.
\newblock Long-tailed recognition by routing diverse distribution-aware experts.
\newblock In {\em International Conference on Learning Representations (ICLR)}, 2021.

\bibitem{yang20rethinking}
Yuzhe Yang and Zhi Xu.
\newblock Rethinking the value of labels for improving class-imbalanced learning.
\newblock In {\em NeurIPS}, 2020.

\bibitem{yin20novel}
Jian Yin, Chunjing Gan, Kaiqi Zhao, Xuan Lin, Zhe Quan, and Zhi{-}Jie Wang.
\newblock A novel model for imbalanced data classification.
\newblock In {\em AAAI}, 2020.

\bibitem{zhou20bbn}
Boyan Zhou, Quan Cui, Xiu{-}Shen Wei, and Zhao{-}Min Chen.
\newblock {BBN:} bilateral-branch network with cumulative learning for long-tailed visual recognition.
\newblock In {\em CVPR}, 2020.

\end{thebibliography}
\bibliographystyle{plain}


\appendix


\newpage

\section*{NeurIPS Paper Checklist}

\begin{enumerate}

\item {\bf Claims}
    \item[] Question: Do the main claims made in the abstract and introduction accurately reflect the paper's contributions and scope?
    \item[] Answer: \answerYes{} 
    \item[] Guidelines:
    \begin{itemize}
        \item The answer NA means that the abstract and introduction do not include the claims made in the paper.
        \item The abstract and/or introduction should clearly state the claims made, including the contributions made in the paper and important assumptions and limitations. A No or NA answer to this question will not be perceived well by the reviewers. 
        \item The claims made should match theoretical and experimental results, and reflect how much the results can be expected to generalize to other settings. 
        \item It is fine to include aspirational goals as motivation as long as it is clear that these goals are not attained by the paper. 
    \end{itemize}

\item {\bf Limitations}
    \item[] Question: Does the paper discuss the limitations of the work performed by the authors?
    \item[] Answer:  \answerYes{} 
    \item[] Guidelines:
    \begin{itemize}
        \item The answer NA means that the paper has no limitation while the answer No means that the paper has limitations, but those are not discussed in the paper. 
        \item The authors are encouraged to create a separate "Limitations" section in their paper.
        \item The paper should point out any strong assumptions and how robust the results are to violations of these assumptions (e.g., independence assumptions, noiseless settings, model well-specification, asymptotic approximations only holding locally). The authors should reflect on how these assumptions might be violated in practice and what the implications would be.
        \item The authors should reflect on the scope of the claims made, e.g., if the approach was only tested on a few datasets or with a few runs. In general, empirical results often depend on implicit assumptions, which should be articulated.
        \item The authors should reflect on the factors that influence the performance of the approach. For example, a facial recognition algorithm may perform poorly when image resolution is low or images are taken in low lighting. Or a speech-to-text system might not be used reliably to provide closed captions for online lectures because it fails to handle technical jargon.
        \item The authors should discuss the computational efficiency of the proposed algorithms and how they scale with dataset size.
        \item If applicable, the authors should discuss possible limitations of their approach to address problems of privacy and fairness.
        \item While the authors might fear that complete honesty about limitations might be used by reviewers as grounds for rejection, a worse outcome might be that reviewers discover limitations that aren't acknowledged in the paper. The authors should use their best judgment and recognize that individual actions in favor of transparency play an important role in developing norms that preserve the integrity of the community. Reviewers will be specifically instructed to not penalize honesty concerning limitations.
    \end{itemize}

\item {\bf Theory assumptions and proofs}
    \item[] Question: For each theoretical result, does the paper provide the full set of assumptions and a complete (and correct) proof?
    \item[] Answer: \answerYes{} 
    \item[] Guidelines:
    \begin{itemize}
        \item The answer NA means that the paper does not include theoretical results. 
        \item All the theorems, formulas, and proofs in the paper should be numbered and cross-referenced.
        \item All assumptions should be clearly stated or referenced in the statement of any theorems.
        \item The proofs can either appear in the main paper or the supplemental material, but if they appear in the supplemental material, the authors are encouraged to provide a short proof sketch to provide intuition. 
        \item Inversely, any informal proof provided in the core of the paper should be complemented by formal proofs provided in appendix or supplemental material.
        \item Theorems and Lemmas that the proof relies upon should be properly referenced. 
    \end{itemize}

    \item {\bf Experimental result reproducibility}
    \item[] Question: Does the paper fully disclose all the information needed to reproduce the main experimental results of the paper to the extent that it affects the main claims and/or conclusions of the paper (regardless of whether the code and data are provided or not)?
    \item[] Answer: \answerYes{} 
    \item[] Guidelines:
    \begin{itemize}
        \item The answer NA means that the paper does not include experiments.
        \item If the paper includes experiments, a No answer to this question will not be perceived well by the reviewers: Making the paper reproducible is important, regardless of whether the code and data are provided or not.
        \item If the contribution is a dataset and/or model, the authors should describe the steps taken to make their results reproducible or verifiable. 
        \item Depending on the contribution, reproducibility can be accomplished in various ways. For example, if the contribution is a novel architecture, describing the architecture fully might suffice, or if the contribution is a specific model and empirical evaluation, it may be necessary to either make it possible for others to replicate the model with the same dataset, or provide access to the model. In general. releasing code and data is often one good way to accomplish this, but reproducibility can also be provided via detailed instructions for how to replicate the results, access to a hosted model (e.g., in the case of a large language model), releasing of a model checkpoint, or other means that are appropriate to the research performed.
        \item While NeurIPS does not require releasing code, the conference does require all submissions to provide some reasonable avenue for reproducibility, which may depend on the nature of the contribution. For example
        \begin{enumerate}
            \item If the contribution is primarily a new algorithm, the paper should make it clear how to reproduce that algorithm.
            \item If the contribution is primarily a new model architecture, the paper should describe the architecture clearly and fully.
            \item If the contribution is a new model (e.g., a large language model), then there should either be a way to access this model for reproducing the results or a way to reproduce the model (e.g., with an open-source dataset or instructions for how to construct the dataset).
            \item We recognize that reproducibility may be tricky in some cases, in which case authors are welcome to describe the particular way they provide for reproducibility. In the case of closed-source models, it may be that access to the model is limited in some way (e.g., to registered users), but it should be possible for other researchers to have some path to reproducing or verifying the results.
        \end{enumerate}
    \end{itemize}

\item {\bf Open access to data and code}
    \item[] Question: Does the paper provide open access to the data and code, with sufficient instructions to faithfully reproduce the main experimental results, as described in supplemental material?
    \item[] Answer: \answerYes{} 
    \item[] Guidelines:
    \begin{itemize}
        \item The answer NA means that paper does not include experiments requiring code.
        \item Please see the NeurIPS code and data submission guidelines (\url{https://nips.cc/public/guides/CodeSubmissionPolicy}) for more details.
        \item While we encourage the release of code and data, we understand that this might not be possible, so “No” is an acceptable answer. Papers cannot be rejected simply for not including code, unless this is central to the contribution (e.g., for a new open-source benchmark).
        \item The instructions should contain the exact command and environment needed to run to reproduce the results. See the NeurIPS code and data submission guidelines (\url{https://nips.cc/public/guides/CodeSubmissionPolicy}) for more details.
        \item The authors should provide instructions on data access and preparation, including how to access the raw data, preprocessed data, intermediate data, and generated data, etc.
        \item The authors should provide scripts to reproduce all experimental results for the new proposed method and baselines. If only a subset of experiments are reproducible, they should state which ones are omitted from the script and why.
        \item At submission time, to preserve anonymity, the authors should release anonymized versions (if applicable).
        \item Providing as much information as possible in supplemental material (appended to the paper) is recommended, but including URLs to data and code is permitted.
    \end{itemize}

\item {\bf Experimental setting/details}
    \item[] Question: Does the paper specify all the training and test details (e.g., data splits, hyperparameters, how they were chosen, type of optimizer, etc.) necessary to understand the results?
    \item[] Answer: \answerYes{} 
    \item[] Guidelines:
    \begin{itemize}
        \item The answer NA means that the paper does not include experiments.
        \item The experimental setting should be presented in the core of the paper to a level of detail that is necessary to appreciate the results and make sense of them.
        \item The full details can be provided either with the code, in appendix, or as supplemental material.
    \end{itemize}

\item {\bf Experiment statistical significance}
    \item[] Question: Does the paper report error bars suitably and correctly defined or other appropriate information about the statistical significance of the experiments?
    \item[] Answer: \answerYes{} 
    \item[] Guidelines:
    \begin{itemize}
        \item The answer NA means that the paper does not include experiments.
        \item The authors should answer "Yes" if the results are accompanied by error bars, confidence intervals, or statistical significance tests, at least for the experiments that support the main claims of the paper.
        \item The factors of variability that the error bars are capturing should be clearly stated (for example, train/test split, initialization, random drawing of some parameter, or overall run with given experimental conditions).
        \item The method for calculating the error bars should be explained (closed form formula, call to a library function, bootstrap, etc.)
        \item The assumptions made should be given (e.g., Normally distributed errors).
        \item It should be clear whether the error bar is the standard deviation or the standard error of the mean.
        \item It is OK to report 1-sigma error bars, but one should state it. The authors should preferably report a 2-sigma error bar than state that they have a 96\% CI, if the hypothesis of Normality of errors is not verified.
        \item For asymmetric distributions, the authors should be careful not to show in tables or figures symmetric error bars that would yield results that are out of range (e.g. negative error rates).
        \item If error bars are reported in tables or plots, The authors should explain in the text how they were calculated and reference the corresponding figures or tables in the text.
    \end{itemize}

\item {\bf Experiments compute resources}
    \item[] Question: For each experiment, does the paper provide sufficient information on the computer resources (type of compute workers, memory, time of execution) needed to reproduce the experiments?
    \item[] Answer: \answerYes{} 
    \item[] Guidelines:
    \begin{itemize}
        \item The answer NA means that the paper does not include experiments.
        \item The paper should indicate the type of compute workers CPU or GPU, internal cluster, or cloud provider, including relevant memory and storage.
        \item The paper should provide the amount of compute required for each of the individual experimental runs as well as estimate the total compute. 
        \item The paper should disclose whether the full research project required more compute than the experiments reported in the paper (e.g., preliminary or failed experiments that didn't make it into the paper). 
    \end{itemize}
    
\item {\bf Code of ethics}
    \item[] Question: Does the research conducted in the paper conform, in every respect, with the NeurIPS Code of Ethics \url{https://neurips.cc/public/EthicsGuidelines}?
    \item[] Answer: \answerYes{} 
    \item[] Guidelines:
    \begin{itemize}
        \item The answer NA means that the authors have not reviewed the NeurIPS Code of Ethics.
        \item If the authors answer No, they should explain the special circumstances that require a deviation from the Code of Ethics.
        \item The authors should make sure to preserve anonymity (e.g., if there is a special consideration due to laws or regulations in their jurisdiction).
    \end{itemize}

\item {\bf Broader impacts}
    \item[] Question: Does the paper discuss both potential positive societal impacts and negative societal impacts of the work performed?
    \item[] Answer: \answerYes{} 
    \item[] Guidelines:
    \begin{itemize}
        \item The answer NA means that there is no societal impact of the work performed.
        \item If the authors answer NA or No, they should explain why their work has no societal impact or why the paper does not address societal impact.
        \item Examples of negative societal impacts include potential malicious or unintended uses (e.g., disinformation, generating fake profiles, surveillance), fairness considerations (e.g., deployment of technologies that could make decisions that unfairly impact specific groups), privacy considerations, and security considerations.
        \item The conference expects that many papers will be foundational research and not tied to particular applications, let alone deployments. However, if there is a direct path to any negative applications, the authors should point it out. For example, it is legitimate to point out that an improvement in the quality of generative models could be used to generate deepfakes for disinformation. On the other hand, it is not needed to point out that a generic algorithm for optimizing neural networks could enable people to train models that generate Deepfakes faster.
        \item The authors should consider possible harms that could arise when the technology is being used as intended and functioning correctly, harms that could arise when the technology is being used as intended but gives incorrect results, and harms following from (intentional or unintentional) misuse of the technology.
        \item If there are negative societal impacts, the authors could also discuss possible mitigation strategies (e.g., gated release of models, providing defenses in addition to attacks, mechanisms for monitoring misuse, mechanisms to monitor how a system learns from feedback over time, improving the efficiency and accessibility of ML).
    \end{itemize}
    
\item {\bf Safeguards}
    \item[] Question: Does the paper describe safeguards that have been put in place for responsible release of data or models that have a high risk for misuse (e.g., pretrained language models, image generators, or scraped datasets)?
    \item[] Answer: \answerNA{} 
    \item[] Guidelines:
    \begin{itemize}
        \item The answer NA means that the paper poses no such risks.
        \item Released models that have a high risk for misuse or dual-use should be released with necessary safeguards to allow for controlled use of the model, for example by requiring that users adhere to usage guidelines or restrictions to access the model or implementing safety filters. 
        \item Datasets that have been scraped from the Internet could pose safety risks. The authors should describe how they avoided releasing unsafe images.
        \item We recognize that providing effective safeguards is challenging, and many papers do not require this, but we encourage authors to take this into account and make a best faith effort.
    \end{itemize}

\item {\bf Licenses for existing assets}
    \item[] Question: Are the creators or original owners of assets (e.g., code, data, models), used in the paper, properly credited and are the license and terms of use explicitly mentioned and properly respected?
    \item[] Answer: \answerYes{} 
    \item[] Guidelines:
    \begin{itemize}
        \item The answer NA means that the paper does not use existing assets.
        \item The authors should cite the original paper that produced the code package or dataset.
        \item The authors should state which version of the asset is used and, if possible, include a URL.
        \item The name of the license (e.g., CC-BY 4.0) should be included for each asset.
        \item For scraped data from a particular source (e.g., website), the copyright and terms of service of that source should be provided.
        \item If assets are released, the license, copyright information, and terms of use in the package should be provided. For popular datasets, \url{paperswithcode.com/datasets} has curated licenses for some datasets. Their licensing guide can help determine the license of a dataset.
        \item For existing datasets that are re-packaged, both the original license and the license of the derived asset (if it has changed) should be provided.
        \item If this information is not available online, the authors are encouraged to reach out to the asset's creators.
    \end{itemize}

\item {\bf New assets}
    \item[] Question: Are new assets introduced in the paper well documented and is the documentation provided alongside the assets?
    \item[] Answer: \answerYes{} 
    \item[] Guidelines:
    \begin{itemize}
        \item The answer NA means that the paper does not release new assets.
        \item Researchers should communicate the details of the dataset/code/model as part of their submissions via structured templates. This includes details about training, license, limitations, etc. 
        \item The paper should discuss whether and how consent was obtained from people whose asset is used.
        \item At submission time, remember to anonymize your assets (if applicable). You can either create an anonymized URL or include an anonymized zip file.
    \end{itemize}

\item {\bf Crowdsourcing and research with human subjects}
    \item[] Question: For crowdsourcing experiments and research with human subjects, does the paper include the full text of instructions given to participants and screenshots, if applicable, as well as details about compensation (if any)? 
    \item[] Answer: \answerNA{} 
    \item[] Guidelines:
    \begin{itemize}
        \item The answer NA means that the paper does not involve crowdsourcing nor research with human subjects.
        \item Including this information in the supplemental material is fine, but if the main contribution of the paper involves human subjects, then as much detail as possible should be included in the main paper. 
        \item According to the NeurIPS Code of Ethics, workers involved in data collection, curation, or other labor should be paid at least the minimum wage in the country of the data collector. 
    \end{itemize}

\item {\bf Institutional review board (IRB) approvals or equivalent for research with human subjects}
    \item[] Question: Does the paper describe potential risks incurred by study participants, whether such risks were disclosed to the subjects, and whether Institutional Review Board (IRB) approvals (or an equivalent approval/review based on the requirements of your country or institution) were obtained?
    \item[] Answer: \answerNA{} 
    \item[] Guidelines:
    \begin{itemize}
        \item The answer NA means that the paper does not involve crowdsourcing nor research with human subjects.
        \item Depending on the country in which research is conducted, IRB approval (or equivalent) may be required for any human subjects research. If you obtained IRB approval, you should clearly state this in the paper. 
        \item We recognize that the procedures for this may vary significantly between institutions and locations, and we expect authors to adhere to the NeurIPS Code of Ethics and the guidelines for their institution. 
        \item For initial submissions, do not include any information that would break anonymity (if applicable), such as the institution conducting the review.
    \end{itemize}

\item {\bf Declaration of LLM usage}
    \item[] Question: Does the paper describe the usage of LLMs if it is an important, original, or non-standard component of the core methods in this research? Note that if the LLM is used only for writing, editing, or formatting purposes and does not impact the core methodology, scientific rigorousness, or originality of the research, declaration is not required.
    \item[] Answer: \answerNA{} 
    \item[] Justification: LLM is only used for polishing the writing.
    \item[] Guidelines:
    \begin{itemize}
        \item The answer NA means that the core method development in this research does not involve LLMs as any important, original, or non-standard components.
        \item Please refer to our LLM policy (\url{https://neurips.cc/Conferences/2025/LLM}) for what should or should not be described.
    \end{itemize}

\end{enumerate}

\newpage

\section*{Outline}

This document serves as supplementary material to the main paper, providing additional support in two key aspects.
First, Appendix~\ref{sup:sec:proof} presents detailed proofs for the theorems and lemmas presented in the main paper.
Second, Appendix~\ref{sup:sec:exp} includes additional experimental results that reinforce the findings.
Third, Appendix~\ref{sec:impact} discusses both potential positive societal impacts and negative societal impacts of this work.

\section{Proofs of Theoretical Results}
\label{sup:sec:proof}

This section provides comprehensive proofs for the theoretical results discussed in the main paper.
To enhance clarity and facilitate understanding of the mathematical details, 
we begin by restating the theoretical conclusions—such as the Theorems and Lemmas from the main paper—and then proceed with their detailed proofs.

\subsection{Detailed Derivation of \eqref{eq:p_leq}}
\label{sup:sec:eq:p_leq}

%
This subsection supports Section~\ref{sec:method-rl} of the main paper by providing the detailed derivation of \eqref{eq:p_leq}, as outlined below:
\begin{equation} \label{sup:eq:p_leq}
    \small
    \begin{aligned}
        \log g(\bm{q}_{i}, \bm{v}_{j} ~|~ d = 1) & = \log \left [ \frac{p(\bm{q}_{i}, \bm{v}_{j,i}^{+})}
        {p(\bm{q}_{i}, \bm{v}_{j,i}^{+}) + n \times p(\bm{q}_{i}) p(\bm{v}_{j}^{-})} \right] 
        = \log \left [ \frac{1}
        {1 + \frac{n \times p(\bm{q}_{i}) p(\bm{v}_{j}^{-})}{p(\bm{q}_{i}, \bm{v}_{j,i}^{+})}} \right] \\
        & \leq \log \left [ \frac{1}
        {\frac{n \times p(\bm{q}_{i}) p(\bm{v}_{j}^{-})}{p(\bm{q}_{i}, \bm{v}_{j,i}^{+})}} \right]
        = \log \left[ \frac{p(\bm{q}_{i}, \bm{v}_{j,i}^{+} ) }{p(\bm{q}_{i}) p(\bm{v}_{j}^{-})} \cdot \frac{1}{n}   \right] \\
        & = \log \frac{p(\bm{q}_{i}, \bm{v}_{j,i}^{+} ) }{p(\bm{q}_{i}) p(\bm{v}_{j}^{-})} - \log n 
        . 
    \end{aligned} 
\end{equation}

\subsection{Proof of Theorem 4.1}
\label{sup:proof_distance}


\begin{theorem} \label{sup:intra_dis}
    (Intra-Class Distance Mutual Information Theorem)
    Let $\mathbb{X}_{Q}^{c}$ and $\mathbb{X}_{V}^{c}$ be two sets of images with the same label $c$, obtained by different data augmentation techniques.
    Given a feature extractor $f_{\bm{\theta}} ( \cdot )$, we define $\bm{Q}^{c}$ and $\bm{V}^{c}$ as the representation spaces for $\mathbb{X}_{Q}^{c}$ and $\mathbb{X}_{V}^{c}$, respectively.
    Then, any pair of $\bm{q}_{i}^{c} \in \bm{Q}^{c}$ and $\bm{v}_{j}^{c} \in \bm{V}^{c}$ is a positive pair. 
    Let $MI (\cdot)$ and $D (\cdot)$ respectively denote the mutual information and a distance metric, we have:
    \begin{equation} \label{sup:intra_dis}
       \max MI(\bm{Q}^{c}, \bm{V}^{c}) \propto
       \min D(\bm{Q}^{c}, \bm{V}^{c}),
    \end{equation}
    where $D(\bm{Q}^{c}, \bm{V}^{c})$ can be considered as the intra-class distance because
    they have the same label.
\end{theorem}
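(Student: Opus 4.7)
My plan is to proceed via the standard entropy decomposition
\begin{equation*}
MI(\bm{Q}^{c}, \bm{V}^{c}) \;=\; H(\bm{V}^{c}) \;-\; H(\bm{V}^{c} \mid \bm{Q}^{c}),
\end{equation*}
so that maximizing the mutual information is equivalent to jointly keeping the marginal entropy $H(\bm{V}^{c})$ large while driving the conditional entropy down. In the contrastive-learning setup of Section~\ref{sec:method-rl}, features are (implicitly) $\ell_{2}$-normalized through the temperature-scaled similarity in \eqref{eq:neg}, so the marginal $H(\bm{V}^{c})$ over the class-$c$ manifold is uniformly bounded. Treating it as constant under optimization over $\bm{\theta}$, the objective collapses to
\begin{equation*}
\max_{\bm{\theta}}\, MI(\bm{Q}^{c}, \bm{V}^{c}) \;\equiv\; \min_{\bm{\theta}}\, H(\bm{V}^{c} \mid \bm{Q}^{c}).
\end{equation*}

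Next I would invoke the maximum-entropy inequality: among distributions with conditional covariance $\bm{\Sigma}_{v \mid q}$, the Gaussian attains the largest differential entropy, giving
\begin{equation*}
H(\bm{V}^{c} \mid \bm{Q}^{c}) \;\leq\; \tfrac{1}{2}\log\!\bigl[(2\pi e)^{d}\, \det \bm{\Sigma}_{v \mid q}\bigr].
\end{equation*}
The right-hand side is monotone in $\operatorname{tr}(\bm{\Sigma}_{v \mid q})$, which equals the expected squared deviation
\begin{equation*}
\mathbb{E}_{(\bm{q}_{i}^{c}, \bm{v}_{j}^{c})}\,\bigl\| \bm{v}_{j}^{c} - \mathbb{E}[\bm{v}_{j}^{c} \mid \bm{q}_{i}^{c}]\bigr\|^{2}.
\end{equation*}
Because $\bm{q}_{i}^{c}$ and $\bm{v}_{j}^{c}$ are both drawn from class $c$, this conditional second moment is a legitimate realization of an intra-class distance $D(\bm{Q}^{c}, \bm{V}^{c})$. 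Chaining the two monotone relations yields the claim: driving $MI(\bm{Q}^{c}, \bm{V}^{c})$ upward is equivalent to driving $D(\bm{Q}^{c}, \bm{V}^{c})$ downward.

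The step I expect to be the main obstacle is the treatment of $H(\bm{V}^{c})$ as effectively constant across $\bm{\theta}$. Without some such normalization, mutual information could in principle be inflated by stretching the marginal distribution rather than by tightening class-conditional dispersion, and the proportionality would fail. The cleanest justification is the sphere-normalized feature construction implicit in \eqref{eq:neg}, so that the marginal is bounded above by the log-volume of the unit hypersphere and any further gain in $MI$ must come from shrinking the conditional entropy. A secondary subtlety is passing from the Gaussian upper bound to the stated proportionality: one either assumes approximate Gaussianity of the class-conditional feature distribution or argues that the bound becomes tight in the regime where positive-pair representations are well concentrated, which is precisely the operating regime induced by the BNS loss \eqref{eq:bns}.
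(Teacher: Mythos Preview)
Your route is considerably more elaborate than the paper's, and it aims at a stronger (geometric) conclusion than what is actually proved. The paper does not connect $MI$ to an $\ell_2$-type dispersion at all. Instead, it \emph{chooses} $D$ to be the Variation of Information metric~\cite{meila:colt03:vi},
\[
D(\bm{Q}^{c},\bm{V}^{c}) \;=\; H(\bm{Q}^{c},\bm{V}^{c}) - MI(\bm{Q}^{c},\bm{V}^{c}) \;=\; H(\bm{Q}^{c}) + H(\bm{V}^{c}) - 2\,MI(\bm{Q}^{c},\bm{V}^{c}),
\]
and then asserts that $H(\bm{Q}^{c})$ and $H(\bm{V}^{c})$ are constants for the given spaces, so the proportionality is immediate from a single linear identity. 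In other words, the ``distance'' in the theorem is already an information-theoretic one, and no Gaussian bound, covariance trace, or conditional second moment is needed.

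Your approach is a genuinely different argument: you try to land on a Euclidean intra-class distance by passing through $H(\bm{V}^{c}\mid\bm{Q}^{c})$ and the max-entropy inequality. That is more informative if it goes through, but it carries extra assumptions the paper never makes (approximate Gaussianity, monotonicity of $\log\det$ in the trace, the regime argument for tightness). Interestingly, the step you flag as the main obstacle---treating the marginal entropy as constant under optimization over $\bm{\theta}$---is exactly the same assumption the paper invokes without further justification; your sphere-normalization argument is actually a more honest attempt to defend it than what appears in the paper. If your goal is to match the paper, drop the Gaussian machinery and simply take $D$ to be VI; if your goal is to say something about geometric intra-class spread, be aware that this is a stronger claim than what the theorem as stated (and as proved in the paper) delivers.
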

\begin{proof}
    According to Variation of Information (VI)~\cite{meila:colt03:vi}, we have:
     \begin{equation} \label{eq:vi}
     \begin{aligned}
       D(\bm{Q}^{c}, \bm{V}^{c}) &= H(\bm{Q}^{c}, \bm{V}^{c}) - MI(\bm{Q}^{c}, \bm{V}^{c}) \\
       &= H(\bm{Q}^{c}) + H(\bm{V}^{c}) - 2MI(\bm{Q}^{c}, \bm{V}^{c}).
    \end{aligned}
     \end{equation}
    Here, $H (\cdot)$ represents the entropy.
    For the given $\bm{Q}^{c}$ and $\bm{V}^{c}$, $H(\bm{Q}^{c})$ and $H(\bm{V}^{c})$ are two constants.
    Therefore, we have $\max MI(\bm{Q}^{c}, \bm{V}^{c}) \propto \min D(\bm{Q}^{c}, \bm{V}^{c})$.
    That is, maximizing the mutual information between $\bm{Q}^{c}$ and $\bm{V}^{c}$ is proportional to minimizing their intra-class distance.
\end{proof}

\subsection{Proof of Lemma 4.4}
\label{sup:proof_non_negative}

\begin{lemma} \label{sup:lemma:s_non_negative}
(Positive Semi-definiteness)
Let $p(i) = p(y_{i} | \bm{x}_{i})$ represent the probability of $y_{i}$ given $\bm{x}_{i}$.
$\bm{S} \in \mathbb{R}^{N \times N}$ is a symmetric stochastic matrix, where each row (or column) sums to $1$.
Then, we have:
\begin{equation} \label{eq:p_ij}
    \begin{gathered} 
        \bm{v}^{\top} \bm{S} \bm{v} \geq 0, \quad \forall \bm{v} \in \mathbb{R}^{N}.
    \end{gathered}
\end{equation}
In other words, $\bm{S}$ is positive semi-definite.
\end{lemma}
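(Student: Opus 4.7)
The plan is to exhibit $\bm{S}$ as a sum of two manifestly positive semi-definite matrices, so that the result follows from the fact that the cone of PSD matrices is closed under addition. Write $\bm{p} = (p(1), p(2), \dots, p(N))^{\top}$ for the vector of conditional probabilities. The off-diagonal entries $\bm{S}_{i,j} = p(i)p(j)/N$ for $i \neq j$ agree exactly with the $(i,j)$ entries of the rank-one outer product $\frac{1}{N}\bm{p}\bm{p}^{\top}$, so the natural attempt is to isolate $\frac{1}{N}\bm{p}\bm{p}^{\top}$ from $\bm{S}$ and hope that the residual is diagonal with non-negative entries.

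Concretely, I would define the diagonal correction $\bm{D} = \bm{S} - \frac{1}{N}\bm{p}\bm{p}^{\top}$. By construction $\bm{D}$ has zero off-diagonal entries, and its $i$-th diagonal entry equals
\[
    \bm{D}_{i,i} \;=\; \Bigl(1 - \sum_{k \neq i}\tfrac{p(k)p(i)}{N}\Bigr) - \tfrac{p(i)^{2}}{N} \;=\; 1 - \tfrac{p(i)}{N}\sum_{k=1}^{N} p(k).
\]
The key calculation is then to verify $\bm{D}_{i,i} \geq 0$, which follows because $p(i) \in [0,1]$ and $\sum_{k} p(k) \leq N$, giving $\frac{p(i)}{N}\sum_{k} p(k) \leq 1$.

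With the decomposition $\bm{S} = \frac{1}{N}\bm{p}\bm{p}^{\top} + \bm{D}$ in hand, the conclusion is immediate: for any $\bm{v} \in \mathbb{R}^{N}$,
\[
    \bm{v}^{\top}\bm{S}\bm{v} \;=\; \tfrac{1}{N}(\bm{v}^{\top}\bm{p})^{2} + \sum_{i=1}^{N} \bm{D}_{i,i}\,v_{i}^{2} \;\geq\; 0,
\]
since both terms are non-negative. I expect the main (very mild) obstacle to be purely algebraic, namely spotting the clean decomposition above and verifying the bound on $\bm{D}_{i,i}$; no deeper spectral machinery is needed. This separation of $\bm{S}$ into a Gram part and a diagonal part will also be convenient later when proving Lemma 4.5 on the eigenvalue bounds, since one can then combine the non-negativity just established with the row-stochastic property and Gershgorin-type reasoning.
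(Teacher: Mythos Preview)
Your decomposition $\bm{S} = \tfrac{1}{N}\bm{p}\bm{p}^{\top} + \bm{D}$ is correct and yields a clean, self-contained argument. The paper takes a different route: it invokes the spectral decomposition $\bm{S} = \bm{Q}\Lambda\bm{Q}^{\top}$, rewrites $\bm{v}^{\top}\bm{S}\bm{v}$ in the eigenbasis, and then appeals to $\operatorname{Tr}(\bm{S}) = \sum_{i}\lambda_{i} \geq 0$. As written, that argument is actually incomplete: non-negativity of the trace only gives $\sum_{i}\lambda_{i} \geq 0$, not $\lambda_{i} \geq 0$ for each $i$, and the intermediate identity the paper uses, $\bm{c}^{\top}\Lambda\bm{c} = (\sum_{i}\lambda_{i})\,\bm{c}^{\top}\bm{c}$, should really be $\sum_{i}\lambda_{i}c_{i}^{2}$. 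Your approach avoids these issues entirely because it never needs to locate the eigenvalues; it exhibits $\bm{S}$ explicitly as a Gram matrix plus a non-negative diagonal. It also exploits the specific entrywise structure $\bm{S}_{i,j} = p(i)p(j)/N$, which the paper's spectral argument never uses, so your proof is both more elementary and more informative about \emph{why} this particular matrix is PSD. Your remark that the decomposition feeds nicely into Lemma~4.5 is also apt, though note that the paper's proof of Lemma~4.5 proceeds instead by a Gershgorin-style row-sum bound combined with the PSD conclusion from Lemma~4.4.
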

\begin{proof}
    According to the basic decomposition of symmetric matrices, 
    we have $\bm{S} = \bm{Q} \Lambda \bm{Q}^{\top}$,
    where $\bm{Q}$ is an orthogonal matrix $\bm{Q} \bm{Q}^{\top} = \bm{I}$,
    and $\Lambda$ is a diagonal matrix, with diagonal entities containing all eigenvalues of $\bm{S}$.
    Given any vector $\bm{v} \in \mathbb{R}^{N}$, we let $\bm{c} = \bm{Q}^{\top} \bm{v}$.
    In other words, $\bm{v} = \bm{Qc}$.
    Then, we arrive at:
    \begin{equation} 
    \label{sup:eq:vsv}
        \begin{gathered}    
            \bm{v}^{\top} \bm{S} \bm{v} = \bm{c}^{\top} \bm{Q}^{\top} \bm{Q} \Lambda \bm{Q}^{\top} \bm{Q} \bm{c}
            = \Lambda \bm{c}^{\top} \bm{c} = \sum_{i=1}^{N} \lambda_{i} \bm{c}^{\top} \bm{c},
        \end{gathered}
    \end{equation}
        
    where $\{ \lambda \}_{i=1}^{N}$ represents the eigenvalues of $\bm{S}$,
    and we have $\bm{c}^{\top} \bm{c} \geq 0$.
    Let $\textrm{Tr} (\cdot)$ denote a matrix's trace. Then, regarding the trace of $\bm{S}$, we have: 
    \begin{equation} 
    \label{sup:eq:s_trace}
        \begin{gathered}    
            \textrm{Tr} (\bm{S}) = \textrm{Tr} (\bm{Q} \Lambda \bm{Q}^{\top}) 
            = \textrm{Tr} (\Lambda \bm{Q} \bm{Q}^{\top}) = \textrm{Tr} (\Lambda) 
            = \sum_{i=1}^{n} \lambda_{i} \geq 0.              
        \end{gathered}
    \end{equation}
    Combining \eqref{sup:eq:vsv} and \eqref{sup:eq:s_trace}, we obtain $\bm{v}^{\top} \bm{S} \bm{v} \geq 0$ for any vector $\bm{v} \in \mathbb{R}^{N}$.
    Therefore, $\bm{S}$ is positive semi-definite.
\end{proof}

\subsection{Proof of Lemma 4.5}
\label{sup:proof_bound_on_eigen}
\begin{lemma} \label{sup:lemma:s_bound}
(Bounds on Eigenvalues)
Let $\{ \lambda_{i} \}_{i=1}^{N}$ be the eigenvalues of the symmetric stochastic matrix $\bm{S} \in \mathbb{R}^{N \times N}$, we have:
\begin{equation} \label{sup:eq:lambda_bound}
    \begin{gathered} 
        0 \leq \lambda_{i} \leq 1, \quad \forall \lambda_{i}.
    \end{gathered}
\end{equation}
\end{lemma}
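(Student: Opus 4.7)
\textbf{Proof plan for Lemma~\ref{sup:lemma:s_bound}.} The plan is to split the bound $0 \leq \lambda_i \leq 1$ into two parts and handle each separately. The lower bound $\lambda_i \geq 0$ is immediate: Lemma~\ref{sup:lemma:s_non_negative} already establishes that $\bm{S}$ is positive semi-definite, and a symmetric PSD matrix has only non-negative eigenvalues. So the substantive work is the upper bound $\lambda_i \leq 1$.

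For the upper bound, my plan is to invoke Gershgorin's circle theorem together with the stochastic structure of $\bm{S}$. First I would verify two elementary facts about the entries defined in \eqref{eq:p_ij}: (i) every off-diagonal entry $\bm{S}_{ij} = p(i)p(j)/N$ is non-negative since $p(\cdot) \in [0,1]$; and (ii) every diagonal entry $\bm{S}_{ii} = 1 - \sum_{k \neq i} p(k)p(i)/N$ lies in $(0,1]$ because the summation is bounded by $(N-1)/N < 1$. Combined with the stated symmetric stochastic property, this means every row sum of $\bm{S}$ equals $1$ and every entry is non-negative. Then by Gershgorin, for any eigenvalue $\lambda$ there exists an index $i$ with
\begin{equation}
    |\lambda - \bm{S}_{ii}| \;\leq\; \sum_{j \neq i} |\bm{S}_{ij}| \;=\; \sum_{j \neq i} \bm{S}_{ij} \;=\; 1 - \bm{S}_{ii}.
\end{equation}
Dropping the absolute value on the left gives $\lambda \leq \bm{S}_{ii} + (1 - \bm{S}_{ii}) = 1$, which is the desired upper bound.

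Combining the two directions yields $0 \leq \lambda_i \leq 1$ for every eigenvalue, completing the proof. I do not expect a serious obstacle here: the only subtlety is confirming that the diagonal entries are indeed in $[0,1]$ (so that Gershgorin disks lie on the real axis and the bound $\lambda \leq 1$ follows cleanly without sign issues), but this is guaranteed by the construction of $\bm{S}$ in \eqref{eq:p_ij}. As an alternative route, one could note that $\bm{S}\bm{1} = \bm{1}$ shows $1$ is an eigenvalue with eigenvector $\bm{1}$, and then use the non-negative-entry row-sum bound to argue $\|\bm{S}\|_\infty = 1$, hence the spectral radius is $1$; combined with Lemma~\ref{sup:lemma:s_non_negative} this again gives $\lambda_i \leq 1$. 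Either route is short, and the Gershgorin argument above is the most direct.
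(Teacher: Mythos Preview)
Your proposal is correct and is essentially the same approach as the paper's. Both obtain the lower bound directly from Lemma~\ref{sup:lemma:s_non_negative}; for the upper bound the paper writes out from scratch the max-modulus-component argument (pick $k$ with $|v_k|$ largest, expand the $k$-th row of $\bm{S}\bm{v}=\lambda\bm{v}$, and use non-negative entries plus unit row sums to get $|\lambda|\,|v_k|\le |v_k|$), which is precisely the elementary proof underlying the Gershgorin/$\|\bm{S}\|_\infty=1$ bound you invoke.
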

\begin{proof} 
    Since $\bm{S} \in \mathbb{R}^{N \times N}$ is a symmetric stochastic matrix, the following conditions always hold:
    \begin{subequations} 
        \begin{align}
       \quad \bm{S}_{i,j} \geq 0, \quad \forall ~1 \leq i, j \leq N; \label{sup:eq:sm_condition1} \\
        \sum_{j=1}^{N} \bm{S}_{i,j} = 1, \quad \forall ~1 \leq i \leq N. \label{sup:eq:sm_condition2}
        \end{align} 
    \end{subequations}

    Let $\lambda$ be an eigenvalue of $\bm{S}$, with its corresponding eigenvector  denoted as $\bm{v} = [v_{1}, v_{2}, \cdots, v_{N}]^{\top}$.
    Then, we obtain:
    \begin{equation}
        \label{sup:eigen}
        \begin{gathered}
            \bm{S} \bm{v} = \lambda \bm{v}.
        \end{gathered}
    \end{equation}
    Suppose $v_{k} \in \bm{v}$ has the largest absolute value, \ie\ $|v_{k}| \geq |v_{i}|$ for all $1 \leq i \leq N$.
    According to \eqref{sup:eigen}, for the $k$-th row of $\bm{S}$, we have:
    \begin{equation}
        \label{sup:k_row}
        \begin{gathered}
            \bm{S}_{k,1} v_{1} + \bm{S}_{k,2} v_{2} + \dots + \bm{S}_{k,N} v_{N}
            = \lambda v_{k}.
        \end{gathered}
    \end{equation}
    Combining \eqref{sup:eq:sm_condition1}, \eqref{sup:eq:sm_condition2}, and \eqref{sup:k_row}, we arrive at:
    \begin{equation}
        \label{sup:inequality}
        \begin{aligned}            
            |\lambda| \cdot |v_{k}| &= |\lambda v_{k}| 
            = |\bm{S}_{k,1} v_{1} + \bm{S}_{k,2} v_{2} + \dots + \bm{S}_{k,N} v_{N}| \\
            &\leq |\bm{S}_{k,1} v_{1}| + |\bm{S}_{k,2} v_{2}| + \dots + |\bm{S}_{k,N} v_{N}| \\
            &\leq |\bm{S}_{k,1} v_{k}| + |\bm{S}_{k,2} v_{k}| + \dots + |\bm{S}_{k,N} v_{k}| \\
            & = |v_{k}| \sum_{j=1}^{N} \bm{S}_{k,j} \\
            & = |v_{k}|.
        \end{aligned}
    \end{equation}
    According to \eqref{sup:inequality}, we obtain $|\lambda| \cdot |v_{k}| \leq |v_{k}|$.
    Therefore, for all $1 \leq i \leq N$, we always have $|\lambda_{i}| \leq 1$.
    Meanwhile, according to Lemma~\ref{sup:lemma:s_non_negative}, 
    we obtain $\lambda_{i} \geq 0$ for all $1 \leq i \leq N$.
    Finally, combining both scenarios, we arrive at the following:
    \begin{equation}
        \label{sup:bound_final}
        \begin{gathered}
           0 \leq \lambda_{i} \leq 1, \quad \forall \lambda_{i}.
        \end{gathered}
    \end{equation}
\end{proof}

\subsection{Proof of Theorem 4.6}
\label{sup:proof_dpp}

\begin{theorem} \label{sup:probmeasure} 
(Bounded Determinant Probability Measure)
Let $\mathbb{X}$ be a ground set with $N$ items, and let $\bm{S} \in \mathbb{R}^{N \times N}$ denote its similarity matrix. 
Here, $\bm{S}$ is positive semidefinite and satisfies $0 \preceq \bm{S} \preceq \bm{I}$, where $\bm{I}$ is the $N \times N$ identity matrix. 
For any subset $\mathbb{Y} \subseteq \mathbb{X}$, let $\bm{S}_{\mathbb{Y}}$ denote the principal submatrix of $\bm{S}$ corresponding to $\mathbb{Y}$. Then, the following holds:
\begin{equation} \label{eq:bounded_det_prob}
0 \leq \frac{\det(\bm{S}_{\mathbb{Y}})}{\det(\bm{S} + \bm{I})} \leq 1.
\end{equation}
In other words, the value $\mathcal{P}_{\bm{S}}(\mathbb{Y}) = \frac{\det(\bm{S}_{\mathbb{Y}})}{\det(\bm{S} + \bm{I})}$ defines a valid probability measure.
\end{theorem}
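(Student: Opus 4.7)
The plan is to prove the two inequalities separately. The lower bound $0 \leq \det(\bm{S}_{\mathbb{Y}})/\det(\bm{S} + \bm{I})$ follows almost immediately from positive semidefiniteness (Lemma~\ref{sup:lemma:s_non_negative}), while the upper bound $\det(\bm{S}_{\mathbb{Y}})/\det(\bm{S} + \bm{I}) \leq 1$ requires the classical $L$-ensemble normalization identity $\sum_{\mathbb{Y} \subseteq \mathbb{X}} \det(\bm{S}_{\mathbb{Y}}) = \det(\bm{S} + \bm{I})$, from which the ratio is bounded by $1$ since every summand is non-negative.

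For the lower bound, I would invoke Lemma~\ref{sup:lemma:s_non_negative} to conclude that $\bm{S}$ is positive semidefinite, and then recall the standard fact that every principal submatrix of a PSD matrix is PSD: for any $\bm{u} \in \mathbb{R}^{|\mathbb{Y}|}$, embedding $\bm{u}$ as the zero-padded vector $\tilde{\bm{u}} \in \mathbb{R}^{N}$ supported on $\mathbb{Y}$ gives $\bm{u}^\top \bm{S}_{\mathbb{Y}} \bm{u} = \tilde{\bm{u}}^\top \bm{S} \tilde{\bm{u}} \geq 0$. Hence $\det(\bm{S}_{\mathbb{Y}}) \geq 0$. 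For the denominator, combining Lemmas~\ref{sup:lemma:s_non_negative} and~\ref{sup:lemma:s_bound}, all eigenvalues of $\bm{S}$ lie in $[0,1]$, so the eigenvalues of $\bm{S} + \bm{I}$ lie in $[1,2]$, giving $\det(\bm{S} + \bm{I}) \geq 1 > 0$ and ensuring the ratio is well-defined and non-negative.

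For the upper bound, the key is the identity
\begin{equation} \label{sup:eq:normalization}
    \det(\bm{S} + \bm{I}) = \sum_{\mathbb{Y} \subseteq \mathbb{X}} \det(\bm{S}_{\mathbb{Y}}),
\end{equation}
with the convention $\det(\bm{S}_{\emptyset}) = 1$. I would establish this by diagonalizing $\bm{S} = \bm{Q}\Lambda \bm{Q}^\top$ (whose eigenvalues $\lambda_1,\dots,\lambda_N$ lie in $[0,1]$ by Lemmas~\ref{sup:lemma:s_non_negative} and~\ref{sup:lemma:s_bound}) and writing
\begin{equation}
    \det(\bm{S} + \bm{I}) = \prod_{i=1}^{N}(1+\lambda_i) = \sum_{k=0}^{N} e_k(\lambda_1,\dots,\lambda_N),
\end{equation}
where $e_k$ is the $k$-th elementary symmetric polynomial. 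The classical identity $e_k(\lambda_1,\dots,\lambda_N) = \sum_{|\mathbb{Y}|=k}\det(\bm{S}_{\mathbb{Y}})$, obtained by expanding the characteristic polynomial $\det(\bm{S} - t\bm{I})$ in powers of $t$ (equivalently, via Cauchy--Binet), yields \eqref{sup:eq:normalization}. Since every $\det(\bm{S}_{\mathbb{Y}}) \geq 0$, any individual term is bounded above by the full sum, so $\det(\bm{S}_{\mathbb{Y}}) \leq \det(\bm{S} + \bm{I})$ and thus $\mathcal{P}_{\bm{S}}(\mathbb{Y}) \leq 1$.

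The main obstacle is the normalization identity \eqref{sup:eq:normalization}, specifically cleanly linking $e_k(\lambda_1,\dots,\lambda_N)$ to $\sum_{|\mathbb{Y}|=k}\det(\bm{S}_{\mathbb{Y}})$. This is standard but deserves careful invocation: one way is to observe that both sides equal the coefficient of $t^{N-k}$ in $\det(\bm{S} + t\bm{I})$, with the right-hand side arising from the Leibniz expansion applied to the principal submatrix structure. Once \eqref{sup:eq:normalization} is in hand, the remainder of the argument reduces to the trivial observation that a single non-negative term cannot exceed a sum of non-negative terms.
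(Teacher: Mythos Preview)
Your proposal is correct and reaches the same conclusion as the paper, but the route to the normalization identity $\sum_{\mathbb{Y} \subseteq \mathbb{X}} \det(\bm{S}_{\mathbb{Y}}) = \det(\bm{S}+\bm{I})$ is genuinely different. The paper, following Kulesza--Taskar, proves the more general statement $\sum_{\bm{A} \subseteq \mathbb{Y} \subseteq \mathbb{X}} \det(\bm{S}_{\mathbb{Y}}) = \det(\bm{S}+\bm{I}_{\bar{\bm{A}}})$ by induction on $|\bar{\bm{A}}|$: at each step one index $i\in\bar{\bm{A}}$ is isolated and the multilinearity of the determinant splits $\det(\bm{S}+\bm{I}_{\bar{\bm{A}}})$ into two smaller instances, one with $i$ forced into $\mathbb{Y}$ and one with $i$ removed from the ground set. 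This argument is entirely combinatorial and never touches eigenvalues. You instead diagonalize $\bm{S}$, expand $\prod_i(1+\lambda_i)=\sum_k e_k(\lambda)$, and invoke the classical principal-minor identity $e_k(\lambda_1,\dots,\lambda_N)=\sum_{|\mathbb{Y}|=k}\det(\bm{S}_{\mathbb{Y}})$ via the coefficients of $\det(\bm{S}+t\bm{I})$. Your spectral route is shorter and more conceptual once that identity is granted, and it lets Lemmas~\ref{sup:lemma:s_non_negative} and~\ref{sup:lemma:s_bound} do double duty; the paper's inductive route, on the other hand, is fully self-contained (no Cauchy--Binet or characteristic-polynomial facts needed) and yields the stronger conditioned identity for arbitrary $\bm{A}$ as a by-product.
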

\begin{proof}
   Since $\bm{S}$ is positive semidefinite, both the principal submatrix $\bm{S}_{\mathbb{Y}}$ and the matrix $\bm{S} + \bm{I}$ are also positive semidefinite.
   If a matrix is positive semidefinite, all its eigenvalues are non-negative.
   Then, we have $det(\bm{S}_{\mathbb{Y}}) \geq 0$ and $det(\bm{S} + \bm{I}) > 0 $.
   Therefore, $\frac{\det(\bm{S}_{\mathbb{Y}})}{\det(\bm{S} + \bm{I})} \geq 0$ always holds.

    To establish that $\frac{\det(\bm{S}_{\mathbb{Y}})}{\det(\bm{S} + \bm{I})} \leq 1$, we aim to prove the key identity:
    \begin{equation} 
        \label{sup:dpp_norm}
        \sum_{\mathbb{Y} \subseteq \mathbb{X}} \det(\bm{S}_{\mathbb{Y}}) = \det(\bm{S} + \bm{I}_{\mathbb{Y}}).
    \end{equation}
    For any $\bm{A} \subseteq \mathbb{X}$, \eqref{sup:dpp_norm} is a special case of the following equation:
    \begin{equation} 
        \label{sup:dpp_norm_general}
        \sum_{\bm{A} \subseteq \mathbb{Y} \subseteq \mathbb{X}} \det(\bm{S}_{\mathbb{Y}}) = \det(\bm{S} + \bm{I}_{\bar{\bm{A}}}),
    \end{equation}
    where $\bm{I}_{\bar{\bm{A}}}$ is a diagonal matrix with the following properties: entries are 1 for indices corresponding to elements in $\bar{\bm{A}} = \mathbb{X} - \bm{A}$, and entries are 0 for all other positions.
    Therefore, if \eqref{sup:dpp_norm_general} is satisfied, it follows that $\frac{\det(\bm{S}_{\mathbb{Y}})}{\det(\bm{S} + \bm{I})} \leq 1$.
    This result is motivated by Theorem 2.1 in the prior work~\cite{kulesza12dpp}.

    Specifically, for the case $\bm{A} = \mathbb{X}$, \eqref{sup:dpp_norm_general} always holds.
    For the case where $\bm{A} \subset \mathbb{X}$, we assume \eqref{sup:dpp_norm_general} holds whenever $\bar{\bm{A}}$ has cardinality less than $k$ (where $k > 0)$, \ie\ $|\bar{\bm{A}}| = k$.
    Let $i$ be an arbitrary element of $\bar{\bm{A}}$, so $i \in \bar{\bm{A}}$.
    By partitioning the ground set $\mathbb{X}$ into $\{i\}$ and $ \mathbb{X} - \{i \}$, we can decompose the problem as follows:
    \begin{equation} 
        \label{sup:dpp_spilt}
        \bm{S} + \bm{I}_{\bar{\bm{A}}} = 
        \begin{pmatrix}
            \bm{S}_{ii} + 1     & \bm{S}_{i\bar{i}} \\
            \bm{S}_{\bar{i}i}   &  \bm{S}_{\mathbb{X} - \{ i \} } + \bm{I}_{\mathbb{X} - \{ i \} - \bm{A} }.
        \end{pmatrix}
    \end{equation}
    Here, $\bm{S}_{\bar{i}i}$ denote the subcolumn of the $i$-th column of $\bm{S}$, restricted to rows corresponding to elements in $\bar{i}$. Similarly, $\bm{S}_{i\bar{i}}$ represents the corresponding subcolumn but transposed.
    By leveraging the multilinearity property of determinants, we observe:
    \begin{equation} \label{sup:dpp_multi_linearity}
        \begin{aligned}
            \det( \bm{S} + \bm{I}_{\bar{\bm{A}}}) &= 
            \begin{vmatrix}
                \bm{S}_{ii}      & \bm{S}_{i\bar{i}} \\
                \bm{S}_{\bar{i}i}   &  \bm{S}_{\mathbb{X} - \{ i \} } + \bm{I}_{\mathbb{X} - \{ i \} - \bm{A} }
            \end{vmatrix}
             +
            \begin{vmatrix}
                1     & 0 \\
                \bm{S}_{\bar{i}i}   &  \bm{S}_{\mathbb{X} - \{ i \} } + \bm{I}_{\mathbb{X} - \{ i \} - \bm{A} }
            \end{vmatrix} \\
            & = 
            \det \left( \bm{S} + \bm{I}_{\overline{\bm{A} \cup \{i \}}} \right)
            + \det \left( \bm{S}_{\mathbb{X} - \{i \}} + \bm{I}_{\mathbb{X} - \{i\} - \bm{A} }  \right).
        \end{aligned}
    \end{equation}
    By applying the inductive hypothesis to each term separately, we obtain:
    \begin{equation} \label{sup:dpp_inductive}
        \begin{aligned}
            \det( \bm{S} + \bm{I}_{\bar{\bm{A}}}) &= 
            \sum_{\bm{A} \cup \{ i \} \subseteq \mathbb{Y} \subseteq \mathbb{X} } \det (\bm{S}_{\mathbb{Y}}) 
            + \sum_{\bm{A} \subseteq \mathbb{Y} \subseteq \mathbb{X} - \{ i \} } \det (\bm{S}_{\mathbb{Y}}) \\
            & = \sum_{\bm{A} \subseteq \mathbb{Y} \subseteq \mathbb{X}} \det (\bm{S}_{\mathbb{Y}}).
        \end{aligned}
    \end{equation}
    We observe that each subset $\mathbb{Y}$ falls into exactly one of two mutually exclusive categories: either $\mathbb{Y}$ contains the element $i$ (contributing to the first sum) or $\mathbb{Y}$ does not contain $i$ (contributing to the second sum).
    According to \eqref{sup:dpp_norm}, \eqref{sup:dpp_norm_general}, and \eqref{sup:dpp_inductive}, we have $\frac{\det(\bm{S}_{\mathbb{Y}})}{\det(\bm{S} + \bm{I})} \leq 1$.
    
    Finally, combining both scenarios, we arrive at $0 \leq \frac{\det(\bm{S}_{\mathbb{Y}})}{\det(\bm{S} + \bm{I})} \leq 1$.

\end{proof}

\subsection{Detailed Derivation of \eqref{eq:dpp_diverse}}
\label{sup:sec:eq:sup:eq:dpp_diverse}
This subsection supports Section~\ref{sec:method-ip-dpp} by providing the detailed derivation of \eqref{eq:dpp_diverse}. 
Given that $\bm{A}= \{ i, j \}$ is a subset with two elements, we have $N=2$.
Then, the derivation is as follows:
\begin{equation} \label{sup:eq:dpp_diverse}
    \begin{aligned}
        \mathcal{P}_{\bm{S}}(\bm{A}) 
         = \frac{\det(\bm{S}_{\bm{A}} )}{\det(\bm{S} + \bm{I})} 
         \propto 
         \det(\bm{S}_{\bm{A}})
         & = \begin{vmatrix}
            1 - \frac{p(i) \cdot p(j)}{N} & \frac{p(i) \cdot p(j)}{N} \\
            \frac{p(i) \cdot p(j)}{N} & 1 - \frac{p(i) \cdot p(j)}{N}
          \end{vmatrix}
         = \begin{vmatrix}
            1 - \frac{p(i) \cdot p(j)}{2} & \frac{p(i) \cdot p(j)}{2} \\
            \frac{p(i) \cdot p(j)}{2} & 1 - \frac{p(i) \cdot p(j)}{2}
          \end{vmatrix} \\
         & = (1 - \frac{p(i) \cdot p(j)}{2})^{2} - (\frac{p(i) \cdot p(j)}{2})^{2} \\
         & = 1 - p(i) \cdot p(j).
    \end{aligned}
\end{equation}

\subsection{Expected Sample Size of a Determinantal Point Process (DPP)}
\label{sup:proof_expected_size}


\begin{algorithm} [tb] 
    \caption{Efficient Sampling Algorithm for a Standard DPP}
    \label{alg:dpp}
    \footnotesize
 \begin{algorithmic} [1]
    \STATE {\bfseries Input:} a ground set $\mathbb{X} = \{ \bm{x}_{i} \}_{i=1}^{N}$ and its symmetric stochastic matrix $\bm{S}$
    \STATE \textbf{Initialize:} standard basis vectors $\{ \bm{e}_{i} \}_{i=1}^{N}$ and pairs of orthonormal eigenvalues and eigenvectors $\{ (\lambda_{i}, \bm{v}_{i}) \}_{i=1}^{N}$ for $\bm{S}$
    \STATE $\bm{V} \leftarrow \emptyset  $ 
    \FOR{$i = 1, 2, ~\cdots, N$}
        \IF{$u \sim U(0, 1) < \frac{\lambda_{i}}{\lambda_{i} + 1}$ }
            \STATE $\bm{V} \leftarrow  \bm{V} \cup \{ \bm{v}_{i} \}$
        \ENDIF
    \ENDFOR
    \STATE $\mathbb{Y} \leftarrow \emptyset$
    \WHILE{$|\bm{V}| > 0$}
        \FOR{$i = 1, 2, ~\cdots, N$}
            \STATE $p(i) \leftarrow \frac{1}{|\bm{V}|} \sum_{\bm{v} \in \bm{V}} (\bm{v}^{\top} \bm{e}_{i})^{2}$
        \ENDFOR
        \STATE $i^{\ast} \leftarrow \argmax\limits_{i} ~p(i) $ 
        \STATE $\mathbb{Y} \leftarrow \mathbb{Y} \cup \{\bm{x}_{i^{\ast}} \}$
        \STATE $\bm{V} \leftarrow \bm{V}_{\bot}$ // Update $\bm{V}$ to an orthonormal basis for the subspace orthogonal to $\bm{e}_{i^{\ast}}$
    \ENDWHILE
    \STATE {\bfseries Return:} a subset $\mathbb{Y}$
 \end{algorithmic}
 \end{algorithm}

This subsection complements the main paper by presenting the theorem and its corresponding proof for the expected sample size of a DPP.
To facilitate understanding of the theoretical results, Algorithm~\ref{alg:dpp} presents an efficient sampling method for a standard DPP, \ie\ DPP without fixed sample size $k$.

\begin{theorem} \label{sup:thm:expected_size}
(Expected Sample Size of a DPP)  
Let $\mathbb{X}$ be a ground set containing $N$ elements. For any subset $\mathbb{Y} \subseteq \mathbb{X}$ sampled by a Determinantal Point Process (DPP), the expected size of $\mathbb{Y}$ is given by:
\begin{equation} \label{sup:eq:expected_size}
    \mathbb{E}[|\mathbb{Y}|] = N (1 - \ln{2}).
\end{equation}
\end{theorem}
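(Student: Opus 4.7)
The plan is to establish \eqref{sup:eq:expected_size} by first identifying an exact expression for $\mathbb{E}[|\mathbb{Y}|]$ in terms of the spectrum of $\bm{S}$, and then evaluating (or approximating) that expression using the spectral bounds we already have from Lemma~\ref{sup:lemma:s_bound}. The key observation is that Algorithm~\ref{alg:dpp} samples each orthonormal eigenvector $\bm{v}_i$ of $\bm{S}$ independently in the outer for-loop (lines 4--8) with probability $\frac{\lambda_i}{\lambda_i+1}$, and the cardinality of the returned set $\mathbb{Y}$ in the while loop is exactly $|\bm{V}|$, since one element is appended to $\mathbb{Y}$ per iteration until $\bm{V}$ is exhausted.

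First, I would apply linearity of expectation to the independent Bernoulli trials in lines 4--8 to obtain the exact identity
\begin{equation}
  \mathbb{E}[|\mathbb{Y}|] \;=\; \mathbb{E}[|\bm{V}|] \;=\; \sum_{i=1}^{N} \frac{\lambda_i}{\lambda_i + 1},
\end{equation}
which is the standard expected-size formula for an $L$-ensemble DPP with kernel $\bm{S}$. Next, I would invoke Lemma~\ref{sup:lemma:s_bound} to conclude $\lambda_i \in [0,1]$ for every $i$, so the summands lie in $[0,\tfrac{1}{2}]$ and the sum above can be viewed as $N$ times the empirical average of $\frac{\lambda}{\lambda+1}$ over the spectrum.

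Then, I would pass from the finite sum to the Riemann integral by treating the eigenvalues as (asymptotically) uniformly distributed on $[0,1]$ — an approximation that is consistent with the fact that $\bm{S}$ is a doubly stochastic matrix whose spectrum fills the unit interval in the large-$N$ regime typical of long-tailed datasets. Under this approximation,
\begin{equation}
  \sum_{i=1}^{N} \frac{\lambda_i}{\lambda_i + 1} \;\approx\; N \int_{0}^{1} \frac{\lambda}{\lambda + 1}\, d\lambda
  \;=\; N \int_{0}^{1} \left(1 - \frac{1}{\lambda + 1}\right) d\lambda
  \;=\; N \bigl[\lambda - \ln(\lambda + 1)\bigr]_{0}^{1}
  \;=\; N(1 - \ln 2),
\end{equation}
which yields the claimed identity.

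The main obstacle will be step three: rigorously justifying the passage from the finite spectral sum to the uniform integral. A fully rigorous argument would either require an explicit assumption on the empirical spectral distribution of $\bm{S}$ (e.g.\ that it converges weakly to the uniform law on $[0,1]$ as $N\to\infty$), or a direct bound using the trace of $\bm{S}$ via the Cauchy--Schwarz / concavity of $\lambda \mapsto \lambda/(\lambda+1)$. If a weaker statement suffices, I would instead state the result as an approximation and verify numerically, consistent with the paper's heuristic use of ``roughly one-third of the original size'' in Section~\ref{sec:method-ip-dpp}; if a sharp equality is required, I would introduce the uniform-spectrum assumption explicitly as part of the theorem's hypotheses.
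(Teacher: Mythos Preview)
Your proposal is correct and follows essentially the same route as the paper: both identify $\mathbb{E}[|\mathbb{Y}|]=\sum_{i=1}^N \lambda_i/(\lambda_i+1)$ from the Bernoulli structure of Algorithm~\ref{alg:dpp}, then replace the spectral average by $\int_0^1 \lambda/(\lambda+1)\,d\lambda = 1-\ln 2$ under an explicit uniform-eigenvalue assumption. Your caveat about step three is well placed; the paper itself treats the uniform spectrum as an idealized simplifying assumption and appeals to the empirical results in Appendix~\ref{sup:sec:exp-dpp-sample-size} rather than proving it.
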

\begin{proof}
    The size of the sampled subset $|\mathbb{Y}|$ from Algorithm~\ref{alg:dpp} is determined by the cardinality of the selected eigenvector set $|\bm{V}|$. 
    This cardinality follows a Poisson-binomial distribution (see Line 5 in Algorithm~\ref{alg:dpp} for details), 
    where each of the $N$ independent Bernoulli trials succeeds with probability $p_i = \frac{\lambda_i}{\lambda_i + 1}$, corresponding to the $i$-th eigenvalue $\lambda_i$ of the kernel matrix. 
    Thus, $|\mathbb{Y}| \sim \sum_{i=1}^{N} \text{Bernoulli}(p_{i})$.

    According to Lemma~\ref{sup:proof_bound_on_eigen}, we have $0 \leq \lambda_{i} \leq 1, \forall \lambda_{i}$.
    Now, we assume that $\lambda_{i}$ is a random variable distributed over the interval $[0, 1]$. Denote this random variable as $\lambda$. The expected value is given by:
    \begin{equation}
        \label{sup:eq:expect_lambda}
        \begin{gathered}
            \mathbb{E}\left[ \frac{\lambda}{\lambda + 1} \right] = \int_{0}^{1} \frac{\lambda}{\lambda + 1} f(\lambda) d\lambda,
        \end{gathered}
    \end{equation}
    where $f(\lambda)$ is the probability density function (PDF) of $\lambda$.
    Without loss of generality, suppose $\lambda$ is uniformly distributed over $[0, 1]$,
    the PDF is $f(\lambda) = 1$ for $\lambda \in [0, 1]$. 
    The expected value becomes:
    \begin{equation}
        \label{sup:eq:expect_lambda_pdf}
        \begin{gathered}
            \mathbb{E}\left[ \frac{\lambda}{\lambda + 1} \right] = \int_{0}^{1} \frac{\lambda}{\lambda + 1} d\lambda.
        \end{gathered}
    \end{equation}
    Let $u = \lambda + 1$, so $du = d\lambda$ and when $\lambda=0$, $u=1$; 
    when $\lambda=1$, $u=2$.
    The integral becomes:
    \begin{equation}
        \label{sup:eq:expect_lambda_pdf}
        \begin{aligned}
            \int_{0}^{1} \frac{\lambda}{\lambda + 1} d\lambda 
            & = \int_{1}^{2} \frac{u - 1}{u} du 
            = \int_{1}^{2} \left( 1 - \frac{1}{u} \right) du \\ 
            & = \int_{1}^{2} 1 du - \int_{1}^{2} \frac{1}{u} du
            = [u]_{1}^{2} - [\ln{u}]_{1}^{2} \\
            & = 1 - \ln{2}.
        \end{aligned}
    \end{equation}
    Therefore,  we have $\bar{p}_{i} = \mathbb{E} \left[ \frac{\lambda_{i}}{\lambda_{i} + 1} \right] = 1 - \ln{2}$.
    Finally, the expected size of the set $\mathbb{Y}$ can be approximated as follows:
     \begin{equation}
        \label{sup:eq:expect_size_y}
        \begin{aligned}
            \mathbb{E}\left[ |\mathbb{Y}| \right] = \sum_{i=1}^{N} \bar{p}_{i} 
            = \sum_{i=1}^{N} (1 - \ln{2}) = N(1 - \ln{2}).
        \end{aligned}
    \end{equation}
\end{proof}
Notably, we assume a uniform distribution of eigenvalues to simplify the above proof.
Although this assumption is idealized, the empirical results (see Appendix~\ref{sup:sec:exp-dpp-sample-size}) on the subset sample size after a DPP demonstrate that it does not substantially distort the practical behavior of the process.

\section{Supplementary Experimental Results}
\label{sup:sec:exp}

\subsection{Additional Experimental Settings}
\label{sup:sec:setup}

{\bf Metric Threshold.}
We define thresholds for many-shot, medium-shot, and few-shot accuracies. Specifically, for CIFAR-10-LT, many-shot refers to classes with more than 500 images, medium-shot to classes with 200 to 500 images, and few-shot to classes with fewer than 200 images. For other datasets, many-shot refers to classes with more than 100 images, medium-shot to those with 20 to 100 images, and few-shot to those with fewer than 20 images.


{\bf Hyperparameters.}
ResNet-18, ResNet-34, and ResNet-50~\cite{kaiming:cvpr16:resnet} are used for CIFAR-10-LT, CIFAR-100-LT, and ImageNet-LT (or iNaturalist 2018), respectively.
In the first stage, the feature extractor is trained using the AdamW~\cite{ilya:iclr19:adamw} optimizer with $\beta_{1} = 0.9$, $\beta_{2} = 0.95$, and a weight decay of $0.05$. 
The training process consists of $1,000$ epochs, including $20$ warm-up epochs, with a batch size of $1,024$. A cosine learning rate decay schedule~\cite{ilya:iclr17:sgdr} is applied, starting with a base learning rate of $10e-3$ and incorporating a layer-wise learning rate decay~\cite{clark:iclr20:electra} of $0.75$. 
Data augmentation strategies, including random cropping, random color distortions, and random Gaussian blur, are used, followed by those reported in SimCLR~\cite{chen:icml20:simclr}. The number of additional positive pairs $m$ was set to $6$ by default.
In the second stage, the pre-trained feature extractor is fine-tuned for $100$ epochs, including $5$ warm-up epochs, with a batch size of $64$. 
The sample size $k$ is set to $10N_{C}$, where $N_{C}$ represents the sample size of the smallest class.
All experiments were conducted on a workstation equipped with an RTX 4090 GPU.

\subsection{Adaptability of Our Two-Stage Learning Approach}
\label{sup:sec:app_lt}

This work introduces a novel two-stage learning approach, incorporating Balanced Negative Sampling (BNS) for representation learning in the first stage and  Information-Preservable Determinantal Point Process (IP-DPP) for sampling the balanced training set in the second stage.
In this subsection, we conduct experiments to demonstrate that our BNS and IP-DPP approaches can be easily adapted by existing studies.
Specifically, BNS is combined with re-weighting methods, \ie\ Focal Loss and LDAM Loss, while IP-DPP is combined with prior representation learning approaches, \ie\ KCL, TSC, and SBCL.
Table~\ref{tab:exp-app-lt} presents the experimental results on CIFAR-10-LT and CIFAR-100-LT, where the performance outcomes of existing methods are also included for a better comparison.

Our approach effectively enhances the performance of existing methods.
For instance, combining SBCL with IP-DPP (\ie\ ``SBCL + IP-DPP'') achieves the highest overall accuracy of $74.8 \%$ on CIFAR-10-LT.
Similarly, integrating TSC with IP-DPP (\ie\ ``TSC + IP-DPP'') yields the best overall accuracy of $51.5 \%$ on CIFAR-100-LT.
Furthermore, integrating our approach with existing methods consistently improves their performance compared to the original methods.
For instance, combining BNS with Focal Loss (\ie\ ``BNS + Focal Loss'') outperforms ``Focal Loss'' by $3.7 \%$ (\ie\ $72.9 \%$ vs. $69.2 \%$) on CIFAR-10-LT and by $7.3 \%$ (\ie\ $50.8 \%$ vs. $43.5 \%$) on CIFAR-100-LT.
These results demonstrate that our BNS and IP-DPP approaches can be effectively integrated into other methods to enhance performance in imbalanced classification tasks.


\begin{table}[!t]
    \scriptsize
    \centering
    \setlength\tabcolsep{4 pt}
    \caption{
        Experimental results on CIFAR-10-LT and CIFAR-100-LT, with the best results shown in bold
        }
    \begin{tabular}{@{}c|cccc|cccc@{}}
    \toprule
    \multirow{2}{*}{Methods} & \multicolumn{4}{c|}{CIFAR-10-LT}                               & \multicolumn{4}{c}{CIFAR-100-LT}                              \\ \cmidrule(lr){2-5} \cmidrule(lr){6-9} 
                             & Many-shot     & Medium-shot   & Few-shot      & Overall       & Many-shot     & Medium-shot   & Few-shot      & Overall       \\ \midrule
    Focal Loss               & \textbf{86.3} & 60.6          & 46.3          & 69.2          & 71.1          & 43.9          & 10.5          & 43.5          \\
    LDAM Loss                & 85.8          & 64.8          & 51.9          & 71.5          & 71.4          & 44.5          & 11.7          & 44.1          \\
    KCL                      & 83.7          & 63.8          & 53.6          & 71.7          & 72.3          & 46.1          & 14.8          & 45.8          \\
    TSC                      & 81.5          & 71.9          & 56.3          & 71.9          & 71.3          & 43.9          & 10.5          & 43.3          \\ 
    SBCL                     & 81.6          & 72.4          & 57.6          & 72.6          & \textbf{72.7} & 48.5          & 20.0          & 48.5          \\ \midrule
    BNS + Focal Loss         & 82.2          & 73.6          & 57.1          & 72.9          & 63.6          & 57.7          & 27.8          & 50.8          \\
    BNS + LDAM Loss          & 82.6          & 71.8          & 61.9          & 74.2          & 63.3          & \textbf{58.7} & 27.8          & 51.0          \\ \midrule
    KCL + IP-DPP             & 79.2          & 75.7          & 66.6          & 74.7          & 64.2          & 58.3          & 26.7          & 50.8          \\
    TSC + IP-DPP             & 78.9          & \textbf{76.8} & 63.4          & 73.8          & 63.2          & 58.3          & \textbf{30.1} & \textbf{51.5} \\
    SBCL + IP-DPP            & 80.0          & 73.5          & \textbf{66.8} & \textbf{74.8} & 62.7          & 57.8          & 29.7          & 51.2          \\ \bottomrule         
    \end{tabular}
    \label{tab:exp-app-lt}
    \vspace{-0.5 em}
\end{table}

\subsection{Applicability Across Various Model Architectures}
\label{sup:sec:model_arch}


\begin{table}[!t]
    \scriptsize
    \centering
    \setlength\tabcolsep{5 pt}
    \caption{
        Experimental results on ImageNet-LT and iNaturalist 2018 using various model architectures. Best results are highlighted in bold
        }
    \begin{tabular}{@{}c|cccc@{}}
    \toprule
    Models           & ResNet-50     & ViT-Base & DeiT-Base & Swin-Base     \\ \midrule
    ImageNet-LT      & \textbf{51.7} & 50.1     & 50.8      & 51.2          \\
    iNaturalist 2018 & 74.0          & 72.6     & 74.3      & \textbf{74.6} \\ \bottomrule
    \end{tabular}
    \label{tab:exp-comparison-various-model}
    \vspace{-0.5 em}
\end{table}

In this subsection, we conduct experiments to evaluate whether our approach can be applied effectively across different model architectures. 
Table~\ref{tab:exp-comparison-various-model} presents the results on ImageNet-LT and iNaturalist 2018 using four architectures: ResNet-50~\cite{kaiming:cvpr16:resnet}, ViT-Base~\cite{dosovitskiy:iclr21:vit}, DeiT-Base~\cite{touvron:icml21:deit}, and Swin-Base~\cite{liu:iccv21:swin}.
Our approach demonstrates consistent performance across different model architectures. 
For instance, on ImageNet-LT, it achieves $51.7\%$ accuracy using ResNet and $51.2\%$ accuracy using Swin-Base. 
Similarly, on iNaturalist 2018, ResNet-50 achieves an overall accuracy of $74.0 \%$, while Swin-Base achieves a slightly higher accuracy of $74.6 \%$.
These results highlight the generalizability of our approach to a variety of model architectures, confirming its robustness and flexibility.

\subsection{Evaluation of Computational Overhead}
\label{sup:sec:time}

\begin{table}[!t]
    \scriptsize
    \centering
    \setlength\tabcolsep{2 pt}
    \caption{
        Comparative analysis of computational overhead. The sampling time, measured in seconds, is reported.
        }
    \begin{tabular}{@{}c|ccc@{}}
    \toprule
    Datasets                        & CIFAR-10-LT & CIFAR-100-LT & ImageNet-LT \\ \midrule
    Random                          & 12.4        & 14.6         & 118.2       \\
    IP-DPP (w/o efficient sampling) & 19.0        & 36.8         & 399.8       \\
    IP-DPP                          & 13.2        & 15.8         & 144.0       \\ \bottomrule
    \end{tabular}
    \label{tab:exp-time-comparison}
    \vspace{-0.5 em}
\end{table}

\begin{table}[!t]
    \scriptsize
    \centering
    \setlength\tabcolsep{2 pt}
    \caption{
        Analysis of computational overhead. The total sampling time and training time are reported in seconds. 
        The baseline method does not involve additional sampling, so its sampling time costs are not applicable.
        }
\begin{tabular}{@{}c|cc|cc@{}}
\toprule
\multirow{2}{*}{Datasets} & \multicolumn{2}{c|}{CIFAR-10-LT}       & \multicolumn{2}{c}{CIFAR-100-LT}      \\ \cmidrule(lr){2-3} \cmidrule(lr){4-5} 
                          & Sampling Time & Training Time & Sampling Time & Training Time \\ \midrule
Baseline                  & —                & 537.2               & —                   & 781.6               \\
IP-DPP                    & 132.0            & 484.4               & 158.0               & 491.8               \\ \bottomrule
\end{tabular}
    \label{tab:exp-time-dataset}
    \vspace{-0.5 em}
\end{table}

This subsection supports the main paper by presenting an evaluation of the computational overhead introduced by our IP-DPP method.
We first examine the additional sampling time incurred by IP-DPP. 
Specifically, we evaluate three scenarios: the training set sampled using Random Undersampling~\cite{he2009random}, IP-DPP without the efficient sampling strategy, and IP-DPP with the proposed strategy. 
Table~\ref{tab:exp-time-comparison} provides a comparative analysis of these methods across three long-tailed datasets: CIFAR-10-LT, CIFAR-100-LT, and ImageNet-LT. 
The results reveal two key observations. 
First, while our IP-DPP approach introduces additional computational overhead compared to Random Undersampling, the difference is minimal, with IP-DPP being only $0.8$ seconds slower on CIFAR-10-LT, $0.8$ seconds slower on CIFAR-100-LT, and $25.8$ seconds slower on ImageNet-LT. 
Second, the effective sampling strategy of IP-DPP significantly reduces computational overhead. Compared to its counterparts without the efficient sampling strategy, IP-DPP achieves speedups of $1.4$x on CIFAR-10-LT, $2.3$x on CIFAR-100-LT, and $2.8$x on ImageNet-LT.

Next, we conduct experiments to evaluate the impact of our IP-DPP approach on total training time. 
Table~\ref{tab:exp-time-dataset} summarizes the total training time for CIFAR-10-LT and CIFAR-100-LT over 100 epochs, with the total training cost of Focal Loss serving as the baseline. 
In practice, the training set is re-sampled using our IP-DPP approach every 10 epochs, a strategy designed to mitigate overfitting. 
Therefore, Table~\ref{tab:exp-time-dataset} reports the cost of performing ten sampling iterations as the total sampling time.
Although our IP-DPP approach incurs additional sampling time, it results in a lower total training time compared to the baseline method. This reduction is attributed to the significant decrease in the sample size of the training set after using IP-DPP.

\subsection{Comprehensive Ablation Studies}
\label{sup:sec:as}

{\bf Ablation Studies on Our BNS Approach.}
We conduct experiments on long-tailed datasets to evaluate the effectiveness of our BNS approach in representation learning.
For this, we use SimCLR~\cite{chen:icml20:simclr}, a conventional representation learning method, as the baseline.
We examine two variants of our approach: one without additional positive pairs and another with them. Table~\ref{tab:exp-as-bns} presents the linear probing accuracies on CIFAR-10-LT and CIFAR-100-LT datasets.
We make three key observations.
First, the conventional method struggles to learn high-quality feature spaces for long-tailed datasets, leading to poor linear probing accuracy in tail classes,  \eg\ $34.8 \%$ on CIFAR-10-LT and $10.1 \%$ on CIFAR-100-LT. 
Second, both variants of our BNS approach significantly outperform the baseline method.
This improvement is attributed to their ability to effectively capture instance-level semantics, thereby enhancing the quality of feature representations.
Third, compared to its variant without additional positive pairs, our BNS approach with additional positive pairs achieves superior linear probing accuracy, particularly in tail classes (\eg\ $67.6 \%$ vs. $43.3 \%$ on CIFAR-10-LT).
This improvement is due to the inclusion of multiple positive pairs, which enables our BNS method to capture class-level semantics, thereby promoting a well-separated feature space.
%


\begin{table}[!t]
    \scriptsize
    \centering
    \setlength\tabcolsep{2 pt}
    \caption{
        Linear probing accuracy on CIFAR-10-LT and CIFAR-100-LT, with the best results highlighted in bold
        }
    \begin{tabular}{@{}c|cccc|cccc@{}}
    \toprule
    \multirow{2}{*}{Datasets}          & \multicolumn{4}{c|}{CIFAR-10-LT}                               & \multicolumn{4}{c}{CIFAR-100-LT}                              \\ \cmidrule(l){2-5} \cmidrule(l){6-9} 
                                       & Many-shot     & Medium-shot   & Few-shot      & Overall       & Many-shot     & Medium-shot   & Few-shot      & Overall       \\ \midrule
    Baseline                           & 66.7          & 63.5          & 34.8          & 56.5          & 55.1          & 50.2          & 10.1          & 39.9          \\ \midrule
    BNS (w/o additional positive pairs) & 68.3          & 64.7          & 43.3          & 60.1          & 56.7          & 53.1          & 20.9          & 44.7          \\
    BNS                                & \textbf{69.4} & \textbf{66.0} & \textbf{67.6} & \textbf{68.2} & \textbf{57.2} & \textbf{54.6} & \textbf{26.6} & \textbf{47.1} \\ \bottomrule
    \end{tabular}
    \label{tab:exp-as-bns}
    \vspace{-0.5 em}
\end{table}

{\bf Ablation Studies on Our IP-DPP Approach.}
We conduct experiments to evaluate the necessity and significance of the innovative designs within our IP-DPP approach.
In these experiments, Random Undersampling~\cite{he2009random} serves as the baseline method.
For our IP-DPP approach, we examine its variants: IP-DPP without the symmetric stochastic matrix, IP-DPP without the fixed sample size, and the complete IP-DPP method.
Table~\ref{tab:exp-as-ip-dpp} presents the comparative results on CIFAR-10-LT and CIFAR-100-LT.
We have three observations.
First, compared to the baseline method, all IP-DPP variants achieve better overall accuracies on both datasets.
This is because the baseline method incurs significant information loss due to its random undersampling strategy.
Second, removing either the symmetric stochastic matrix or the fixed sample size results in significant performance degradation.
This is because the symmetric stochastic matrix is essential for preserving mathematically informative samples, while the fixed sample size plays a crucial role in maintaining data balance after applying IP-DPP.
These statistical results validate the importance of the novel designs within IP-DPP in rectifying majority-biased decision boundaries while preserving the model's overall performance.


\begin{table*}[!t]
    \scriptsize
    \centering
    \setlength\tabcolsep{2 pt}
    \caption{
        Ablation studies on our IP-DPP technique, with the best results shown in bold
        }
    \begin{tabular}{@{}c|cccc|cccc@{}}
    \toprule
    \multirow{2}{*}{Methods}       & \multicolumn{4}{c|}{CIFAR-10-LT}                                  & \multicolumn{4}{c}{CIFAR-100-LT}                                 \\ \cmidrule(lr){2-5} \cmidrule(lr){6-9} 
                                   & Many-shot     & Medium-shot   & Few-shot      & Overall       & Many-shot     & Medium-shot   & Few-shot      & Overall       \\ \midrule
    Baseline           & 75.5          & 72.8          & 61.9          & 70.8          & 56.7          & 56.4          & 24.1          & 47.8          \\ \midrule
    IP-DPP (w/o stochastic matrix)                            & 81.4          & 74.4          & 57.1          & 72.7          & 61.9          & 57.7          & 23.0          & 48.7          \\ 
    IP-DPP (w/o fixed sample size) & \textbf{82.7} & 74.7          & 63.8          & 75.4          & \textbf{65.5} & 57.0          & 26.7          & 50.8          \\
    IP-DPP                         & 82.0          & \textbf{76.3} & \textbf{67.2} & \textbf{76.4} & 62.4          & \textbf{59.7} & \textbf{31.9} & \textbf{52.4} \\ \bottomrule
    \end{tabular}
    \label{tab:exp-as-ip-dpp}
    \vspace{-0.5 em}
\end{table*}

\subsection{Hyperparameter Sensitivity}
\label{sup:sec:hs}

{\bf Effects of Additional Positive Pairs.}
Here, we investigate the impact of the number of positive pairs in our BNS approach on representation learning.
Notably, given any anchor image, we have at least one positive pair, \ie\ two augmented images from the anchor image.
Here, we conduct experiments by gradually increasing the number of additional positive pairs,
\ie\ $m$ in \eqref{eq:bns}.
Figure~\ref{fig:exp-as-bns-pair} illustrate the linear probing accuracy on CIFAR-10-LT,
where $m$ is gradually increased from $2$ to $10$.
As shown in Figure~\ref{fig:exp-as-bns-pair-all}, when the value of $m$ is small ($m \leq 6$), increasing $m$ leads to higher overall accuracy.
However, for larger values of $m$ ($m > 6$), further increases in $m$ negatively impact overall accuracy.
A similar trend is observed in the many-shot, medium-shot, and few-shot accuracies (see Figures~\ref{fig:exp-as-bns-pair-many}, \ref{fig:exp-as-bns-pair-medium}, and \ref{fig:exp-as-bns-pair-few} for details).
This occurs because a small value of $m$ enhances the model's ability to capture class-level semantics, whereas a large value introduces data imbalance within the feature spaces.
Based on these experimental results, the value of $m$, by default, is set to $6$ in the real long-tailed scenario.

\begin{figure*}  [!t]  
    \begin{subfigure}[t]{0.24\textwidth}
        \centering
        \includegraphics[width=\textwidth]{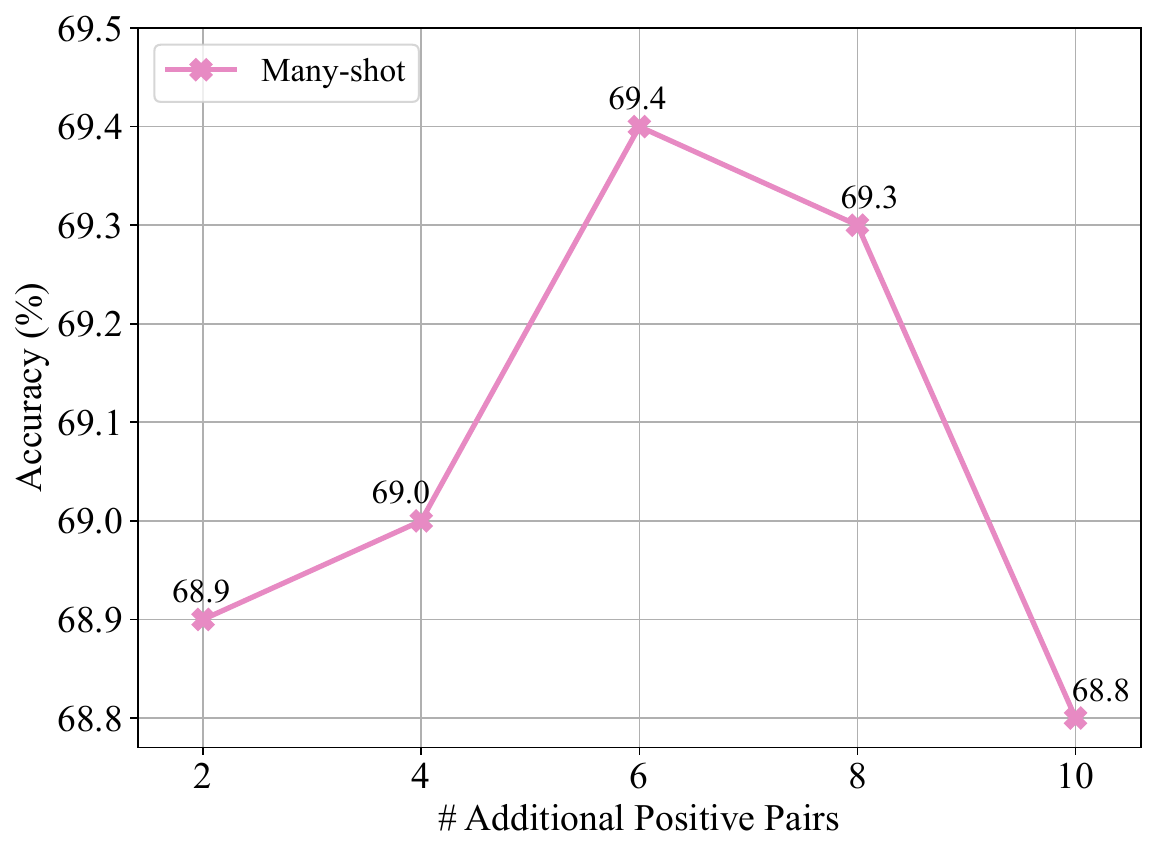}
        \caption{Many-shot}
        \label{fig:exp-as-bns-pair-many}
    \end{subfigure}
    \begin{subfigure}[t]{0.24\textwidth}
        \centering
        \includegraphics[width=\textwidth]{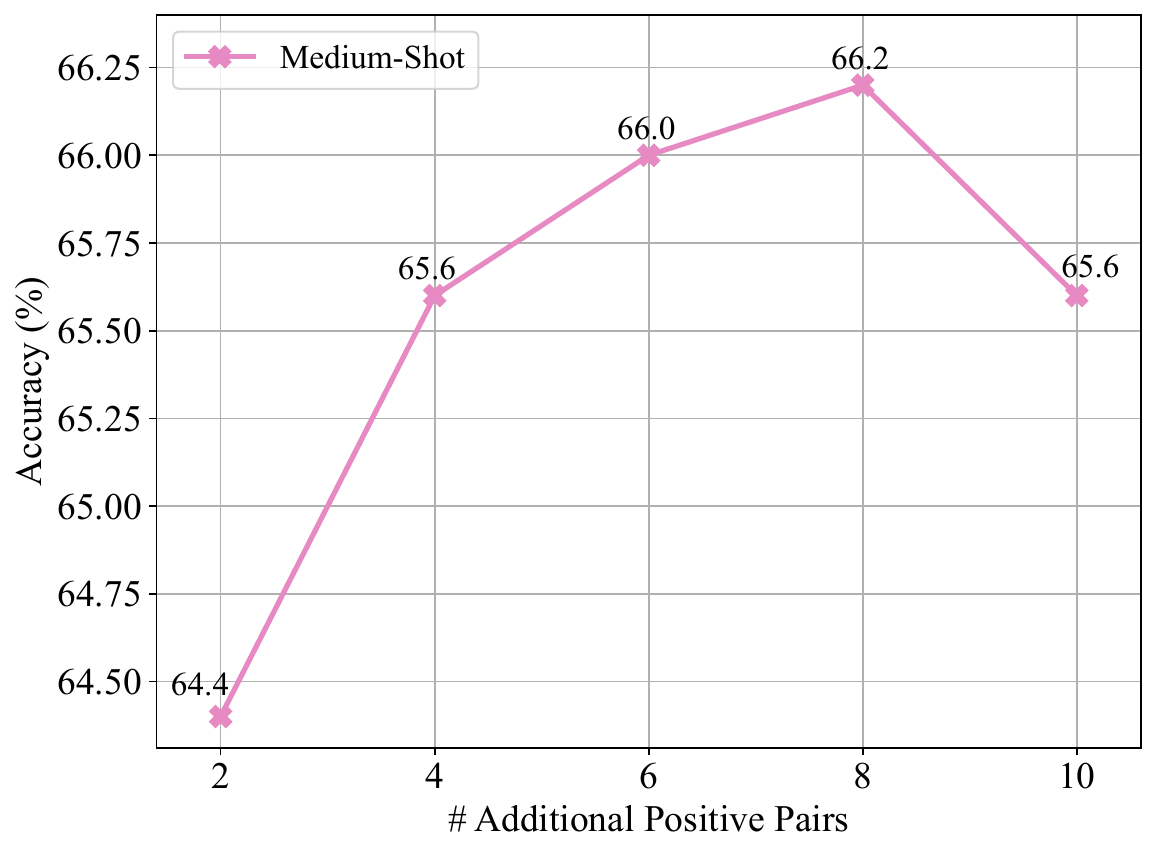}
        \caption{Medium-shot}
        \label{fig:exp-as-bns-pair-medium}
    \end{subfigure}
    \begin{subfigure}[t]{0.24\textwidth}
        \centering
        \includegraphics[width=\textwidth]{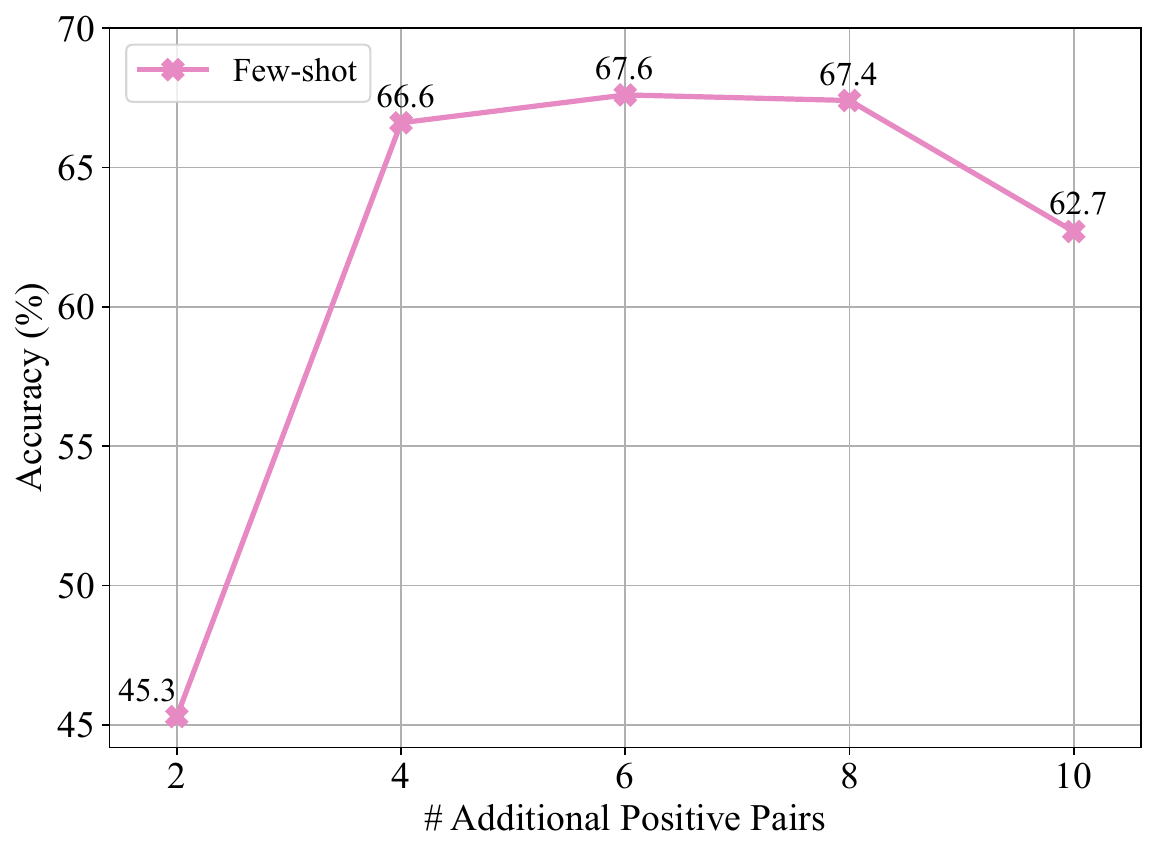}
        \caption{Few-shot}
        \label{fig:exp-as-bns-pair-few}
    \end{subfigure}
    \begin{subfigure}[t]{0.24\textwidth}
        \centering
        \includegraphics[width=\textwidth]{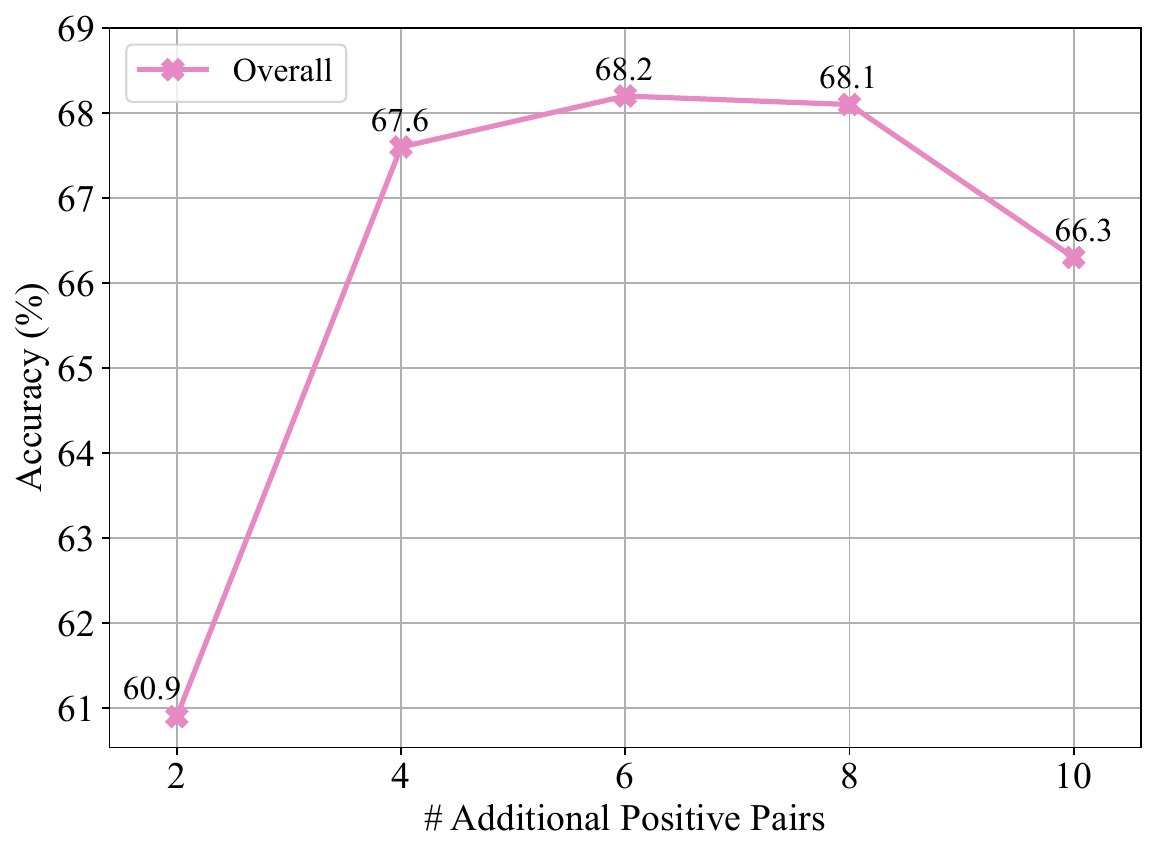}
        \caption{Overall}
        \label{fig:exp-as-bns-pair-all}
    \end{subfigure}
    \caption{
        Linear probing accuracy on CIFAR-10-LT using different numbers of additional positive pairs (\ie\ $m$).
    }
    \label{fig:exp-as-bns-pair}
    \vspace{-0.5 em}
\end{figure*}

\begin{figure*} [!t]  
    \begin{subfigure}[t]{0.24\textwidth}
        \centering
        \includegraphics[width=\textwidth]{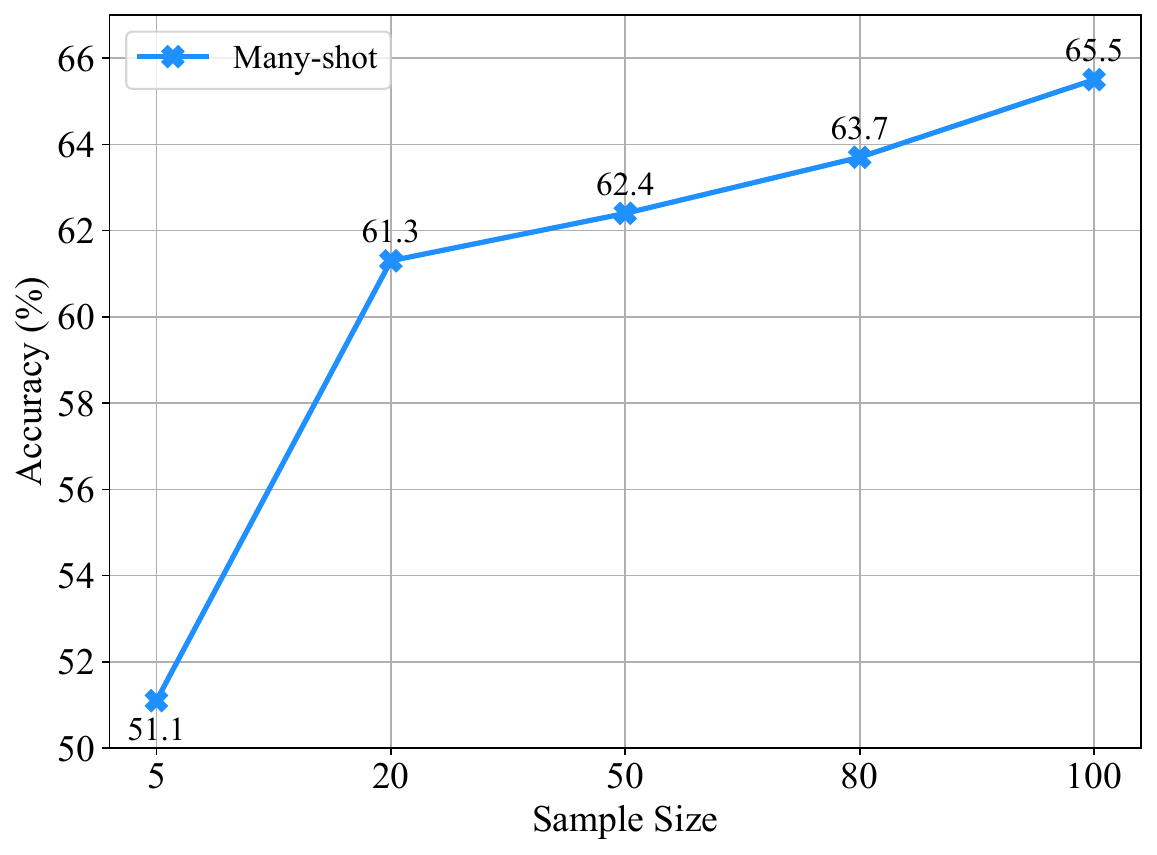}
        \caption{Many-shot}
        \label{fig:exp-as-ipdpp-sample-size-many}
    \end{subfigure}
    \begin{subfigure}[t]{0.24\textwidth}
        \centering
        \includegraphics[width=\textwidth]{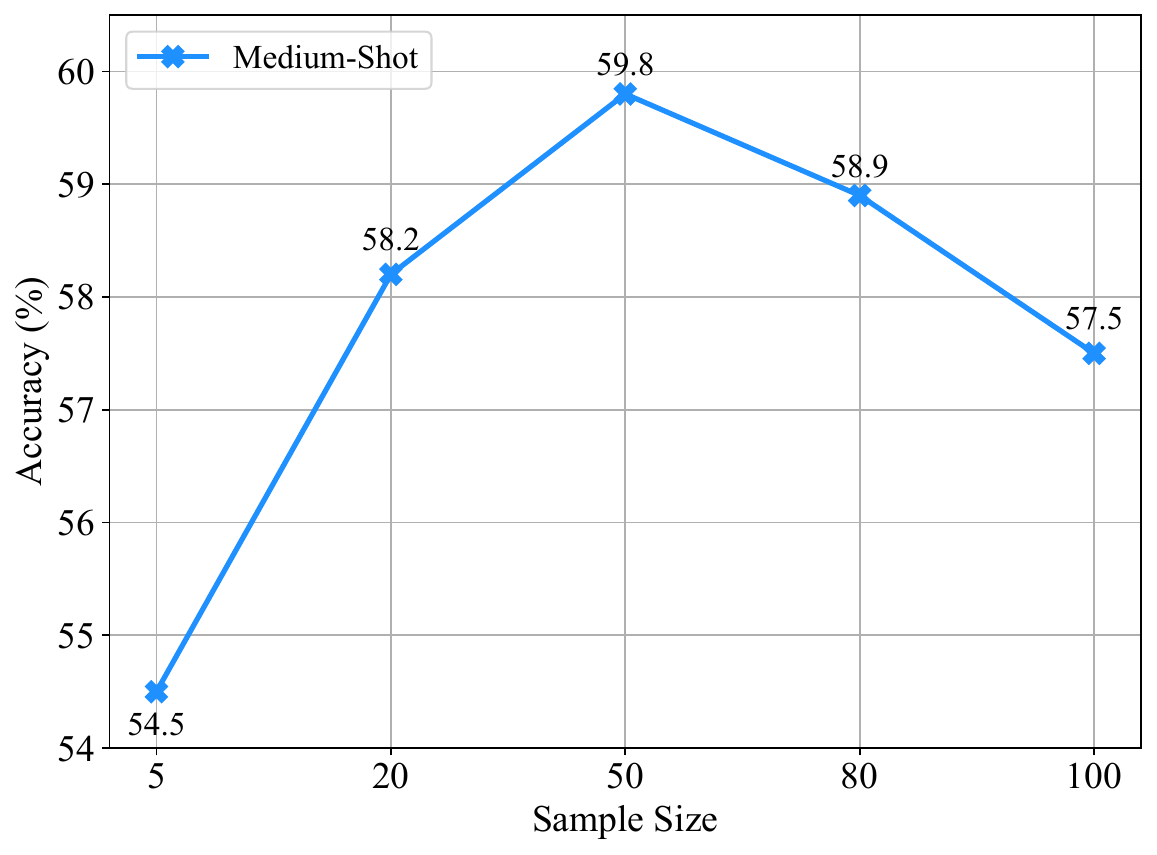}
        \caption{Medium-shot}
        \label{fig:exp-as-ipdpp-sample-size-medium}
    \end{subfigure}
    \begin{subfigure}[t]{0.24\textwidth}
        \centering
        \includegraphics[width=\textwidth]{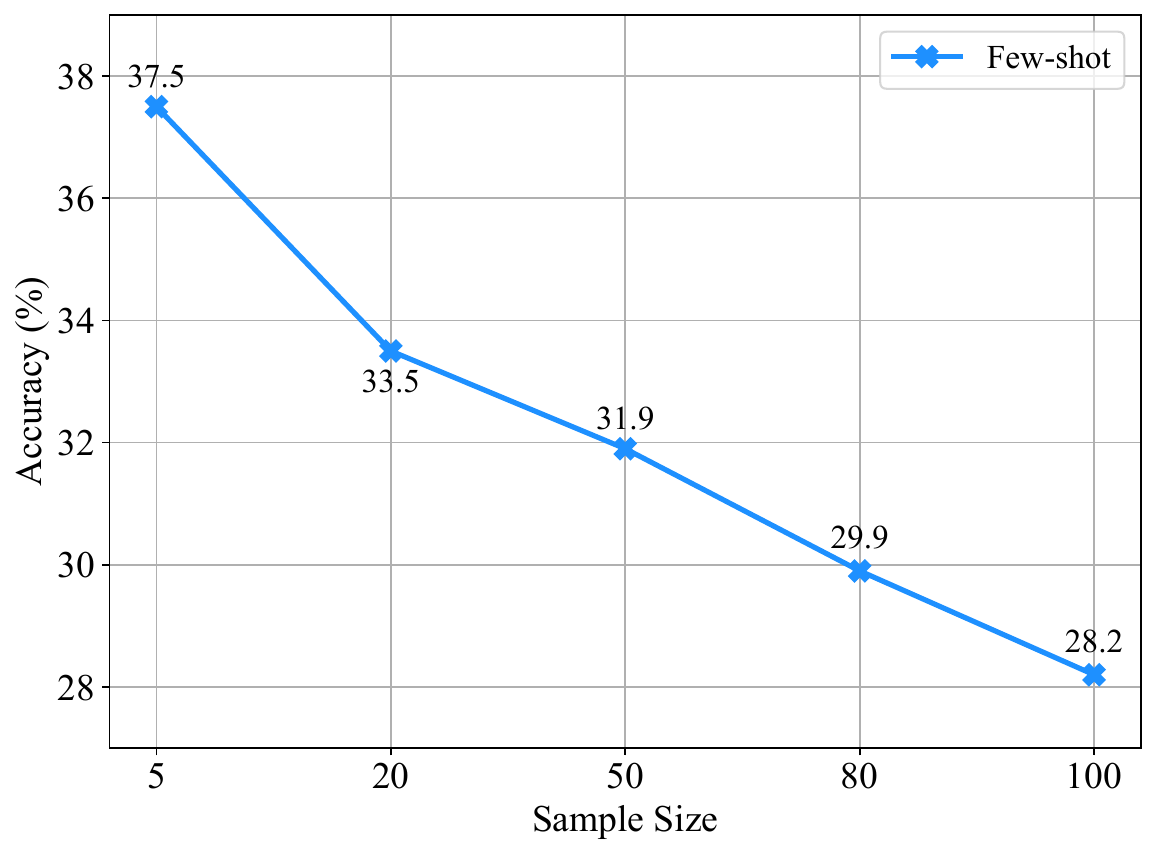}
        \caption{Few-shot}
        \label{fig:exp-as-ipdpp-sample-size-few}
    \end{subfigure}
    \begin{subfigure}[t]{0.24\textwidth}
        \centering
        \includegraphics[width=\textwidth]{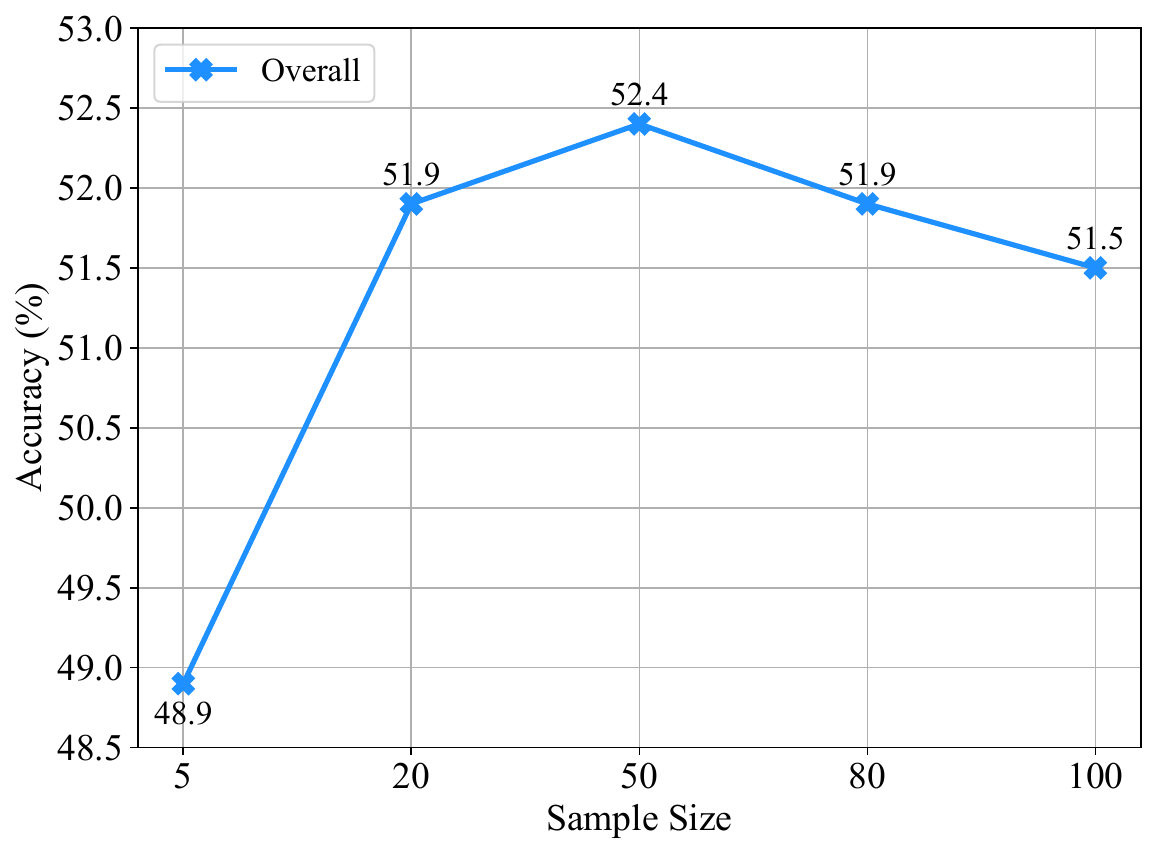}
        \caption{Overall}
        \label{fig:exp-as-ipdpp-sample-size-all}
    \end{subfigure}
    \caption{
        Impact of the fixed sample size (\ie\ $k$) on imbalanced classification, where many-shot, medium-shot, few-shot, and overall accuracies on CIFAR-100 are reported.
    }
    \label{fig:exp-as-ipdpp-sample-size}
    \vspace{-0.5 em}
\end{figure*}

{\bf Impact of Fixed Sample Size.}
We conduct experiments on CIFAR-100-LT to explore the impact of fixed sample size (\ie\ $k$) on imbalanced classification.
In the experiments, we gradually increase the value of $k$ from $5$ to $100$.
Note that the value of $k$ can be set to larger than the sample size of the minority class.
In this case, the actual sample size is set to $\min(k, N_{c})$, where $N_{c}$ is the sample size of the minority class.
Figure~\ref{fig:exp-as-ipdpp-sample-size} shows the experimental results.
It is observed that a large value of $k$ consistently benefits the many-shot accuracy (see Figure~\ref{fig:exp-as-ipdpp-sample-size-many}) but hurts the few-shot accuracy (see Figure~\ref{fig:exp-as-ipdpp-sample-size-few}).
This is because increasing the value of $k$ corresponds to adding additional majority samples, but it also enlarges the data imbalance between the majority and minority classes.
Figure~\ref{fig:exp-as-ipdpp-sample-size-all} demonstrates that the best trade-off is achieved when the value of $k$ is set to $50$, resulting in the highest overall accuracy of $52.4 \%$.
We emphasize that when $k$ is set to 50, the ratio $\frac{k}{N_{C}} = 10$, where $ N_{C} = 5 $ represents the sample size of the smallest class in CIFAR-100-LT.  
Thus, in practical scenarios, the value of $ k $ is typically set to $ 10 N_{C} $ by default.

{\bf Influence of Temperature Parameter.}
This section investigates the influence of the BNS temperature parameter $\tau$ on representation learning.
Specifically, we conduct experiments on CIFAR-10-LT with an imbalance factor of 100, varying temperature parameters from $0.1$ to $0.7$ in steps of $0.2$. 
The results, summarized in Table~\ref{tab:exp-hs-tau}, are reported as linear probing accuracy.

We observe that smaller values of $\tau$ (\eg\ $0.1$) improve performance on tail classes (Few-shot) but reduce accuracy on head classes (Many-shot). Conversely, larger values of $\tau$ increase Many-shot accuracy at the expense of Few-shot performance. This behavior occurs because larger $\tau$  values soften the contrastive loss, giving more weight to hard negative samples—typically dominated by head-class instances—thereby favoring head classes during representation learning.
The best trade-off between head and tail performance is achieved when $\tau$ is set to $0.3$, where the accuracy gap between Many-shot and Few-shot categories is minimized (\ie\ $1.8\%$).

\begin{table*}[!t]
    \scriptsize
    \centering
    \setlength\tabcolsep{5 pt}
    \caption{
        Performance results on CIFAR-10 under various temperature parameters, with the best results shown in bold
        }
    \begin{tabular}{@{}c|cccc@{}}
    \toprule
    Temperature Parameter & Many-shot & Medium-shot & Few-shot & Overall \\ \midrule
    $\tau$= 0.1           & 65.1      & \textbf{66.2}        & \textbf{73.0}       & 67.7    \\
    $\tau$= 0.3                 & 69.4      & 66.0          & 67.6     & \textbf{68.2}    \\
    $\tau$= 0.5                 & 72.3      & 64.7        & 57.4     & 66.3    \\
    $\tau$= 0.7                 & \textbf{73.1}      & 62.6        & 56.8     & 66.1    \\ \bottomrule
    \end{tabular}
    \label{tab:exp-hs-tau}
    \vspace{-0.5 em}
\end{table*}

\subsection{The Expected Subset Sample Size after a DPP}
\label{sup:sec:exp-dpp-sample-size}

This section supports Appendix~\ref{sup:proof_expected_size} by presenting the empirical results on the expected subset sample size after a Determinantal Point Process (DPP).
The Theorem~\ref{sup:thm:expected_size} states that for a ground set containing $N$ samples, its expected number of elements selected by a DPP under a uniform eigenvalue distribution assumption is approximately $N(1 - \ln 2) \approx 30.7 \%$. To empirically validate this claim, we ran the DPP over the CIFAR-10-LT dataset with its ground set size ranging from $N=100$ to $N=5000$.
%
Table~\ref{tab:exp-dpp-size} presents the empirical results (averaging over 10 trials). 
It is observed that the resulting subset sizes consistently align with the theoretical prediction, \ie\ $N(1 - \ln 2)$, across all tested values of $N$. 
These results suggest that the uniform eigenvalue assumption—while idealized—does not significantly distort the practical behavior of a DPP.

\begin{table*}[!t]
    \scriptsize
    \centering
    \setlength\tabcolsep{5 pt}
    \caption{
        Average sample size after DPP sampling across different ground set sizes
        }
    \begin{tabular}{@{}c|ccccc@{}}
    \toprule
    Ground Set Size ($N$)   & 100    & 500    & 1000   & 2000   & 5000   \\ \midrule
    Sample Size of Subset & 31.4   & 147.0  & 308.5  & 606.9  & 1548.4 \\
    Percentage of $N$       & 31.4\% & 29.4\% & 30.9\% & 30.3\% & 31.0\% \\ \bottomrule
    \end{tabular}
    \label{tab:exp-dpp-size}
    \vspace{-0.5 em}
\end{table*}

\section{Broader Impact} 
\label{sec:impact}
Our proposed methods address the challenging problems regarding the long-tailed data distributions, envisioned to make significant contributions to machine learning (ML) research and advance its applications across various critical domains.
\textit{First}, real-world data often exhibit long-tailed or imbalanced distributions, posing significant challenges for conventional ML algorithms. 
These issues are prevalent in critical applications such as social network spam detection, online transaction fraud detection, medical diagnosis, and others. 
Misclassifications in these domains can lead to catastrophic consequences, such as undetected fraudulent activities or delayed medical treatments. 
By effectively addressing the challenge of long-tail data distributions, our method enables state-of-the-art ML techniques to perform reliably in critical domains, fostering improved decision-making and delivering meaningful societal benefits.
\textit{Second}, our proposed two-stage learning framework comprises Balanced Negative Sampling (BNS) for representation learning and Information-Preservable Determinantal Point Process (IP-DPP) for rectifying biased classifiers.
These components are not only effective on their own but can also be seamlessly integrated with existing state-of-the-art methods. 
Our experiments reveal that integrating BNS or IP-DPP into existing methods significantly improves their performance on long-tailed data.
This broad adaptability enables a more inclusive application of ML techniques across diverse datasets and tasks, providing a practical tool for practitioners in various industries.

\end{document}